\newcommand{\RN}[1]{\textup{\uppercase\expandafter{\romannumeral#1}}}
\newcommand{\rstab}[3]{{#1}({#2})_{#3}|_{#3}}
\newcommand{\hcube}[2]{{\mathbb{Z}_{[{#1},{#2}]}^n}}
\newcommand{\ie}{i.e.\ }
\newcommand{\eg}{e.g.\ }
\newcommand{\aka}{a.k.a.\ }
\newcommand{\reals}{\mathbb{R}}
\newcommand{\integers}{\mathbb{Z}}
\newcommand{\acal}{\mathcal{A}}
\newcommand{\hcal}{\mathcal{H}}
\newcommand{\pcal}{\mathcal{P}}
\newcommand{\qcal}{\mathcal{Q}}
\newcommand{\rcal}{\mathcal{R}}
\newcommand{\transf}{{\mathsf{F}}}
\newcommand{\affine}{{\mathsf{AFF}}}
\newcommand{\orthmat}{{\mathsf{O}}}
\newcommand{\iso}{{\mathsf{ISO}}}
\newcommand{\tra}{{\mathsf{T}}}
\newcommand{\rot}{{\mathsf{R}}}
\newcommand{\lin}{{\mathsf{L}}}
\newcommand{\genlin}{{\mathsf{GL}}}
\newcommand{\perm}{{\mathsf P}}
\newcommand{\nega}{{\mathsf N}}
\newcommand{\dist}{{\rm d}}
\newcommand{\id}{{\rm id}}
\newcommand{\diag}{{\rm diag}}
\begin{document}

\title{A Group-Theoretic Approach to Computational Abstraction: \\Symmetry-Driven Hierarchical Clustering}

\author{%
\name Haizi Yu
\email haiziyu7@illinois.edu \\
\addr Coordinated Science Laboratory \\
Department of Computer Science \\
University of Illinois at Urbana-Champaign \\
Urbana, IL 61801, USA
\AND
\name Igor Mineyev
\email mineyev@illinois.edu \\
\addr Department of Mathematics \\
University of Illinois at Urbana-Champaign \\
Urbana, IL 61801, USA
\AND
\name Lav R. Varshney
\email varshney@illinois.edu \\
\addr Coordinated Science Laboratory \\
Department of Electrical and Computer Engineering \\
University of Illinois at Urbana-Champaign \\
Urbana, IL 61801, USA
}

\editor{}

\maketitle


\begin{abstract}
Abstraction plays a key role in concept learning and knowledge discovery; this paper is concerned with \emph{computational abstraction}.
In particular, we study the nature of abstraction through a group-theoretic approach, formalizing it as symmetry-driven---as opposed to data-driven---hierarchical clustering.
Thus, the resulting clustering framework is \emph{data-free}, \emph{feature-free}, \emph{similarity-free}, and \emph{globally hierarchical}---the four key features that distinguish it from common data clustering models such as $k$-means.
Beyond a theoretical foundation for abstraction, we also present a top-down and a bottom-up approach to establish an algorithmic foundation for practical abstraction-generating methods.
Lastly, via both a theoretical explanation and a real-world application, we illustrate that further coupling of our abstraction framework with statistics realizes Shannon's information lattice and even further, brings learning into the picture.
This not only presents one use case of our proposed computational abstraction, but also gives a first step towards a principled and cognitive way of automatic concept learning and knowledge discovery.
\end{abstract}

\begin{keywords}
Computational Abstraction, Partition/Clustering, Group, Symmetry, Lattice
\end{keywords}


\section{Introduction}
\label{sec:introduction}

Abstraction refers to the process of generalizing high-level concepts from specific instances by \emph{``forgetting the details''} \citep{Weinberg1968,GiunchigliaW1992,SaittaZ1998}.
This conceptual process is pervasive in humans, where more advanced concepts can be abstracted once a ``conceptual base'' is established \citep{Mandler2000}.
However, it remains mysterious how concepts are abstracted originally, with the source of abstraction generally attributed to innate biology \citep{Mandler2000,GomezL2004,Biederman1987}.

There are algorithms automating abstraction in various concept learning tasks \citep{SaittaZ2013,LecunBH2015,BredecheSZ2006,YuVGK2016};
yet, almost all require handcrafted priors---counterpart to innate biology \citep{Marcus2018,Dietterich2018}.
While a prior can take many forms, such as rules in automatic reasoning, distributions in Bayesian inference, features in classifiers, or architectures in neural networks, it is typically domain-specific \citep{RainaNK2006,YuJSD2007,KrupkaT2007}.
Sometimes, extensive hand design from domain knowledge is considered ``cheating'' if one hard codes all known abstractions as priors \citep{RamJ1994}.
This motivates us to consider a \emph{universal prior}---a prior that is generic and thus can be used across many subject domains.

In this paper, we study the nature of abstraction through a group-theoretic approach, adopting \emph{partition (clustering)} as its formulation and adopting \emph{symmetry} as its generating source.
Our abstraction framework is universal in two ways.
First, we consider the general question of conceptualizing a domain: a task-free preparation phase before specific problem solving \citep{Zucker2003}.
Second, we use symmetries in nature (groups in mathematics) as our universal prior.
The ultimate goal is to learn domain concepts/knowledge when our computational abstraction is attached to statistical learning.
This is contrary to much prior work at the intersection of group theory and learning \citep{Kondor2008} that often encodes domain-relevant symmetries in features or kernels rather than learning them as new findings.

The main contribution of this paper is to establish both a theoretical and an algorithmic foundation for our group-theoretic approach to computational abstraction:
the theory formalizes abstractions and abstraction hierarchy, whereas the algorithms give computational means to build them.
In the end, we exemplify a use case of our approach in music concept learning.
However, a more general and more comprehensive exposition of abstraction-based concept learning (requiring search in the abstraction hierarchy) is beyond the scope of this paper, and will be presented in subsequent work.

\subsection{Theoretical Foundation for Abstraction}
\label{sec:theoretical-foundation-for-abstraction}

Existing formalizations of abstraction form at least two camps.
One uses \emph{mathematical logic} where abstraction is explicitly constructed from abstraction operators and formal languages \citep{SaittaZ2013,Zucker2003,BundyGW1990};
another uses \emph{deep learning} where abstraction is hinted at by the layered architectures of neural networks \citep{LecunBH2015,Bengio2009}.
Their key properties---commonly known as rule-based (deductive) and data-driven (inductive)---are quite complimentary.
The former enjoys model interpretability, but requires handcrafting complicated logic with massive domain expertise;
the latter shifts the burden of model crafting to data, but makes the model less transparent.

This paper takes a new viewpoint, aiming for a middle ground between the two camps.
We formalize abstraction as \emph{symmetry-driven clustering} (as opposed to \emph{data clustering}).
By clustering, we forget within-cluster variations and discern only between-cluster distinctions \citep{BelaiJ1998,SheikhalishahiMT2016}, revealing the nature of abstraction as treating distinct instances the same \citep{Livingston1998}.
While data clustering is widely seen in machine learning, our clustering model differs from it in the following four ways.

\vspace{-0.05in}
\begin{itemize}
\setlength\itemsep{0in}
\item [1.] \textbf{Data-free.} Our clustering model considers partitioning an input space rather than data samples.
It is viewed more as conceptual clustering than data clustering like $k$-means \citep{MichalskiS1983,Fisher1987}: clusters are formed in a \emph{mechanism-driven}, not data-driven, fashion; and the mechanisms considered here are \emph{symmetries}.
So, the process is causal, and the results are interpreted as the consequences caused by the mechanims.
More importantly, a single \emph{clustering mechanism} (a symmetry) transfers to multiple domains, and a single \emph{clustering result} transfers to various datasets.
\item [2.] \textbf{Feature-free.} Our clustering model involves no feature engineering, so no domain expertise.
This particularly means three things.
First, no feature design for inputs: we directly deal with mathematical spaces, \eg vector spaces or manifolds.
Second, no feature/assignment function for cluster designation: this differs from algorithms that hand-design abstraction operators \citep{Zucker2003}, arithmetic descriptors \citep{YuVGK2016}, or decision-tree-like feature thresholding \citep{SheikhalishahiMT2016}.
Third, no meta-feature tuning such as pre-specifying the number of clusters.
\item [3.] \textbf{Similarity-free.} Our clustering model does not depend on a predefined notion of similarity.
This differs from most clustering algorithms where much effort has been expended in defining ``closeness'' \citep{RamanV2019,Rand1971}.
Instead, pairwise similarity is replaced by an \emph{equivalence relation} induced from symmetry.
Note that the definitions of certain symmetries may require additional structure of the input space, \eg topology or metric, but this is not used as a direct measurement for inverse similarity.
Therefore, points that are far apart (in terms of metric distance) in a metric space can be grouped together (in terms of equivalence) under certain symmetries, resulting in a ``disconnected'' cluster comprising disconnected regions in the input space.
This is not likely to happen for algorithms such as $k$-means.
\end{itemize}
\vspace{-0.05in}

It is noteworthy that being feature-free and similarity-free makes a clustering model \emph{universal} \citep{RamanV2019}, becoming more of a \emph{science} than an \emph{art} \citep{VonWG2012}.
Besides the above three distinguishing features, our clustering model exhibits one more distinction regarding hierarchical clustering for multi-level abstractions.

\vspace{-0.05in}
\begin{itemize}
\setlength\itemsep{0in}
\item [4.] \textbf{Global hierarchy.} Like many hierarchical clusterings \citep{JainD1988,RokachM2005}, our clustering model outputs a family of multi-level partitions and a hierarchy showing their interrelations.
However, here we have a global hierarchy formalized as a \emph{partition (semi)lattice}, which is generated from another hierarchy of symmetries represented by a subgroup lattice.
This is in contrast with greedy hierarchical clusterings such as agglomerative/divisive clustering \citep{Cormack1971,KaufmanR2009} or topological clustering via persistent homology \citep{Oudot2015}.
These greedy algorithms lose many possibilities for clusterings since the hierarchy is constructed by local merges/splits made in a one-directional procedure, \eg growing a dendrogram or a filtration.
In particular, greedy hierarchical clustering is oft-criticized since it is hard to recover from bad clusterings in early stages of construction \citep{Oudot2015}.
Lastly, our global hierarchy is represented by a directed acyclic graph rather than tree-like charts such as dendrograms or barcodes.
\end{itemize}
\vspace{-0.05in}

\subsection{Algorithmic Foundation for Abstraction}
\label{sec:algorithmic-foundation-for-abstraction}

To go from mathematically formalized abstractions to computational abstractions, we introduce two general principles---a top-down approach and a bottom-up approach---to algorithmically generate hierarchical abstractions from hierarchical symmetries enumerated in a systematic way.
The two principles leverage different dualities developed in the formalism, and lead to practical algorithms that realize the abstraction generating process.

\vspace{-0.05in}
\begin{itemize}
\setlength\itemsep{0in}
\item [1.] \textbf{A top-down approach.}
Starting from all possible symmetries, we gradually restrict our attention to certain types of symmetries in a principled way.
The restrictions are initially selected to attain practical abstraction-generating algorithms, and such selections can be arbitrarily made in general.
However, we find a complete identification of all symmetries induced from affine transformations, where we explicitly give a full parametrization of affine symmetries.
This complete identification not only decomposes a large symmetry-enumeration problem into smaller enumeration subproblems, but also suggests ways of adding restrictions to obtain desired symmetries.
This approach from general symmetries to more restrictive ones corresponds to top-down paths in the symmetry hierarchy, which explains the name.

\item [2.] \textbf{A bottom-up approach.}
Starting from a set of atomic symmetries (the seeds), we generate all symmetries that are seeded from the given set.
Based on a strong duality result developed in the formalism, we introduce an induction algorithm which computes a hierarchical family of abstractions without explicitly enumerating the corresponding symmetries.
This induction algorithm allowing abstractions to be made from earlier abstractions, is more efficient than generating all abstractions from scratch (\ie from symmetries).
It is good for quickly acquiring an abstraction family from user-customized seeds, after which one can further fine tune the generating set.
This approach from atomic symmetries to more complicated ones corresponds to bottom-up paths in the symmetry hierarchy, which explains the name.
\end{itemize}
\vspace{-0.05in}

\noindent
The main technical problem in this paper is: given an input space and a class of symmetries, compute a hierarchical family of symmetry-driven abstractions of the input space.
To motivate the problem, Section~\ref{sec:abstraction-informal-description} informally describes abstraction to help picture the intuitions that will drive the following technical sections.
To formalize the problem, Section~\ref{sec:abstraction-math-formalism} sets up the theoretical foundation.
It formalizes abstraction, symmetry, and hierarchy, and casts symmetry-driven abstractions in a primal-dual viewpoint.
To solve the problem, Sections~\ref{sec:the-top-down-approach-sp-subgroups} and \ref{sec:the-bottom-up-approach-gen-set} set up the algorithmic foundation.
Regarding the top-down and the bottom-up approaches respectively, the symmetry class in the problem takes the form of all subgroups of a group (usually up to an equivalence relation, \eg isomorphism) and subgroups generated by all subsets of a generating set.
Section~\ref{sec:restrict-to-finite-subspaces} presents implementation tricks and cautions in a more general setting where abstractions are restricted to finite subspaces of a possibly infinite input space.
To show a use case of the problem, Section~\ref{sec:discussion-to-info-lattice-and-learning} discusses connections to Shannon's information lattice---a special case under our abstraction formalism---and a music application that realizes learning in an information lattice to learn music concepts.


\section{Abstraction: Informal Description}
\label{sec:abstraction-informal-description}

We informally discuss \emph{abstraction} to provide initial intuitions.
To do so, we consider toy examples and examples from everyday life to illustrate abstraction in specific contexts.
Nevertheless, the intuitions here cover everything about abstraction that we will generalize, formalize, and further operationalize (computationally) in the following sections.
Therefore, this section, despite its informality, establishes an agenda for the remainder of the paper.

\subsection{Nature of Abstraction}
\label{sec:nature-of-abstraction}

Abstraction is everywhere in our daily life (even though we may not realize it all the time).
Examples of people making abstractions can be as simple as observing ourselves through social categories such as race or gender \citep{MacraeB2000};
or as complicated as a systematic taxonomy of a subject domain, such as modern taxonomy of animals and Western classification of music chords.
While the contexts might change, there are common intuitions for viewing the nature of abstraction.

\begin{figure}[t]
\begin{center}
\includegraphics[width=0.72\columnwidth]{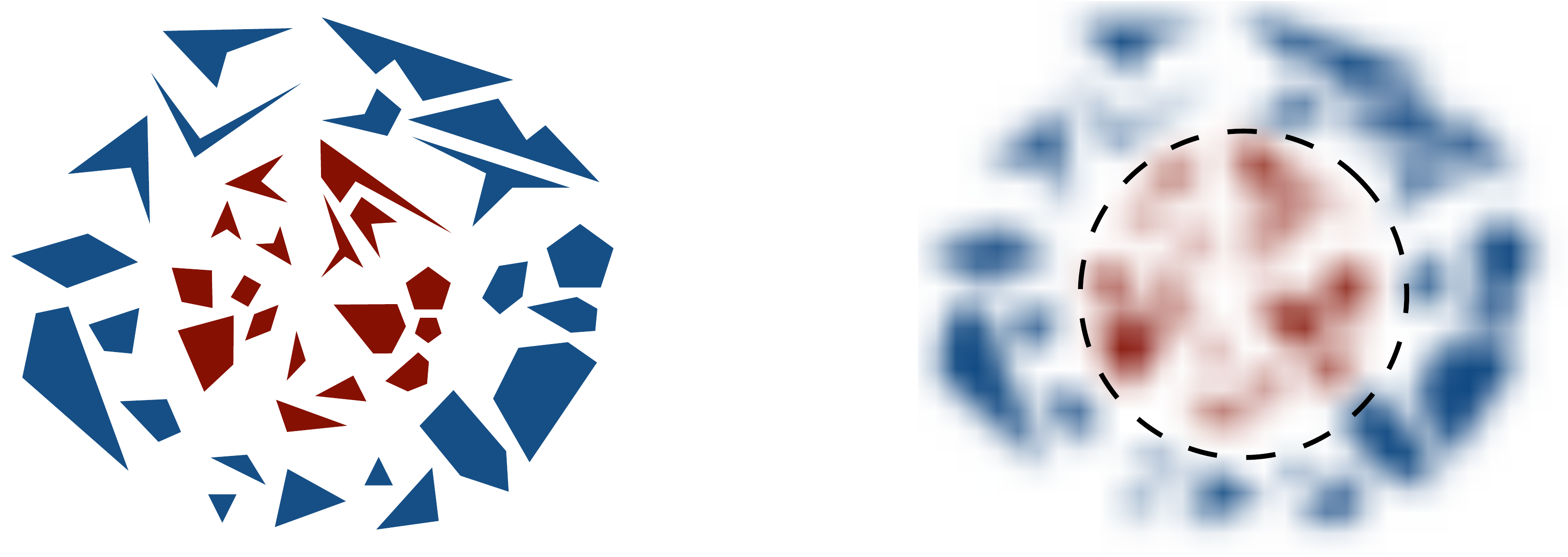}
\end{center}
\caption{By clustering, an abstraction of polygons presents a coarse-grained view of the given polygons, where we forget all shape-related information and only discern their colors. This results in two clusters: \{red polygons, blue polygons\}.}
\label{fig:abstraction-nature}
\end{figure}

\paragraph{Abstraction as partition (clustering).}
One way to view an abstraction is through a \emph{partition}, or \emph{clustering}, and then \emph{forgetting} within-cluster variations.
For instance, we cluster people into \{men, women\}, forgetting the difference between John and David, Mary and Rachel;
we cluster vertibrates into \{fish, amphibians, reptiles, birds, mammals\}, forgetting the difference between penguins and eagles, dogs and bats;
we cluster music triads into \{major, minor, augmented, diminished, $\ldots$\}, forgetting the difference between CEG and FAC, CE$\flat$G and ACE.

\paragraph{Abstraction as equivalence relation.}
Another (equivalent) way to view an abstraction is through an \emph{equivalence relation}, where the earlier idea of ``clustering and forgetting'' is reinterpreted as identifying things (in the same cluster) indistinguishably.
For instance, identifying John and David (as men); identifying dogs and bats (as mammals); identifying CEG and FAC (as major triads).
This view is consistent with mathematics, since a partition of a set is basically the same thing as an equivalence relation on that set.
This view is also consistent with psychology, stating the nature of abstraction is to treat instances as if they were qualitatively identical, although in fact they are not \citep{Livingston1998}.

\paragraph{Classification is more than needed.}
This idea of clustering is pervasive in various definitions of abstraction, but more often termed as classification (or categorization, taxonomy).
While clustering and classification (likewise clusters and classes) are more or less synonyms in everyday life, they are clearly different in machine learning.
The former generally falls under unsupervised learning, whereas the latter falls under supervised learning.
The difference is merely whether or not there is a label for each cluster.
Labels are important in supervised learning, since a perfect binary classifier with a 100\% accuracy is clearly different from a bad one with a 0\% accuracy.
Yet in light of clustering, the two classifiers are identical: the ``bad'' one, for instance, simply calls all men as women and all women as men, but still accurately captures the concept of gender.
So, to make this abstraction, all we need are two clusters of people;
yet, labeling the two by men and women is not essential.

In summary, treating an abstraction as clustering (or a partition, an equivalence relation) presents a \emph{coarse-grained view} (Figure~\ref{fig:abstraction-nature}) of the observed instances, where we deliberately forget within-cluster variations by collapsing equivalent instances into one cluster and discerning only between-cluster variations.
The intuitions here are formalized in Section~\ref{sec:abstraction-as-partition}.

\subsection{Abstraction Hierarchy}
\label{sec:abstraction-hierarchy}

More than one abstraction exists for the same set of instances, simply because there are many different ways of partitioning the set and there are many different ways of identifying things to be equivalent.
For instance, considering \{bats, dogs, eagles\}, we can identify bats and dogs indistinguishably as mammals, but distinguish them from eagles which are birds; we can also identify bats and eagles indistinguishably as flying animals, but now distinguish them from dogs since dogs cannot fly.
Intuitively, how different abstractions compare with one another induces the notion of a hierarchy.

\begin{figure}[t]
\begin{center}
\includegraphics[width=0.92\columnwidth]{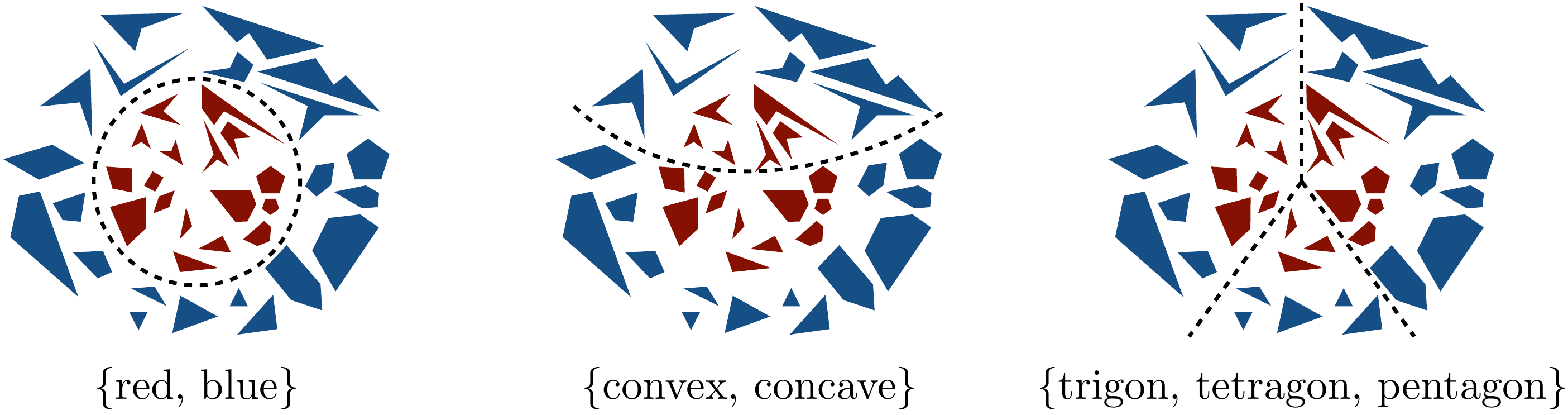}
\end{center}
\caption{Different and incomparable abstractions of the same set of polygons.}
\label{fig:abstraction-incomp}
\end{figure}

\paragraph{Linear hierarchy: total order.}
Perhaps the simplest is a linear hierarchy, where ``later'' abstractions are made from their immediate ``precursors'' by continuously merging clusters.
For instance, we cluster animals into \{fish, birds, mammals, annelids, mollusks, $\ldots$\}, and further cluster these abstracted terms into \{vertebrates, invertebrates\}.
A linear hierarchy is essentially a totally (\aka linearly) ordered set;
it has a clear notion of \emph{abstraction level}, \eg in animal taxonomy: species $\rightarrow$ genus $\rightarrow$ family $\rightarrow$ order $\rightarrow$ class $\rightarrow$ phylum $\rightarrow$ kingdom.
Notably, the famous agglomerative hierarchical clustering is an example of linear hierarchy, producing a linear sequence of coarser and coarser partitions.

\paragraph{More than linear: partial order.}
Nevertheless, an abstraction hierarchy can be more complicated than simply linear due to various clustering possibilities that are incomparable.
For instance, take the same set of polygons from Figure~\ref{fig:abstraction-nature}.
Besides color, consider two other ways of partitioning the same set (Figure~\ref{fig:abstraction-incomp}).
These three abstractions are made from three incomparable criteria (color, convexity, number of sides), and the resulting partitions are also incomparable (none of them is made from merging clusters from either of the others).
Hence, instead of a linear hierarchy, a family of abstractions forms a partially ordered set in general, and the notion of abstraction level is only relative due to incomparability.
This is one way in which this paper extends traditional (linear) hierarchical clustering.

\paragraph{More than partial order: lattice.}
Given any two abstractions as two coarse-grained views of the same set, there is a clear notion of their so-called \emph{finest common coarsening} and oppositely, their \emph{coarsest common refinement}.
We exemplify the latter in the foreground of Figure~\ref{fig:abstraction-hierarchy}, where a color-based clustering and a shape-based clustering uniquely determines a new clustering based on both color and shape.
The initial two abstractions of polygons, together with their coarsest common refinement, are part of a bigger hierarchy (Figure~\ref{fig:abstraction-hierarchy}), which is a special type of partially ordered set called a \emph{lattice}.
Intuitions developed thus far about abstraction hierarchy motivate their formal descriptions in Section~\ref{sec:abstraction-universe-as-partition-lattice}.

\begin{figure}[t]
\begin{center}
\includegraphics[width=0.95\columnwidth]{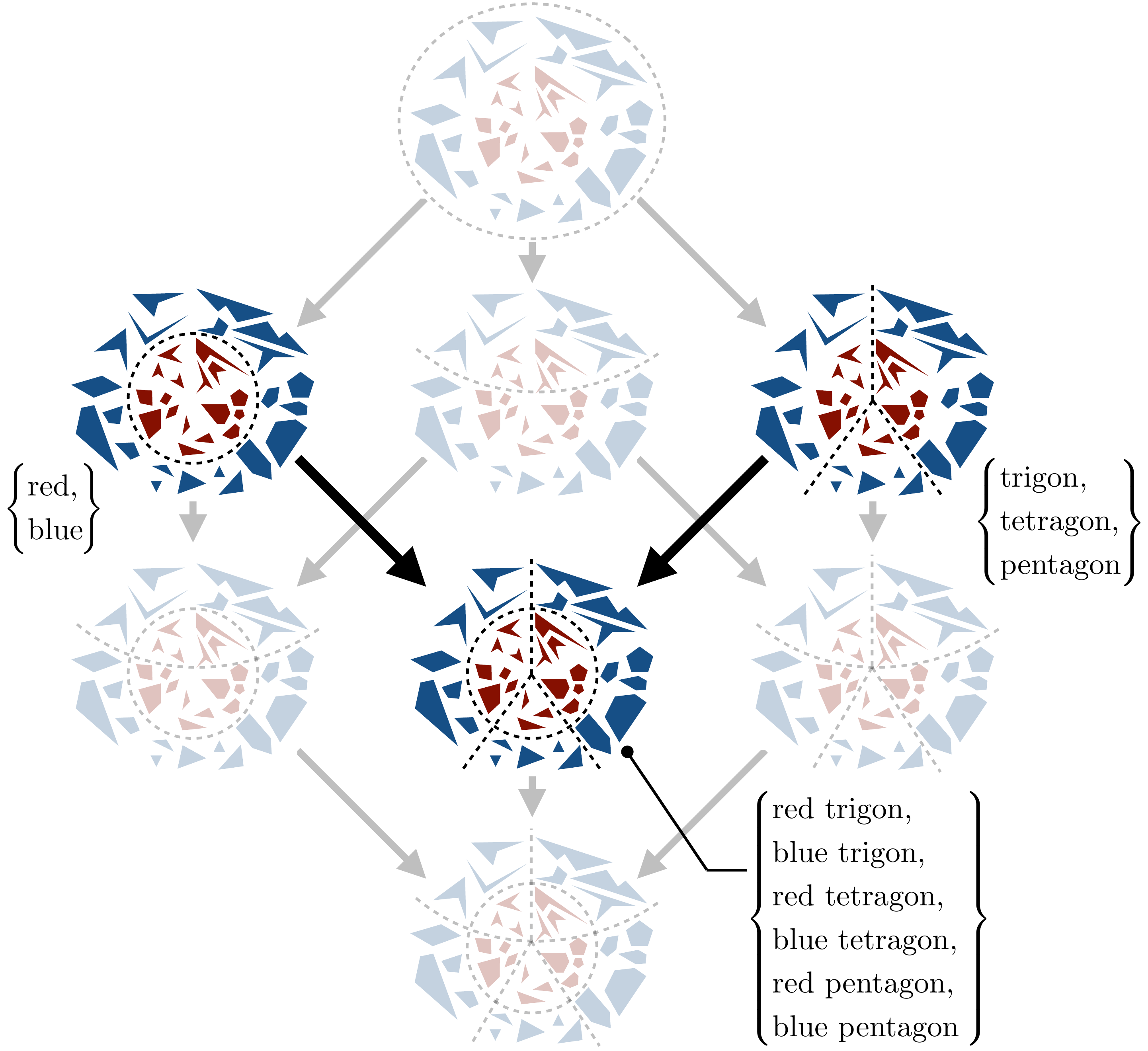}
\end{center}
\caption{Two abstractions and their coarsest common refinement (foreground) as part of a global lattice structure (foreground \& background)---a special type of partially ordered set.
This whole lattice is pictured as a directed acyclic graph, where every vertex is a distinct abstraction of the given polygons and every edge $\acal \to \acal'$ denotes $\acal$ is coarser than $\acal'$.}
\label{fig:abstraction-hierarchy}
\end{figure}

\subsection{Abstraction Mechanisms}
\label{sec:abstraction-mechanisms}

In many cases, an abstraction is made from an explicable reason---a \emph{mechanism} that drives the resulting abstraction.
For instance, color is the underlying mechanism that drives the abstraction in Figure~\ref{fig:abstraction-nature}.
An abstraction mechanism also naturally serves as a \emph{prior} with respect to its induced clustering process.
In this paper, we focus on abstractions made from clear mechanisms (rather than random clustering), and furthermore, from mechanisms that can be used as universal priors for multiple subject domains (Figure~\ref{fig:abstraction-mechanisms}).

\begin{figure}[t]
\begin{center}
\includegraphics[width=0.68\columnwidth]{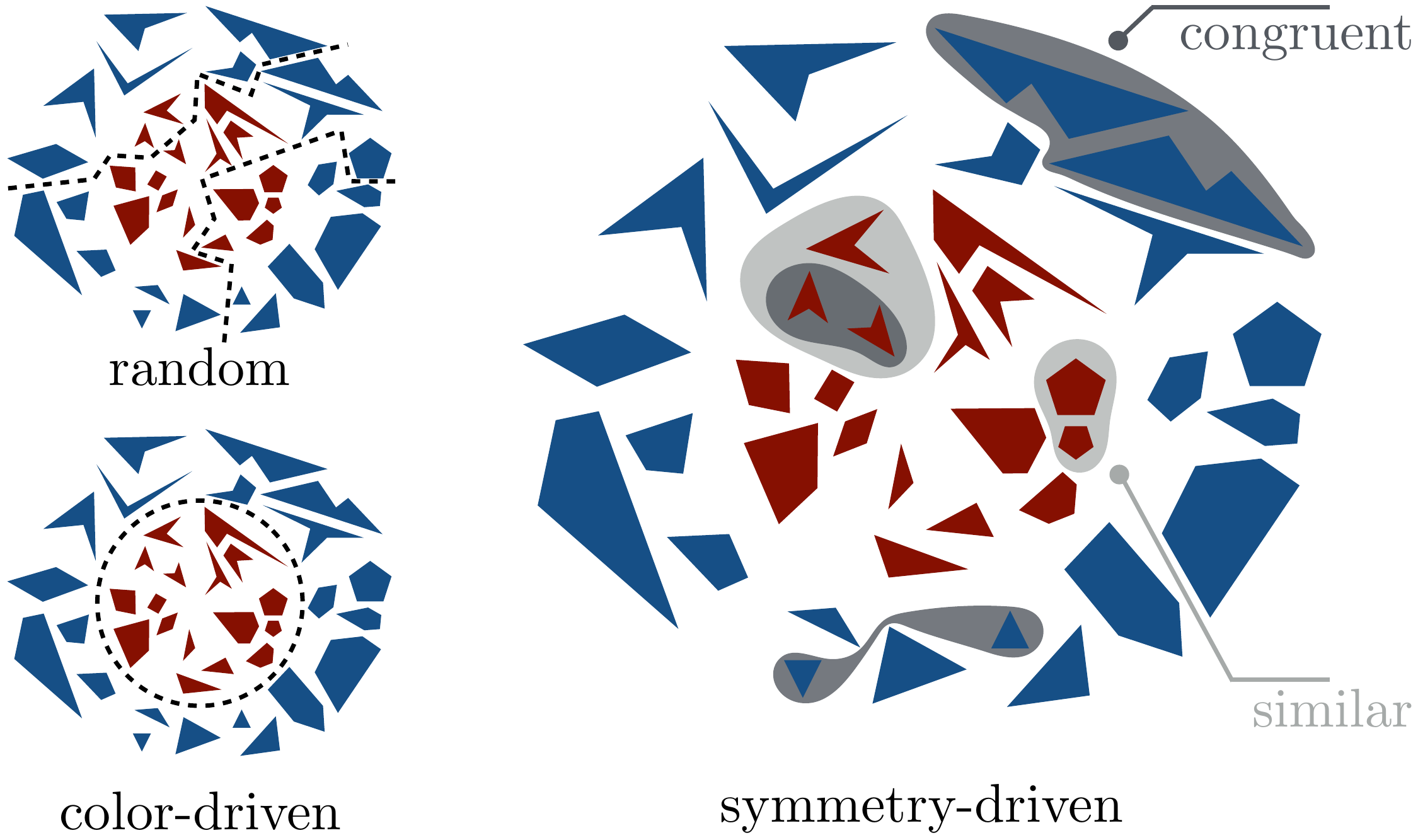}
\end{center}
\caption{Various kinds of abstraction mechanisms.}
\label{fig:abstraction-mechanisms}
\end{figure}

\paragraph{Mechanism-driven.}
In three ways, we prefer mechanism-driven abstractions to random abstractions.
First, for the sake of \emph{explainability}, the recognition of a mechanism allows us to understand the abstraction (\eg from \{men, women\}, we recognize gender is under consideration).
Second, for the sake of \emph{generalizability}, the presence of a mechanism may allow us to transfer the same mechanism to a different set (\eg from \{men, women\} to \{roosters, hens\}, \{bulls, cows\}).
Third, for the sake of \emph{computability}, restriction to a certain category of mechanisms allows us to practically generate all abstractions under the chosen category rather than to aimlessly consider all possible partitions (of a set), whose number grows faster than exponential (with respect to the size of the set).

\paragraph{Symmetry-driven.}
Mechanisms are of various kinds.
In data-driven abstractions (\ie data clustering), a mechanism is a pre-defined notion of data proximity.
In category-driven abstractions (\eg taxonomy of animals), a mechanism is usually a defining attribute of the instances themselves (ontology).
Hence, while an attribute like number of sides is a valid mechanism for abstracting polygons, it is not so when trying to abstract animals.
Rather than these two kinds of mechanisms which are often ad hoc and domain-specific, we study symmetry-driven abstractions, drawing general symmetries from nature.
Again, take the polygons in Figure~\ref{fig:abstraction-nature} as an example: we can cluster all congruent polygons together to get an abstraction admitting translation and rotation invariances; we can cluster all similar polygons together to get a coarser abstraction admitting an additional scaling invariance (Figure~\ref{fig:abstraction-mechanisms}).
These intuitions about being symmetry-driven are formalized in Section~\ref{sec:symmetry-generated-abstraction}.


\newpage
\section{Abstraction: Mathematical Formalism}
\label{sec:abstraction-math-formalism}

We formalize an abstraction process on an underlying space as a clustering problem.
In this process, elements of the space are grouped into clusters, abstracting away within-cluster variations.
The outcome is a coarse-grained abstraction space whose elements are the clusters.
Clustering is performed based on certain symmetries such that the resulting clusters are invariant with respect to the symmetries.

\subsection{Abstraction as Partition (Clustering)}
\label{sec:abstraction-as-partition}

We formalize an \emph{abstraction} of a set as a partition of the set, which is a mathematical representation of the outcome of a clustering process.
Throughout this paper, we reserve $X$ to exclusively denote a set which we make abstractions of.
The set $X$ can be as intangible as a mathematical space, \eg $\reals^n$, $\integers^n$, a general manifold;
or as concrete as a collection of items, \eg \{rat, ox, tiger, rabbit, dragon, snake, horse, sheep, monkey, rooster, dog, pig\}.

\paragraph{Preliminaries (Appendix~\ref{app:abstraction-as-partition}):}
partition of a set ($\pcal$), partition cell ($P \in \pcal$); equivalence relation on a set ($\sim$), quotient ($X{/\!\!\sim}$).

\begin{remark}
An abstraction is a partition, and vice versa.
The two terms refer to the same thing, with the only nuance being that one is used less formally, whereas the other is used in the mathematical language.
When used as a single noun, these two terms are interchangeable in this paper.
\end{remark}

\begin{remark}
A partition is not an equivalence relation.
The two terms do not refer to the same thing (one is a set, the other is a binary relation), but convey equivalent ideas since they induce each other bijectively (Appendix~\ref{app:abstraction-as-partition}).
In this paper, we use an equivalence relation to explain a partition: elements of a set $X$ are put in the same cell because they are equivalent.
Based on this reason, abstracting the set $X$ is about treating equivalent elements as the same, \ie collapsing equivalent elements in $X$ into a single entity (namely, an equivalence class or a cell) where collapsing is formalized by taking the quotient.
\end{remark}

\subsection{Abstraction Universe as Partition Lattice (Hierarchical Clustering)}
\label{sec:abstraction-universe-as-partition-lattice}

A set $X$ can have multiple partitions, provided that $|X| > 1$.
The number of all possible partitions of a set $X$ is called the \emph{Bell number} $B_{|X|}$.
Bell numbers grow extremely fast with the size of the set: starting from $B_0 = B_1 = 1$, the first few Bell numbers are:
\begin{align*}
1, 1, 2, 5, 15, 52, 203, 877, 4140, 21147, 115975, 678570, 4213597, 27644437, \ldots
\end{align*}
We use $\mathfrak{P}_X^{*}$ to denote the family of all partitions of a set $X$, so $|\mathfrak{P}_X^{*}| = B_{|X|}$.
We can compare partitions of a set in two ways.
One simple way is to compare by size: given two partitions $\pcal, \qcal$ of a set, we say that $\pcal$ is no larger than (resp.\ no smaller than) $\qcal$ if $|\pcal| \leq |\qcal|$ (resp.\ $|\pcal| \geq |\qcal|$).
Another way of comparison considers the structure of partitions via a \emph{partial order} on $\mathfrak{P}_X^{*}$.
The partial order further yields a \emph{partition lattice}, a hierarchical representation of a family of partitions.

\paragraph{Preliminaries (Appendix~\ref{app:abstraction-universe-as-partition-lattice}):}
partial order, poset; lattice, join ($\vee$), meet ($\wedge$), sublattice, join-semilattice, meet-semilattice, bounded lattice.

\begin{definition}\label{def:higher-level-abstraction}
Let $\pcal$ and $\qcal$ be two abstractions of a set $X$. We say that $\pcal$ is at a higher level than $\qcal$, denoted $\pcal \preceq \qcal$, if as partitions, $\pcal$ is coarser than $\qcal$. For ease of description, we expand the vocabulary for this definition, so the following are all equivalent:
\begin{itemize}
\setlength\itemsep{0in}
\item [1.] $\pcal \preceq \qcal$, or equivalently $\qcal \succeq \pcal$ (Figure~\ref{fig:abstraction-level}).
\item [2.] As abstractions, $\pcal$ is at a higher level than $\qcal$ (or $\pcal$ is an abstraction of $\qcal$).
\item [3.] As partitions, $\pcal$ is coarser than $\qcal$ (or $\pcal$ is a coarsening of $\qcal$).
\item [4.] As abstractions, $\qcal$ is at a lower level than $\pcal$ (or $\qcal$ is a realization of $\pcal$).
\item [5.] As partitions, $\qcal$ is finer than $\pcal$ (or $\qcal$ is a refinement of $\pcal$).
\item [6.] Any $x,x' \in X$ in the same cell in $\qcal$ are also in the same cell in $\pcal$.
\item [7.] Any $x,x' \in X$ in different cells in $\pcal$ are also in different cells in $\qcal$.
\end{itemize}
\begin{figure}[h!]
\begin{center}
\includegraphics[width=0.65\columnwidth]{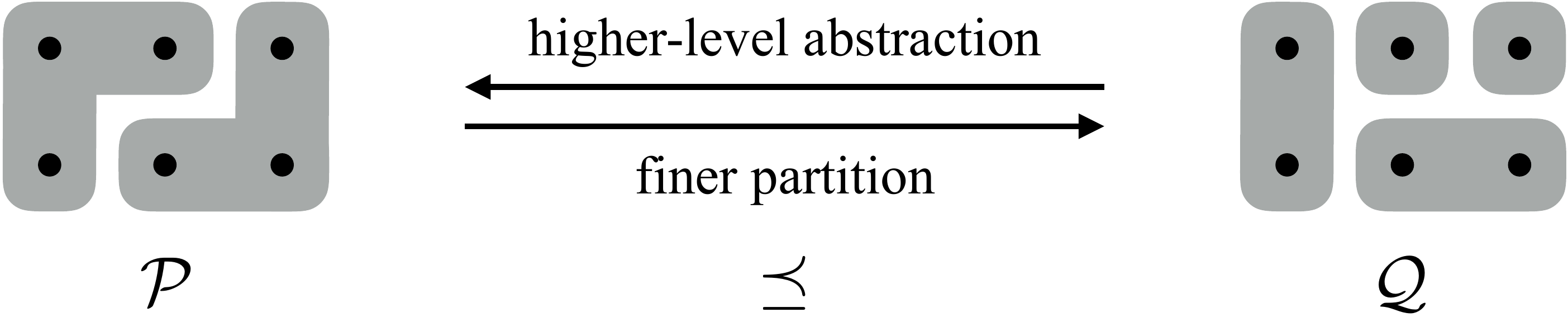}
\end{center}
\caption{The partial order $\preceq$ compares the levels of abstractions.}
\label{fig:abstraction-level}
\end{figure}
\end{definition}

It is known that the binary relation ``coarser than'' on the family $\mathfrak{P}_X^{*}$ of all partitions of a set $X$ is a partial order, so is the binary relation ``at a higher level than'' on abstractions.
Given two partitions $\pcal, \qcal$ of a set, we can have $\pcal \preceq \qcal$, $\qcal \preceq \pcal$, or they are incomparable.
Further, $(\mathfrak{P}_X^{*}, \preceq)$ is a bounded lattice, in which the greatest element is the finest partition $\{\{x\} \mid x \in X\}$ and the least element is the coarsest partition $\{X\}$.
For any pair of partitions $\pcal, \qcal \in \mathfrak{P}_X^{*}$, their join $\pcal \vee \qcal$ is the coarsest common refinement of $\pcal$ and $\qcal$; their meet $\pcal \wedge \qcal$ is the finest common coarsening of $\pcal$ and $\qcal$ (Figure~\ref{fig:partn-join-and-meet}).
\begin{figure}[h!]
\begin{center}
\includegraphics[width=0.65\columnwidth]{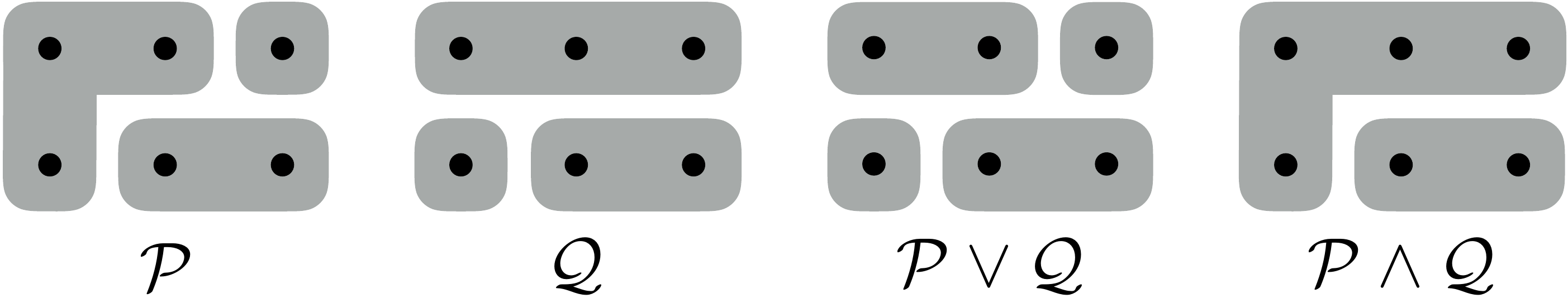}
\end{center}
\caption{Two abstractions $\pcal, \qcal$ and their join $\pcal \vee \qcal$ and meet $\pcal \wedge \qcal$.}
\label{fig:partn-join-and-meet}
\end{figure}

\begin{definition}
An abstraction universe for a set $X$ is a sublattice of $\mathfrak{P}_X^{*}$, or a partition (sub)lattice in short.
In particular, we call the partition lattice $\mathfrak{P}_X^{*}$ itself the complete abstraction universe for $X$.
An abstraction join-semiuniverse (resp.\ meet-semiuniverse) for a set $X$ is a join-semilattice (resp.\ meet-semilattice) of $\mathfrak{P}_X^{*}$.
An abstraction family for a set $X$, an even weaker notion, is simply a subset of $\mathfrak{P}_X^{*}$.
\end{definition}

If the complete abstraction universe $(\mathfrak{P}_X^{*}, \preceq)$ is finite, we can visualize its hierarchy as a directed acyclic graph where vertices denote partitions and edges denote the partial order.
The graph is constructed as follows: plot all distinct partitions of $X$ starting at the bottom with the finest partition $\{\{x\} \mid x \in X\}$, ending at the top with the coarsest partition $\{X\}$ and, roughly speaking, with coarser partitions positioned higher than finer ones.
Draw edges downwards between partitions using the rule that there will be an edge downward from $\pcal$ to $\qcal$ if $\pcal \preceq \qcal$ and there does not exist a third partition $\rcal$ such that $\pcal \preceq \rcal \preceq \qcal$.
Thus, if $\pcal \preceq \qcal$, there is a path (possibly many paths) downward from $\pcal$ to $\qcal$ passing through a chain of intermediate partitions (and a path upward from $\qcal$ to $\pcal$ if $\qcal \succeq \pcal$).
For any pair of partitions $\pcal, \qcal \in \mathfrak{P}_X^{*}$, the join $\pcal \vee \qcal$ can be read from the graph as follows: trace paths downwards from $\pcal$ and $\qcal$ respectively until a common partition $\rcal$ is reached (note that the finest partition $\{\{x\} \mid x \in X\}$ at the bottom is always the end of all downward paths in the graph, so it is guaranteed that $\rcal$ always exists).
To ensure that $\rcal = \pcal \vee \qcal$, make sure there is no $\rcal' \preceq \rcal$ (indicated by an upward path from $\rcal$ to $\rcal'$) with upward paths towards both $\pcal$ and $\qcal$ (otherwise replace $\rcal$ with $\rcal'$ and repeat the process).
Symmetrically, one can read the meet $\pcal \wedge \qcal$ from the graph.

There are limitations to this process, especially if the set $X$ is infinite.
Even for a finite set $X$ of relatively small size, the complete abstraction universe $\mathfrak{P}_X^{*}$ can be quite complicated to visualize (recall that we have to draw $|\mathfrak{P}_X^{*}| = B_{|X|}$ vertices where $B_{|X|}$ grows extremely fast with $|X|$, let alone the edges).
However, not all arbitrary partitions are of interest to us.
In the following subsections, we study symmetry-generated abstractions and abstraction universes.
So, later we can focus on certain partitions by considering certain symmetries.

\subsection{Symmetry-Generated Abstraction}
\label{sec:symmetry-generated-abstraction}

Recall that we explain an abstraction of a set by its inducing equivalence relation, where equivalent elements are treated as the same.
Instead of considering arbitrary equivalence relations or arbitrary partitions, we construct every abstraction from an explicit mechanism---a symmetry---so the resulting equivalence classes or partition cells are invariant under this symmetry.
To capture various symmetries, we consider groups and group actions.

\paragraph{Preliminaries (Appendix~\ref{app:symmetry-generated-abstraction}):}
group ($(G,*)$ or $G$), subgroup ($\leq$), trivial subgroup ($\{e\}$), subgroup generated by a set ($\langle S \rangle$), cyclic subgroup ($\langle s \rangle$); group action, $G$-action on $X$ ($\cdot: G \times X \to X$), orbit of $x \in X$ ($Gx$), set of all orbits ($X/G$).
\vspace{0.1in}

Consider a special type of group, namely the symmetric group $(S_X, \circ)$ defined over a set $X$, whose group elements are all the bijections from $X$ to $X$ and whose group operation is (function) composition.
The identity element of $S_X$ is the identity function, denoted $\id$.
A bijection from $X$ to $X$ is also called a \emph{transformation} of $X$.
Therefore, the symmetric group $S_X$ comprises all transformations of $X$, and is also called the transformation group of $X$, denoted $\transf(X)$.
We use these two terms and notations interchangeably in this paper, with a preference for $\transf(X)$ in general, while reserving $S_X$ mostly for a finite $X$.

Given a set $X$ and a subgroup $H \leq \transf(X)$, we define an $H$-action on $X$ by $h\cdot x := h(x)$ for any $h \in H, x \in X$; the \emph{orbit} of $x \in X$ under $H$ is the set $Hx := \{h(x) \mid h \in H\}$.
Orbits in $X$ under $H$ define an equivalence relation: $x \sim x'$ if and only if $x,x'$ are in the same orbit, and each orbit is an equivalence class.
Thus, the quotient $X/H = X{/\!\!\sim}$ is a partition of $X$. It is known that every cell (or orbit) in the abstraction (or quotient) $X/H$ is a minimal non-empty invariant subset of $X$ under transformations in $H$.
Therefore, we say this abstraction respects the so-called \emph{$H$-symmetry} or \emph{$H$-invariance}.

We succinctly record the above process of constructing an abstraction $X/H$ (of $X$) from a given subgroup $H \leq \transf(X)$ in the following \emph{abstraction generating chain}:
\begin{align*}
\mbox{a subgroup of $\transf(X)$}
\xrightarrow[]{\textit{group action}}
\mbox{orbits}
\xrightarrow[]{\textit{equiv. rel.}}
\mbox{a partition}
\xrightarrow[]{\textit{is}}
\mbox{an abstraction of $X$},
\end{align*}
which can be further encapsulated by the \emph{abstraction generating function} defined as follows.

\begin{definition}\label{def:abstraction-generating-function}
The abstraction generating function is the mapping $\pi: \hcal_{\transf(X)}^{*} \to \mathfrak{P}_X^{*}$ where $\hcal_{\transf(X)}^{*}$ is the collection of all subgroups of $\transf(X)$, $\mathfrak{P}_X^{*}$ is the family of all partitions of $X$, and for any $H \in \hcal_{\transf(X)}^{*}$, $\pi(H) := X/H := \{Hx \mid x \in X\}$, where $Hx := \{h(x) \mid h \in H\}$.
\end{definition}

\begin{theorem}\label{thm:abstraction-generating-function-not-injective}
The abstraction generating function $\pi: \hcal_{\transf(X)}^{*} \to \mathfrak{P}_X^{*}$ is not necessarily injective.
\end{theorem}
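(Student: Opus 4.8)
The plan is to prove the statement by exhibiting an explicit counterexample, since it only asserts that $\pi$ \emph{need not} be injective: it suffices to name one set $X$ together with two distinct subgroups $H_1, H_2 \in \hcal_{\transf(X)}^{*}$ for which $\pi(H_1) = X/H_1 = X/H_2 = \pi(H_2)$. The guiding observation is that the orbit partition $X/H$ records only which elements get identified, not how the transformations in $H$ permute them, so any two subgroups acting transitively on $X$ are indistinguishable under $\pi$ (both collapse to the single cell $\{X\}$), even though there are many such subgroups once $|X|$ is large enough.

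Concretely, I would take the finite set $X = \{1,2,3\}$, so that $\transf(X) = S_X$ has order $6$. As the two subgroups I would choose $H_1 = S_X = \transf(X)$ itself and $H_2 = \langle \sigma \rangle$, the cyclic subgroup generated by the $3$-cycle $\sigma = (1\,2\,3)$, which has order $3$. First I would check these are genuinely distinct elements of $\hcal_{\transf(X)}^{*}$, which is immediate from $|H_1| = 6 \neq 3 = |H_2|$. Next I would verify that both act transitively on $X$: for $H_1$ this holds because $S_X$ already contains transpositions carrying any element to any other; for $H_2$ the powers $\id, \sigma, \sigma^2$ carry $1$ to $1, 2, 3$ respectively. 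Transitivity of the action means there is exactly one orbit, namely all of $X$, so $\pi(H_1) = \{X\} = \pi(H_2)$ while $H_1 \neq H_2$, establishing that $\pi$ fails to be injective. (If a non-trivial partition is preferred as a witness, the same phenomenon appears with $X = \{1,2,3,4\}$, taking $H_1 = \langle (1\,2),(3\,4)\rangle$ of order $4$ and $H_2 = \langle (1\,2)(3\,4)\rangle$ of order $2$, both of which produce the partition $\{\{1,2\},\{3,4\}\}$.)

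To match the ``not necessarily'' phrasing precisely, I would close with a brief remark that injectivity does hold in the degenerate cases $|X| \leq 2$ (when $|X| = 1$ there is a unique subgroup and a unique partition; when $|X| = 2$, $\transf(X)$ has exactly the two subgroups $\{\id\}$ and $S_X$, mapped to the discrete partition and to $\{X\}$ respectively), and that the construction above generalizes verbatim to any $X$ with $|X| \geq 3$ — finite or infinite — by pairing $S_X$ with any proper transitive subgroup (e.g.\ a cyclic group generated by a full cycle, or the translation subgroup $\langle x \mapsto x+1\rangle$ on $X = \integers$). I expect essentially no obstacle in carrying this out: the only points requiring any care are confirming that the two exhibited subgroups are distinct and that each acts transitively, both of which are routine, and noticing that the counterexample must be placed at $|X| \geq 3$ so as not to conflict with the small-$X$ cases where $\pi$ happens to be injective.
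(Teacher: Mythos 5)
Your proposal is correct and follows essentially the same route as the paper: exhibit two distinct subgroups of $\transf(X)$ that act transitively and therefore both map to the coarsest partition (the paper uses the two cyclic subgroups $\langle(1234)\rangle$ and $\langle(1324)\rangle$ of $S_4$, you use $S_3$ and $\langle(123)\rangle$). The extra remarks about $|X|\leq 2$ and a non-trivial-partition variant are fine but not needed.
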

\begin{proof}
Let $X = \{1,2,3,4\}$ and $h = (1234),g = (1324) \in S_4 = \transf(X)$ be two transformations (also known as permutations, in the cycle notation) of $X$; consider the cyclic groups:
\begin{align*}
H = \langle h \rangle &:= \{h^n \mid n \in \integers\} = \{\id, h, h^2, h^3\} = \{\id,(1234), (13)(24), (1432)\};\\
G = \langle g \rangle &:=  \{g^n \mid n \in \integers\} = \{\id, g, g^2, g^3\} = \{\id, (1324), (12)(34), (1423)\}.
\end{align*}
It is clear that $H \neq G$ but $\pi(H) = \pi(G) = \{\{1,2,3,4\}\}$, the coarsest partition of $X$.
\end{proof}

\begin{theorem}\label{thm:abstraction-generating-function-surjective}
The abstraction generating function $\pi: \hcal_{\transf(X)}^{*} \to \mathfrak{P}_X^{*}$ is surjective.
\end{theorem}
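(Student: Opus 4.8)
The plan is to show that every partition $\pcal \in \mathfrak{P}_X^{*}$ lies in the image of $\pi$ by explicitly exhibiting a subgroup $H \leq \transf(X)$ with $X/H = \pcal$. Given $\pcal = \{P_i\}_{i \in I}$, the natural candidate is the subgroup of all transformations that fix each cell setwise, i.e.\ $H := \{h \in \transf(X) \mid h(P_i) = P_i \text{ for all } i \in I\}$. Concretely this is the ``direct product'' of the symmetric groups on the cells, sitting inside $\transf(X)$: a bijection $h$ of $X$ that permutes each $P_i$ within itself. First I would verify $H$ is a subgroup: it contains $\id$, is closed under composition (if $h,h'$ each preserve every cell, so does $h \circ h'$), and closed under inverses (if $h(P_i) = P_i$ then $h^{-1}(P_i) = P_i$ since $h|_{P_i}$ is a bijection of $P_i$). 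So $H \in \hcal_{\transf(X)}^{*}$.

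Next I would compute the orbits of the $H$-action and check $X/H = \pcal$. This has two directions. For ``orbits are contained in cells'': if $x \in P_i$ and $h \in H$, then $h(x) \in h(P_i) = P_i$, so the orbit $Hx \subseteq P_i$. For ``cells are contained in orbits'': given $x, x' \in P_i$, I need a single $h \in H$ with $h(x) = x'$. Take $h$ to be the transposition swapping $x$ and $x'$ and fixing everything else — this is a bijection of $X$, it preserves $P_i$ (swapping two of its elements), and it preserves every other cell $P_j$ pointwise, hence $h \in H$ and $h(x) = x'$. (If $x = x'$, take $h = \id$.) Therefore $Hx = P_i$ for every $x \in P_i$, so the set of orbits is exactly $\{P_i\}_{i \in I} = \pcal$, giving $\pi(H) = \pcal$. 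Since $\pcal$ was arbitrary, $\pi$ is surjective.

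There is no serious obstacle here; the argument is a routine verification. The one point that deserves a moment's care is the ``cells are contained in orbits'' direction: one must produce a transformation of the \emph{whole} set $X$ (not merely a permutation of $P_i$) that realizes the required identification, which is why the transposition $(x\;x')$ — trivially extendable by the identity on $X \setminus \{x, x'\}$ — is the clean choice. Note this works uniformly whether $X$ is finite or infinite and regardless of the cardinalities of the cells, since a single transposition is always available. One could alternatively phrase $H$ as the internal version of $\prod_{i \in I} S_{P_i}$ and cite that its orbits are the factors, but the transposition argument keeps the proof self-contained. I would close by remarking that this $H$ is in fact the \emph{largest} subgroup inducing $\pcal$, which dovetails with the non-injectivity noted in Theorem~\ref{thm:abstraction-generating-function-not-injective} and foreshadows a Galois-type correspondence between subgroups and partitions.
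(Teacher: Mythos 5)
Your proof is correct, and it takes a recognizably different route from the paper's. The paper builds the witness subgroup from below: it takes $H$ to be the subgroup \emph{generated} by all within-cell transpositions $f_{a,b}$, and then the nontrivial direction is showing that orbits do not spill out of cells, which requires decomposing an arbitrary $h \in H$ as a word $h_k \circ \cdots \circ h_1$ in the generators and checking cell-by-cell that each factor keeps the point inside its cell. You instead take the witness from above: the full setwise stabilizer of the partition (the internal product $\prod_i S_{P_i}$), for which ``orbits $\subseteq$ cells'' is immediate from the definition, at the cost of a short (and correct) subgroup verification; the reverse containment then uses exactly the same transposition trick that drives the paper's easy direction. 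The trade-off is that the paper's generated subgroup needs no subgroup test but a word-length argument, while your stabilizer needs a subgroup test but no word argument, and in addition your choice is canonical in a useful way: your closing remark that this $H$ is the \emph{largest} subgroup mapping to $\pcal$ is indeed true (any $K$ with $\pi(K)=\pcal$ must preserve each cell setwise, hence $K \leq H$), which nicely complements the paper's observation that $\pi$ fails to be injective. Both arguments work uniformly for infinite $X$ and infinite cells, so there is no gap in either.
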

\begin{proof}
For any $a,b \in X$, let $f_{a,b}: X \to X$ be the bijective function of the form
\begin{align*}
f_{a,b}(x) = \begin{cases}
a & x = b, \\
b & x = a, \\
x & otherwise.
\end{cases}
\end{align*}
Pick any partition $\pcal \in \mathfrak{P}_X^{*}$. For any cell $P \in \pcal$, define
\begin{align*}
S_P := \left\{f_{a,b} \mid a,b \in P, a\neq b\right\}, \quad \mbox{ and let } H := \left\langle \bigcup_{P \in \pcal}S_P \right\rangle.
\end{align*}
We claim $\pi(H) = \pcal$.
To see this, for any distinct $x,x' \in X$ that are in the same cell in $\pcal$, $f_{x,x'} \in S_P$ for some $P \in \pcal$, so $f_{x,x'} \in H$. This implies that $x$ and $x'$ are in the same orbit in $\pi(H)$, since $x' = f_{x,x'}(x)$. Therefore, $\pi(H) \preceq \pcal$.
Conversely, for any distinct $x,x' \in X$ that are in the same orbit in $\pi(H)$, there exists an $h \in H$ such that $x' = h(x)$. By definition, $h = h_k \circ \cdots \circ h_1$ for some finite integer $k > 0$ where $h_k, \ldots, h_1 \in \cup_{P \in \pcal}S_P$. Suppose $P' \in \pcal$ is the cell that $x$ is in, \ie $x \in P'$, then $h_1(x) \in P'$, since $h_1(x) \in P'$ if $h_1 \in S_{P'}$ and $h_1(x) = x$ otherwise. Likewise, we have $h_2 \circ h_1(x), h_3 \circ h_2 \circ h_1(x), \ldots, h_k \circ \cdots \circ h_1(x) \in P'$. This implies that $x' = h(x) = h_k \circ \cdots \circ h_1(x) \in P'$, \ie $x$ and $x'$ are in the same cell in $\pcal$. Therefore, $\pcal \preceq \pi(H)$.
Combining both directions yields $\pi(H) = \pcal$, so $\pi$ is surjective.
\end{proof}

\subsection{Duality: from Subgroup Lattice to Abstraction (Semi)Universe}
\label{sec:duality-subgroup-to-abstraction}

Given a subgroup of $\transf(X)$, we can generate an abstraction of $X$ via the abstraction generating function $\pi$.
Thus, given a collection of subgroups of $\transf(X)$, we can generate a family of abstractions of $X$.
Further, given a collection of subgroups of $\transf(X)$ with a hierarchy, we can generate a family of abstractions of $X$ with an induced hierarchy.
This leads us to a \emph{subgroup lattice} generating a partition (semi)lattice, where the latter is dual to the former via the abstraction generating function $\pi$.

\paragraph{Preliminaries (Appendix~\ref{app:duality_subgroup_to_abstraction}):}
the (complete) subgroup lattice for a group ($\hcal_G^{*}$, $\leq$), join ($A \vee B = \langle A\cup B \rangle$), meet ($A \wedge B = A \cap B$).
\vspace{0.1in}

We consider the subgroup lattice for $\transf(X)$, denoted $(\hcal_{\transf(X)}^{*}, \leq)$.
Similar to the complete abstraction universe $(\mathfrak{P}_X^{*}, \preceq)$, we can draw a directed acyclic graph to visualize $(\hcal_{\transf(X)}^{*}, \leq)$ if it is finite, where vertices denote subgroups and edges denote the partial order. The graph is similarly constructed by plotting all distinct subgroups of $\transf(X)$ starting at the bottom with $\{\id\}$, ending at the top with $\transf(X)$ and, roughly speaking, with larger subgroups positioned higher than smaller ones.
Draw an upward edge from $A$ to $B$ if $A \leq B$ and there are no subgroups properly between $A$ and $B$. For any pair of subgroups $A,B \in \hcal_{\transf(X)}^{*}$, the join $A \vee B$ can be read from the graph by tracing paths upwards from $A$ and $B$ respectively until a common subgroup containing both is reached, and making sure there are no smaller such subgroups; the meet $A \wedge B$ can be read from the graph in a symmetric manner.
For any subgroup $C \in \hcal_{\transf(X)}^{*}$, the subgroup sublattice $(\hcal_{C}^{*}, \leq)$ for $C$ is part of the subgroup lattice $(\hcal_{\transf(X)}^{*}, \leq)$ for $\transf(X)$, which can be read from the graph for $(\hcal_{\transf(X)}^{*}, \leq)$ by extracting the part below $C$ and above $\{\id\}$.

\begin{theorem}[Duality]\label{thm:duality-subgroup-and-partition-lattice}
Let $(\hcal_{\transf(X)}^{*}, \leq)$ be the subgroup lattice for $\transf(X)$, and $\pi$ be the abstraction generating function. Then $(\pi(\hcal_{\transf(X)}^{*}), \preceq)$ is an abstraction meet-semiuniverse for $X$. More specifically, for any $A,B \in \hcal_{\transf(X)}^{*}$, the following hold:
\begin{itemize}
\setlength\itemsep{0in}
\item [1.] partial-order reversal: if $A \leq B$, then $\pi(A) \succeq \pi(B)$;
\item [2.] strong duality: $\pi(A \vee B) = \pi(A) \wedge \pi(B)$ (Figure~\ref{fig:duality-join-and-meet}a);
\item [3.] weak duality: $\pi(A \wedge B) \succeq \pi(A) \vee \pi(B)$ (Figure~\ref{fig:duality-join-and-meet}b).
\end{itemize}
\end{theorem}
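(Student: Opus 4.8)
The plan is to reduce everything to one elementary fact about orbits and then bootstrap the three parts from it. First I would prove part~1: if $A \le B$, then for every $x \in X$ we have $Ax = \{a(x) \mid a \in A\} \subseteq \{b(x) \mid b \in B\} = Bx$, so every cell of $\pi(A)$ sits inside a cell of $\pi(B)$; by item~6 of Definition~\ref{def:higher-level-abstraction} this says $\pi(B)$ is coarser than $\pi(A)$, \ie $\pi(A) \succeq \pi(B)$.

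Part~3 then follows formally. Since $A \wedge B = A \cap B$ is a subgroup of both $A$ and $B$, part~1 gives $\pi(A \wedge B) \succeq \pi(A)$ and $\pi(A\wedge B) \succeq \pi(B)$, so $\pi(A\wedge B)$ is a common refinement of $\pi(A)$ and $\pi(B)$; as $\pi(A)\vee\pi(B)$ is by definition the \emph{coarsest} common refinement, we get $\pi(A\wedge B) \succeq \pi(A)\vee\pi(B)$.

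For part~2 one direction is equally cheap: $A \le A\vee B$ and $B \le A\vee B$, so part~1 gives $\pi(A\vee B) \preceq \pi(A)$ and $\pi(A\vee B)\preceq\pi(B)$, whence $\pi(A\vee B) \preceq \pi(A)\wedge\pi(B)$ because $\pi(A)\wedge\pi(B)$ is the \emph{finest} common coarsening. The content is the reverse inequality $\pi(A)\wedge\pi(B) \preceq \pi(A\vee B)$, which I would establish using two descriptions. On the group side, since $A$ and $B$ are subgroups, $A \cup B$ is closed under inversion, so $A \vee B = \langle A \cup B\rangle$ is precisely the set of finite composites $h_k \circ \cdots \circ h_1$ with each $h_i \in A \cup B$ (the same normal-form device already used in the proof of Theorem~\ref{thm:abstraction-generating-function-surjective}). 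On the partition side, the meet $\pi(A)\wedge\pi(B)$, being the finest common coarsening, is the partition whose equivalence relation is the transitive closure of ``lies in a common $A$-orbit or in a common $B$-orbit'' (the union of the two orbit relations is reflexive and symmetric but generally not transitive, and its transitive closure is the smallest equivalence relation containing both). Now take distinct $x, x'$ in the same cell of $\pi(A\vee B)$, write $x' = h_k\circ\cdots\circ h_1(x)$, and set $x_0 = x$ and $x_i = h_i(x_{i-1})$, so $x_k = x'$; each consecutive pair $x_{i-1}, x_i$ lies in a common $A$-orbit (if $h_i \in A$) or a common $B$-orbit (if $h_i \in B$), so $x$ and $x'$ are identified under the transitive closure, \ie they share a cell of $\pi(A)\wedge\pi(B)$. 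By item~6 of Definition~\ref{def:higher-level-abstraction} this is exactly $\pi(A)\wedge\pi(B) \preceq \pi(A\vee B)$, and together with the easy direction and antisymmetry of $\preceq$ it gives $\pi(A\vee B) = \pi(A)\wedge\pi(B)$.

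Finally, $(\pi(\hcal_{\transf(X)}^{*}),\preceq)$ is an abstraction meet-semiuniverse because part~2 shows its underlying set is closed under the meet of $\mathfrak{P}_X^{*}$ (indeed $\pi(A)\wedge\pi(B) = \pi(A\vee B)$ is again in the image), so it is a meet-subsemilattice; alternatively one can simply cite Theorem~\ref{thm:abstraction-generating-function-surjective}, by which the image is all of the bounded lattice $\mathfrak{P}_X^{*}$. The step I expect to be the real obstacle is the reverse inequality in part~2: one must line up the group-theoretic normal form for $\langle A\cup B\rangle$ (which hinges on $A$ and $B$ being subgroups, not merely subsets) with the transitive-closure description of the meet of two partitions, all while keeping the coarser/finer bookkeeping consistent with the order-reversing convention built into $\preceq$.
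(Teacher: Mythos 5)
Your proof is correct and follows essentially the same route as the paper: order reversal via orbit inclusion, the formal arguments for weak duality and the easy half of strong duality, and the word/chain argument $x'=h_k\circ\cdots\circ h_1(x)$ for the hard half. The only cosmetic difference is that you conclude the chain step via the transitive-closure description of the finest common coarsening, whereas the paper notes that consecutive points share a cell in \emph{any} common coarsening; these are interchangeable, and your alternative appeal to Theorem~\ref{thm:abstraction-generating-function-surjective} for the meet-semiuniverse claim is also fine.
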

\begin{figure}[h!]
\begin{center}
\includegraphics[width=0.85\columnwidth]{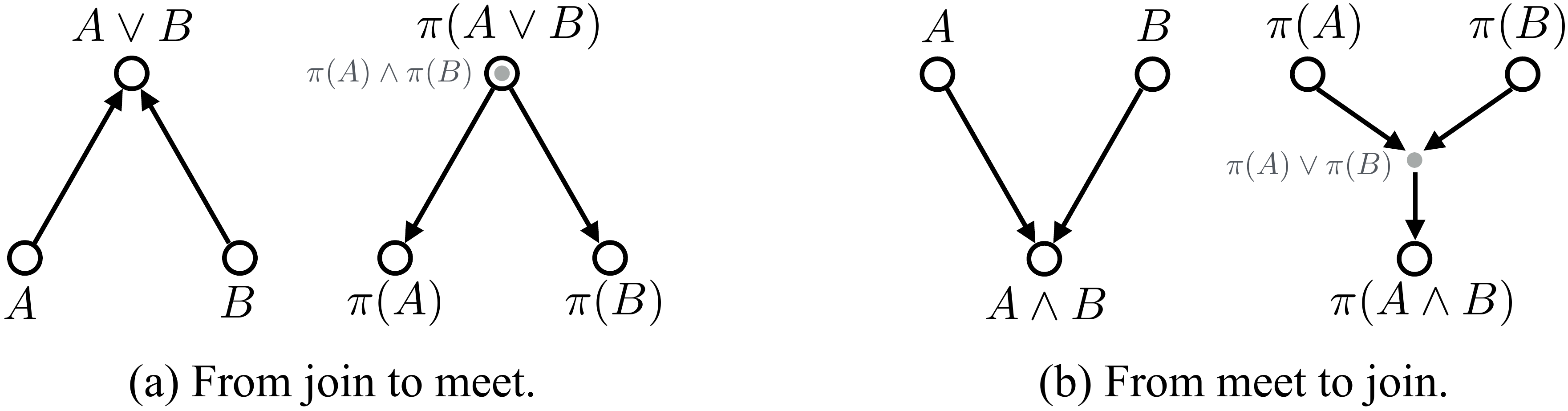}
\end{center}
\caption{Duality of join and meet between the subgroup lattice (left in each subfigure) and the partition lattice (right in each subfigure). In (a), the gray vertex denoting $\pi(A) \wedge \pi(B)$, \ie the actual meet in the partition lattice, is equal to $\pi(A \vee B)$; in (b), the gray vertex denoting $\pi(A) \vee \pi(B)$, \ie the actual join in the partition lattice, can be any vertex below $\pi(A), \pi(B)$ and above $\pi(A\wedge B)$ or even equal to these three end points.}
\label{fig:duality-join-and-meet}
\end{figure}
\begin{proof}
(Partial-order reversal)
Pick any $A,B \in \hcal_G^{*}$ and $A \leq B$.
For any $x,x' \in X$ that are in the same cell in partition $\pi(A) = X/A = \{Ax \mid x \in X\}$, $x' \in Ax = \{a(x) \mid a \in A\}$.
Since $A \leq B$, then $Ax \subseteq Bx$, which further implies that $x' \in Bx$.
So, $x$ and $x'$ are in the same cell in partition $\pi(B)$. Therefore, $\pi(A) \succeq \pi(B)$.

(Strong duality)
Pick any $A,B \in \hcal_G^{*}$. By the definition of join, $A, B \leq A \vee B$, so from what we have shown at the beginning, $\pi(A), \pi(B) \succeq \pi(A \vee B)$, \ie $\pi(A \vee B)$ is a common coarsening of $\pi(A)$ and $\pi(B)$. Since $\pi(A) \wedge \pi(B)$ is the finest common coarsening of $\pi(A)$ and $\pi(B)$, then $\pi(A \vee B) \preceq \pi(A) \wedge \pi(B)$.
Conversely, for any $x,x' \in X$ that are in the same cell in partition $\pi(A \vee B) = \pi(\langle A \cup B \rangle) = X/\langle A \cup B \rangle = \{\langle A \cup B \rangle x \mid x \in X\}$, $x$ and $x'$ must be in the same orbit under $\langle A \cup B \rangle$-action on $X$, \ie $x' \in \langle A \cup B \rangle x$ which means $x' = f_k \circ \cdots \circ f_1(x)$ for some finite integer $k$ where $f_1, \ldots, f_k \in A \cup B$ (note: the fact that $A,B$ are both subgroups ensures that $A \cup B$ is closed under inverses). This implies that $x$ and $f_1(x)$ are either in the same cell in partition $\pi(A)$ or in the same cell in partition $\pi(B)$ depending on whether $f_1 \in A$ or $f_1 \in B$, but in either event, $x$ and $f_1(x)$ must be in the same cell in any common coarsening of $\pi(A)$ and $\pi(B)$. Note that $\pi(A) \wedge \pi(B)$ is a common coarsening of $\pi(A)$ and $\pi(B)$ (regardless of the fact that it is the finest), so $x$ and $f_1(x)$ are in the same cell in partition $\pi(A) \wedge \pi(B)$. Likewise, $f_1(x)$ and $f_2 \circ f_1(x)$, $f_3 \circ f_2 \circ f_1(x)$ and $f_2 \circ f_1(x)$, $\ldots$, $f_{k-1} \circ \cdots \circ f_1(x)$ and $x'$ are all in the same cell in partition $\pi(A) \wedge \pi(B)$. Therefore, $x$ and $x'$ are in the same cell in partition $\pi(A) \wedge \pi(B)$. So, $\pi(A \vee B) \succeq \pi(A) \wedge \pi(B)$. Combining both directions yields $\pi(A \vee B) = \pi(A) \wedge \pi(B)$.

(Weak duality)
Pick any $A,B \in \hcal_G^{*}$. By the definition of meet, $A,B \geq A \wedge B$, so from what have shown at the beginning, $\pi(A), \pi(B) \preceq \pi(A \wedge B)$, \ie $\pi(A \wedge B)$ is a common refinement of $\pi(A)$ and $\pi(B)$. Since $\pi(A) \vee \pi(B)$ is the coarsest common refinement of $\pi(A)$ and $\pi(B)$, then $\pi(A \wedge B) \succeq \pi(A) \vee \pi(B)$.
We cannot obtain equality in general. For example, let $X = \integers$ and $A = \{r: \integers \to \integers \mid r(x) = k x, k \in \{-1,1\} \}$, $B = \{t: \integers \to \integers \mid t(x) = x + k, k \in \integers\}$.
It is clear that $A,B \leq \transf(X)$ and $A \wedge B = A \cap B = \{\id\}$, so $\pi(A \wedge B) = X/\{\id\} = \{\{x\} \mid x \in \integers\}$, \ie the finest partition of $\integers$.
However, $\pi(A) = \{\{x,-x\} \mid x \in \integers\}$ and $\pi(B) = \{\integers\}$, \ie the coarsest partition of $\integers$, so $\pi(A) \vee \pi(B) = \pi(A) = \{\{x,-x\} \mid x \in \integers\}$. In this example, we see that $\pi(A \wedge B) \succeq \pi(A) \vee \pi(B)$ but $\pi(A \wedge B) \neq \pi(A) \vee \pi(B)$.
\end{proof}
\begin{remark}[Practical implication]\label{remark:duality-subgroup-and-partition-lattice}
The strong duality in Theorem~\ref{thm:duality-subgroup-and-partition-lattice} suggests a quick way of computing abstractions. If one has already computed abstractions $\pi(A)$ and $\pi(B)$, then instead of computing $\pi(A\vee B)$ from $A \vee B$, one can compute the meet $\pi(A) \wedge \pi(B)$, which is generally a less expensive operation than computing $A \vee B$ and identifying all orbits in $\pi(A\vee B)$.
\end{remark}

Theorem~\ref{thm:duality-subgroup-and-partition-lattice} further allows us to build an abstraction semiuniverse with a partial hierarchy directly inherited from the hierarchy of the subgroup lattice. Nevertheless, there are cases where $\pi(A) \preceq \pi(B)$ with incomparable $A$ and $B$ since the abstraction generating function $\pi$ is not injective (Theorem~\ref{thm:abstraction-generating-function-not-injective}).
If desired, one needs additional steps to complete the hierarchy or even to complete the abstraction semiuniverse into an abstraction universe.

\subsection{More on Duality: from Conjugation to Group Action}
\label{sec:more-duality-subgroup-to-abstraction}

Partitions of a set $X$ generated from two conjugate subgroups of $\transf(X)$ can be related by a group action.
We present this relation as another duality between subgroups and abstractions, which can also simplify the computation of abstractions.

\paragraph{Preliminaries (Appendix~\ref{app:more_duality_subgroup_to_abstraction}):}
conjugate, conjugacy class.

\begin{theorem}\label{thm:group-action-layer-up}
Let $G$ be a group, $X$ be a set, and $\cdot: G \times X \to X$ be a $G$-action on $X$. Then
\begin{itemize}
\setlength\itemsep{0in}
\item [1.] for any $g \in G, Y \in 2^X$, $g \cdot Y := \{g \cdot y \mid y \in Y\} \in 2^X$, and the corresponding function $\cdot: G \times 2^X \to 2^X$ defined by $g\cdot Y$ is a $G$-action on $2^X$;
\item [2.] for any $g \in G, \pcal \in \mathfrak{P}_X^{*}$, $g \cdot \pcal := \{g \cdot P \mid P \in \pcal \} \in \mathfrak{P}_X^{*}$, and the corresponding function $\cdot: G \times \mathfrak{P}_X^{*} \to \mathfrak{P}_X^{*}$ defined by $g \cdot \pcal$ is a $G$-action on $\mathfrak{P}_X^{*}$.
\end{itemize}
\end{theorem}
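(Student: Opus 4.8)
The plan is to verify, in each of the two settings, that the proposed map is well defined (lands in $2^X$, respectively in $\mathfrak{P}_X^{*}$) and then satisfies the two group-action axioms: the identity $e$ acts trivially, and the action is compatible with the group operation. The one structural fact I will lean on throughout is that, for each fixed $g \in G$, the map $x \mapsto g\cdot x$ is a bijection of $X$ onto itself with inverse $x \mapsto g^{-1}\cdot x$; this is immediate from the defining axioms of the given $G$-action on $X$.

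For part 1, well-definedness is trivial since $g\cdot Y \subseteq X$ by construction, so $g\cdot Y \in 2^X$. For the identity axiom, $e\cdot Y = \{e\cdot y \mid y \in Y\} = \{y \mid y \in Y\} = Y$, using $e\cdot y = y$ for all $y$. For compatibility I would compute $(gh)\cdot Y = \{(gh)\cdot y \mid y\in Y\} = \{g\cdot(h\cdot y) \mid y \in Y\} = g\cdot\{h\cdot y \mid y \in Y\} = g\cdot(h\cdot Y)$, using the corresponding identity on elements of $X$ and the fact that applying $g\cdot(-)$ to a set just means applying it elementwise. This shows $\cdot: G\times 2^X \to 2^X$ is a $G$-action.

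For part 2, the substantive (though still elementary) step is that $g\cdot\pcal$ is again a partition of $X$ whenever $\pcal$ is. I would check the three defining properties of a partition. Each cell $g\cdot P$ is nonempty because $P$ is (if $p\in P$ then $g\cdot p \in g\cdot P$). The cells cover $X$: $\bigcup_{P\in\pcal} g\cdot P = g\cdot\bigl(\bigcup_{P\in\pcal}P\bigr) = g\cdot X = X$, where the last equality uses surjectivity of $x\mapsto g\cdot x$. The cells are pairwise disjoint: if $x \in (g\cdot P)\cap(g\cdot P')$ then $x = g\cdot p = g\cdot p'$ for some $p\in P$, $p'\in P'$, and injectivity of $x\mapsto g\cdot x$ forces $p=p'$, so $P\cap P'\neq\emptyset$, hence $P=P'$ since $\pcal$ is a partition. (The same injectivity also shows $P \mapsto g\cdot P$ is a bijection $\pcal \to g\cdot\pcal$, so no cells collapse.) Thus $g\cdot\pcal \in \mathfrak{P}_X^{*}$, and the two action axioms for $\mathfrak{P}_X^{*}$ then follow exactly as in part 1 — they are simply the restriction of the $G$-action on $2^X$ to the subset $\mathfrak{P}_X^{*}\subseteq 2^X$, once we know this subset is invariant, which is precisely what we just checked.

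I do not expect a genuine obstacle here; the closest thing to a pitfall is forgetting that one needs the \emph{bijectivity} of $x\mapsto g\cdot x$ — not merely that $g\cdot(-)$ is a function — both to get the covering property (surjectivity) and the disjointness of the cells of $g\cdot\pcal$ (injectivity), and hence also $|g\cdot\pcal| = |\pcal|$. Everything else is a direct unwinding of definitions.
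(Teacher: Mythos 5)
Your proposal is correct and follows essentially the same route as the paper: show $g\cdot Y$ and $g\cdot \pcal$ are well defined (the paper, like you, uses the bijectivity of $x\mapsto g\cdot x$ — explicitly via $g^{-1}$ — to get disjointness and covering of the cells of $g\cdot\pcal$), and then verify the identity and compatibility axioms by the same elementwise computations. One tiny slip worth noting: $\mathfrak{P}_X^{*}$ is a subset of $2^{2^X}$, not of $2^X$, so the axioms on partitions come from repeating the part-1 computation one level up (which is exactly what the paper writes out) rather than from literally restricting the action on $2^X$; the verification is identical, so nothing breaks.
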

\begin{proof}
See Appendix~\ref{app:group-action-layer-up}.
\end{proof}

\begin{theorem}[Duality]\label{thm:duality-conjugacy-in-subgroup-to-scaling-in-partition}
Let $X$ be a set, $\transf(X)$ be the transformation group of $X$, and $\pi$ be the abstraction generating function. Then for any $H \leq \transf(X)$ and $g \in \transf(X)$,
\begin{align*}
\pi(g \circ H \circ g^{-1}) = g \cdot \pi(H),
\end{align*}
where $\cdot$ refers to the group action defined in Statement 2 in Theorem~\ref{thm:group-action-layer-up}.
\end{theorem}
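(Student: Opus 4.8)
The plan is to reduce the claimed identity to a single orbit-level computation and then lift it to partitions. First I would record that the conjugate $g \circ H \circ g^{-1} := \{g \circ h \circ g^{-1} \mid h \in H\}$ is again a subgroup of $\transf(X)$ (it contains $\id$ and is closed under composition and inverses because $H$ is), so that $\pi(g \circ H \circ g^{-1})$ is well defined and, by Definition~\ref{def:abstraction-generating-function}, equals $X/(g\circ H\circ g^{-1}) = \{(g\circ H \circ g^{-1})\,y \mid y \in X\}$, where $(g\circ H\circ g^{-1})\,y$ denotes the orbit of $y$ under the induced action.

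The heart of the argument is the orbit identity: for every $y \in X$,
\begin{align*}
(g\circ H \circ g^{-1})\,y = \{(g \circ h \circ g^{-1})(y) \mid h \in H\} = \{g(h(g^{-1}(y))) \mid h \in H\} = g\bigl(H\,g^{-1}(y)\bigr),
\end{align*}
i.e., the orbit of $y$ under the conjugated subgroup is the image under $g$ of the orbit of $g^{-1}(y)$ under $H$. This step is purely a matter of unwinding definitions: the meaning of conjugation, the induced action $h \cdot x = h(x)$, and the notation $g(Y) = \{g(p) \mid p \in Y\}$ from Theorem~\ref{thm:group-action-layer-up}.

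Next I would invoke that $g$ is a bijection of $X$: as $y$ ranges over $X$, the element $x := g^{-1}(y)$ ranges over all of $X$, and conversely. Substituting into the description of the cells of $\pi(g\circ H\circ g^{-1})$ gives
\begin{align*}
\{(g\circ H\circ g^{-1})\,y \mid y\in X\} = \{g(Hx)\mid x\in X\} = \{g\cdot P \mid P \in \{Hx\mid x\in X\}\} = g\cdot \pi(H),
\end{align*}
where the final equality is exactly the definition of the $G$-action on $\mathfrak{P}_X^{*}$ from Theorem~\ref{thm:group-action-layer-up}(2) applied to $\pi(H)=\{Hx\mid x\in X\}$. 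Theorem~\ref{thm:group-action-layer-up}(2) also guarantees that $g\cdot\pi(H)$ is genuinely a partition, so no separate verification of that is needed.

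I expect no real obstacle; the statement is essentially bookkeeping with the definitions. The only point worth a word of care is that the displayed chain is an equality of sets \emph{of subsets}: one should observe that $P \mapsto g\cdot P$ is a bijection from the cells of $\pi(H)$ onto the cells of $\pi(g\circ H\circ g^{-1})$ (again because $g$ is a bijection of $X$), so that no cell is over- or under-counted and no two cells collapse. This is immediate from the orbit identity above and can simply be folded into that step, which is why I do not anticipate needing to argue the two inclusions separately.
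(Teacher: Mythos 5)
Your proposal is correct and follows essentially the same route as the paper: both rest on the orbit identity $(g\circ H\circ g^{-1})\,y = g\bigl(H\,g^{-1}(y)\bigr)$ obtained by unwinding the definitions of conjugation and the induced actions. The only difference is packaging — you re-index over $x = g^{-1}(y)$ using bijectivity of $g$ to get the set equality in one pass, while the paper verifies the two inclusions separately, which amounts to the same computation.
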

\begin{proof}
For any $Y \in \pi(g \circ H \circ g^{-1})$, $Y$ is an orbit in $X$ under $g \circ H \circ g^{-1}$, then
$
Y = (g \circ H \circ g^{-1})x
= \{(g\circ h \circ g^{-1})\cdot x \mid h \in H\}
= \{g\circ h \circ g^{-1}(x) \mid h \in H\}
= \{(g\circ h) (g^{-1}(x)) \mid h \in H\}
= \{(gh)\cdot g^{-1}(x) \mid h \in H\}
= \{g \cdot (h\cdot g^{-1}(x)) \mid h \in H\}
= \{g \cdot y \mid y \in Hg^{-1}(x)\}
= g \cdot Hg^{-1}(x)
$
for some $x \in X$. Note that in the above derivation, $g^{-1}(x) \in X$ since $g \in \transf(X)$. So, $Hg^{-1}(x)$ is the orbit of $g^{-1}(x)$ under $H$, \ie $Hg^{-1}(x) \in \pi(H)$. This implies that $Y \in g \cdot \pi(H)$. Therefore, $\pi(g \circ H \circ g^{-1}) \subseteq g \cdot \pi(H)$.

Conversely, for any $Y \in g \cdot \pi(H)$, $Y = g\cdot P$ for some $P \in \pi(H)$. Note that $P$ is an orbit in $X$ under $H$, \ie $P = Hx = \{h\cdot x \mid h \in H\}$ for some $x \in X$, then
$
Y = g \cdot P
= \{g\cdot y \mid y \in P\}
= \{g\cdot (h\cdot x) \mid h \in H\}
= \{(gh)\cdot x \mid h \in H\}
= \{g\circ h(x) \mid h \in H\}
= \{g\circ h \circ g^{-1}\circ g(x) \mid h \in H\}
= \{(g\circ h \circ g^{-1})(g(x)) \mid h \in H\}
= \{(g\circ h\circ g^{-1}) \cdot g(x) \mid h \in H\}
= (g \circ H \circ g^{-1})g(x)
$
for some $x \in X$. Note that in the above derivation, $g(x) \in X$ since $g \in \transf(X)$. Therefore, $(g \circ H \circ g^{-1})g(x)$ is the orbit of $g(x)$ under $g \circ H \circ g^{-1}$, \ie $(g \circ H \circ g^{-1})g(x) \in \pi(g \circ H \circ g^{-1})$. This implies that $Y \in \pi(g \circ H \circ g^{-1})$. So, $g \cdot \pi(H) \subseteq \pi(g \circ H \circ g^{-1})$.
\end{proof}
\begin{remark}[Practical implication]
Theorem~\ref{thm:duality-conjugacy-in-subgroup-to-scaling-in-partition} relates conjugation in the subgroup lattice $\hcal_{\transf(X)}^{*}$ to group action on the partition lattice $\mathfrak{P}_X^{*}$.
In other words, the group action on the partition lattice is dual to the conjugation in the subgroup lattice.
This duality suggests a quick way of computing abstractions.
If one has already computed abstraction $\pi(H)$, then instead of computing $\pi(g\circ H \circ g^{-1})$ from $g\circ H \circ g^{-1}$, one can compute $g\cdot \pi(H)$, which is generally a less expensive operation than computing $g\circ H \circ g^{-1}$ and identifying all orbits in $\pi(g\circ H \circ g^{-1})$.
\end{remark}

\subsection{Partial Subgroup Lattice}
\label{sec:partial-subgroup-lattice}

Theoretically, through the abstraction generating function $\pi$ and necessary hierarchy completions, we can construct the complete abstraction universe $\mathfrak{P}_X^{*}$ from the complete subgroup lattice $\hcal_{\transf(X)}^{*}$.
This is because the subgroup lattice is a larger space that ``embeds'' the partition lattice (more precisely, Theorem~\ref{thm:abstraction-generating-function-not-injective} and \ref{thm:abstraction-generating-function-surjective}).
However, as we mentioned earlier, it is not practical to even store $\mathfrak{P}_X^{*}$ for small $X$, and not all arbitrary partitions of $X$ are equally useful.
Instead of considering all subgroups of $\transf(X)$, we draw our attention to certain parts of the complete subgroup lattice $\hcal_{\transf(X)}^{*}$.
We introduce two general principles in extracting partial subgroup lattices: the top-down approach and the bottom-up approach.

\paragraph{The Top-Down Approach.}
We consider the subgroup sublattice $(\hcal_G^{*}, \leq)$ for some subgroup $G \leq \transf(X)$. If $X$ is finite, this is the part below $G$ and above $\{\id\}$ in the directed acyclic graph for the complete subgroup lattice $(\hcal_{\transf(X)}^{*}, \leq)$. As the name suggests, the top-down approach first specifies a ``top'' in $\hcal_{\transf(X)}^{*}$ (\ie a subgroup $G \leq \transf(X)$), and then extracts everything below the ``top'' (\ie the subgroup lattice $\hcal_{G}^{*}$).
The computer algebra system GAP \citep{GAP4} provides efficient algorithmic methods to construct the subgroup lattice for a given group, and even maintains several data libraries for special groups and their subgroup lattices. In general, enumerating all subgroups of a group can be computationally intensive, and therefore, is applied primarily to small groups. When computationally prohibitive, a general trick is to enumerate subgroups up to conjugacy (which is also supported by the GAP system).
Computing abstractions within the conjugacy class of any subgroup is then easy by the duality in Theorem~\ref{thm:duality-conjugacy-in-subgroup-to-scaling-in-partition}, once the abstraction generated by a representative is computed.
More details on picking a special subgroup (as the ``top'') of $\transf(X)$ are discussed in Section~\ref{sec:the-top-down-approach-sp-subgroups}.

\paragraph{The Bottom-Up Approach.}
We first pick some finite subset $S \subseteq \transf(X)$, and then generate a partial subgroup lattice for $\langle S \rangle$ by computing $\langle S' \rangle$ for every $S' \subseteq S$, starting from smaller subgroups.
As the name suggests, the bottom-up approach first constructs the trivial subgroup $\langle \emptyset \rangle = \{\id\}$, \ie the bottom vertex in the direct acyclic graph for $\hcal_{\transf(X)}^{*}$ if $X$ is finite, and then cyclic subgroups $\langle s \rangle$ for every $s \in S$.
We continue to construct larger subgroups from smaller ones by taking the join, which corresponds to gradually moving upwards in the graph for $\hcal_{\transf(X)}^{*}$ when $X$ is finite.
In general, this approach will produce at most $2^{|S|}$ subgroups for a given subset $S \subseteq \transf(X)$, and will not produce the complete subgroup sublattice $\hcal_{\langle S \rangle}^{*}$ unless $S = \langle S \rangle$.
Computing abstractions using this bottom-up approach is easy by the strong duality in Theorem~\ref{thm:duality-subgroup-and-partition-lattice}, once the abstractions generated by all cyclic subgroups are computed.
More details on this abstraction generating process and picking a generating set (as the ``bottom'') are discussed in Section~\ref{sec:the-bottom-up-approach-gen-set}.


\section{The Top-Down Approach: Special Subgroups}
\label{sec:the-top-down-approach-sp-subgroups}

We follow a top-down approach to discuss subgroup enumeration problems.
The plan is to start with the transformation group of $X = \reals^n$, and then to consider special subgroups of $\transf(\reals^n)$ and special subspaces of $\reals^n$.
To do this systematically, we derive a principle that allows us to hierarchically break the enumeration problem into smaller and smaller enumeration subproblems.
This hierarchical breakdown can guide us in restricting both the type of subgroups and the type of subspaces, so that the resulting abstraction (semi)universe fits our desiderata, and more importantly can be computed in practice.
Figure~\ref{fig:subgroups-and-spaces} presents an outline consisting of special subgroups and subspaces considered in this section as well as their hierarchies.

\begin{figure}[t]
\begin{center}
\includegraphics[width=0.92\columnwidth]{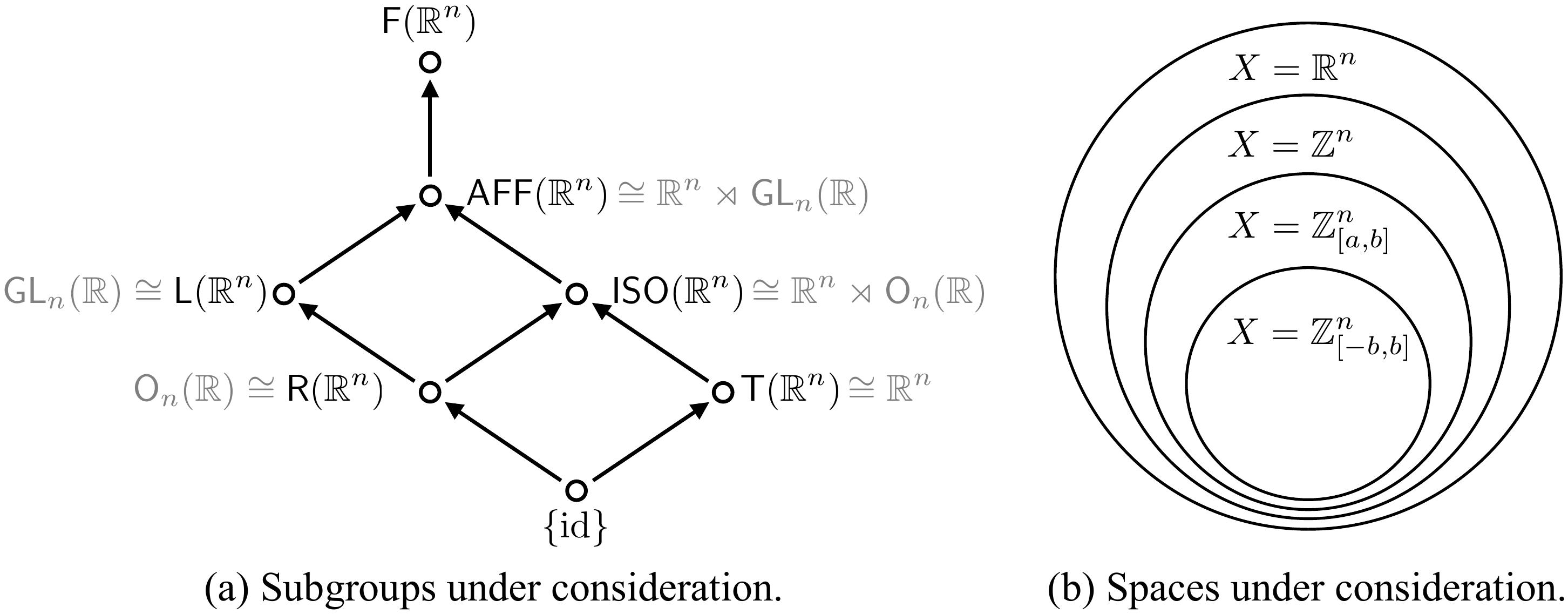}
\end{center}
\caption{Special subgroups and spaces as well as their hierarchies. (a) presents a backbone of the complete subgroup lattice $\hcal_{\transf(\reals^n)}^{*}$, including important subgroups and their breakdowns. One can check the above directed acyclic graph indeed represents a sublattice: it is closed under both join and meet. (b) presents important subspaces of $\reals^n$, where restrictions are gradually added to eventually lead to practical abstraction-construction algorithms.}
\label{fig:subgroups-and-spaces}
\end{figure}

\noindent\underline{Note:} we do not claim the originality of the content in this section.
Indeed, many parts have been studied in various contexts.
Our work is to extend existing results from specific context to a general setting.
This generalization coherently puts different pieces of context-specific knowledge under one umbrella, forming the guiding principle of the top-down approach.

\paragraph{Preliminaries (Appendix~\ref{app:the-top-down-approach-sp-subgroups}):}
group homomorphism, isomorphism ($\cong$); normalizer of a set in a group (${\rm N}_G(S) := \{g \in G \mid gSg^{-1} = S\}$), normal subgroup ($\trianglelefteq$); group decomposition, inner semi-direct product, outer semi-direct product ($\rtimes$).

\subsection{The Affine Transformation Group $\affine(\reals^n)$}
\label{sec:the-affine-transformation-group}

An \emph{affine transformation} of $\reals^n$ is a function $f_{A,u}: \reals^n \to \reals^n$ of the form
\begin{align*}
f_{A,u}(x) = Ax + u \quad \mbox{ for any } x \in \reals^n,
\end{align*}
where $A \in \genlin_n(\reals)$ is an $n\times n$ real invertible matrix and $u \in \reals^n$ is an $n$-dimensional real vector. We use $\affine(\reals^n)$ to denote the set of all affine transformations of $\reals^n$.
There are two special cases:
\begin{itemize}
\setlength\itemsep{0in}
\item [1.] A \emph{translation} of $\reals^n$ is a function $t_u: \reals^n \to \reals^n$ of the form $x \mapsto x + u$ where $u \in \reals^n$;
we use $\tra(\reals^n)$ to denote the set of all translations of $\reals^n$.
\item [2.] A \emph{linear transformation} of $\reals^n$ is a function $r_A: \reals^n \to \reals^n$ of the form $x \mapsto Ax$ where $A \in \genlin_n(\reals)$; we use $\lin(\reals^n)$ to denote the set of all linear transformations of $\reals^n$.
\end{itemize}
It is easy to check that $\tra(\reals^n), \lin(\reals^n) \leq \affine(\reals^n) \leq \transf(\reals^n)$; further, $(\tra(\reals), \circ)$ and $(\lin(\reals^n), \circ)$ are isomorphic to $(\reals^n, +)$ and $(\genlin_n(\reals), \cdot)$, respectively.
It is known that
\begin{align*}
\affine(\reals^n) = \tra(\reals^n)\circ \lin(\reals^n) \cong \tra(\reals^n) \rtimes \lin(\reals^n) \cong \reals^n \rtimes \genlin_n(\reals).
\end{align*}
So every affine transformation can be uniquely identified with a pair $(u,A) \in \reals^n \rtimes \genlin_n(\reals)$. In particular, the identity transformation is identified with $(\bm{0}, I)$, the translation group $\tra(\reals^n)$ is identified with $\{(u,I) \mid u \in \reals^n\}$, and the linear transformation group $\lin(\reals^n)$ is identified with $\{(\bm{0}, A) \mid A \in \genlin_n(\reals)\}$.
Under this identification, compositions and inverses of affine transformations become
\begin{align}\label{eqn:composition-and-inverse-of-affine-pair}
(u,A)(u',A') = (u+Au', AA') \quad \mbox{ and } \quad (u, A)^{-1} = (-A^{-1}u, A^{-1}).
\end{align}
The above identification further allows us to introduce two functions $\ell: \affine(\reals^n) \to \genlin_n(\reals)$ and $\tau: \affine(\reals^n) \to \reals^n$ to extract the linear and translation part of an affine transformation, respectively, where
\begin{align*}
\ell(f_{A,u}) = A, \  \tau(f_{A,u}) = u \quad \mbox{ for any } f_{A,u} \in \affine(\reals^n).
\end{align*}

Now we can start our journey towards a complete identification of every subgroup $H$ of $\affine(\reals^n)$.
We introduce the first foundational quantity $T := \tra(\reals^n) \cap H$, which is the set of pure translations in $H$, called the \emph{translation subgroup} of $H$.
It is easy to check that $T \trianglelefteq H$ since translations are normal in affine transformations.
Therefore, the quotient group $H/T = \{T \circ h \mid h \in H\}$ is well-defined.
The elements in $H/T$ are called \emph{cosets}. The following theorems reveal more structures of $H/T$, the second foundational quantity.

\begin{lemma}\label{lemma:linear-part-is-homomorphism}
$\ell: \affine(\reals^n) \to \genlin_n(\reals)$ is a homomorphism.
\end{lemma}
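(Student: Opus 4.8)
The plan is a direct verification that $\ell$ respects composition. Concretely, I would take two arbitrary affine transformations $f_{A,u}, f_{A',u'} \in \affine(\reals^n)$, compute their composite $f_{A,u} \circ f_{A',u'}$ explicitly by evaluating it on a generic point $x \in \reals^n$, read off its linear part, and compare with $\ell(f_{A,u})\,\ell(f_{A',u'}) = AA'$.

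First I would write, for any $x \in \reals^n$,
\begin{align*}
(f_{A,u} \circ f_{A',u'})(x) = f_{A,u}(A'x + u') = A(A'x + u') + u = (AA')x + (Au' + u),
\end{align*}
so that $f_{A,u} \circ f_{A',u'} = f_{AA', Au'+u}$; equivalently, this is just the composition law $(u,A)(u',A') = (u+Au', AA')$ recorded in \eqref{eqn:composition-and-inverse-of-affine-pair}. Since $A, A' \in \genlin_n(\reals)$ and $\genlin_n(\reals)$ is closed under matrix multiplication, $AA' \in \genlin_n(\reals)$, so the composite is again a legitimate affine transformation and its linear part is well-defined. Applying $\ell$ then gives $\ell(f_{A,u} \circ f_{A',u'}) = AA' = \ell(f_{A,u})\,\ell(f_{A',u'})$, which is exactly the homomorphism property (with domain operation function composition and codomain operation matrix multiplication). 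For completeness I would also note $\ell(\id) = \ell(f_{I,\bm 0}) = I$, the identity of $\genlin_n(\reals)$, and that $\ell(f_{A,u}^{-1}) = A^{-1}$ is consistent with the inverse formula in \eqref{eqn:composition-and-inverse-of-affine-pair}, though these follow automatically once multiplicativity is established.

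There is no real obstacle here: the only thing to be careful about is bookkeeping the order of composition correctly (the linear part of $f_{A,u}$ acts on the translation part of the inner map, contributing to the translation part of the composite but not to its linear part), and confirming that the map genuinely lands in $\genlin_n(\reals)$ rather than merely in the ambient matrix ring — both are immediate. The lemma is essentially a restatement of the first coordinate of the semidirect-product multiplication $\reals^n \rtimes \genlin_n(\reals)$, and its role in what follows is to let us form the image $\ell(H) \leq \genlin_n(\reals)$ and identify $H/T$ with it, which is why it is isolated as a standalone lemma.
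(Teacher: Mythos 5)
Your proof is correct and follows essentially the same route as the paper: compute $f_{A,u}\circ f_{A',u'} = f_{AA',\,Au'+u}$ (the composition law in \eqref{eqn:composition-and-inverse-of-affine-pair}) and read off $\ell(f_{A,u}\circ f_{A',u'}) = AA' = \ell(f_{A,u})\ell(f_{A',u'})$. The extra remarks about the identity and inverses are fine but, as you note, not needed once multiplicativity is shown.
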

\begin{proof}
For any $f_{A,u}, f_{A',u'} \in \affine(\reals^n)$, we have $\ell(f_{A,u} \circ  f_{A',u'}) = \ell(f_{AA', Au'+u}) = AA' = \ell(f_{A,u})\ell(f_{A',u'})$, which implies that $\ell$ is a homomorphism.
\end{proof}

\begin{theorem}\label{thm:in-the-same-coset-means-same-linear-part}
Let $H \leq \affine(\reals^n)$, $T = \tra(\reals^n) \cap H$. Then $h,h' \in H$ are in the same coset in $H/T$ if and only if they have the same linear part, \ie $\ell(h) = \ell(h')$.
\end{theorem}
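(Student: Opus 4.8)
The plan is to recognize $T$ as the kernel of the restricted linear-part map $\ell|_H$ and then read off the equivalence from the homomorphism property established in Lemma~\ref{lemma:linear-part-is-homomorphism}. The whole statement is essentially the first isomorphism theorem applied to $\ell|_H$: the coset of $h$ in $H/T$ is determined by, and determines, the value $\ell(h)$.

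First I would fix conventions: the cosets in $H/T$ are written $T \circ h$, so they are right cosets, and hence $h$ and $h'$ lie in the same coset if and only if $h \circ (h')^{-1} \in T$. Next I would pin down $T$ precisely. An affine transformation $f_{A,u}$ satisfies $\ell(f_{A,u}) = A$, and $f_{A,u}$ is a translation exactly when $A = I$ (since then $f_{A,u}: x \mapsto x + u$); thus $\tra(\reals^n) = \{ f \in \affine(\reals^n) \mid \ell(f) = I \}$, and consequently $T = \tra(\reals^n) \cap H = \{ h \in H \mid \ell(h) = I \}$, i.e.\ $T$ is the kernel of the restriction $\ell|_H$.

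Now both directions follow quickly. For the forward direction, if $h \circ (h')^{-1} \in T$, then $\ell\bigl(h \circ (h')^{-1}\bigr) = I$; since $\ell$ is a homomorphism (Lemma~\ref{lemma:linear-part-is-homomorphism}), this gives $\ell(h)\,\ell(h')^{-1} = I$, hence $\ell(h) = \ell(h')$. For the converse, suppose $\ell(h) = \ell(h')$. Then $\ell\bigl(h \circ (h')^{-1}\bigr) = \ell(h)\,\ell(h')^{-1} = I$, so $h \circ (h')^{-1}$ is an affine transformation with trivial linear part, i.e.\ a translation; moreover $h \circ (h')^{-1} \in H$ because $H$ is a subgroup and $h, h' \in H$. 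Therefore $h \circ (h')^{-1} \in \tra(\reals^n) \cap H = T$, so $h$ and $h'$ are in the same coset, completing the proof.

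The only point requiring any care — and it is minor — is keeping the left/right coset convention consistent with the definition $H/T = \{ T \circ h \mid h \in H\}$, together with the elementary observation that an affine map with identity linear part is precisely a translation (which is immediate from the form $f_{A,u}(x) = Ax + u$). No deeper obstacle is anticipated.
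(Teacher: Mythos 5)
Your proof is correct and follows essentially the same route as the paper's: both reduce coset equality to membership of $h\circ(h')^{-1}$ (equivalently $h'\circ h^{-1}$) in $T$, identify $T$ as the elements of $H$ with identity linear part, and conclude via the homomorphism property of $\ell$ from Lemma~\ref{lemma:linear-part-is-homomorphism}. Your version merely spells out the kernel observation that the paper leaves implicit.
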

\begin{proof}
See Appendix~\ref{app:in-the-same-coset-means-same-linear-part}.
\end{proof}

\begin{theorem}\label{thm:in-the-same-coset-translation-part-diff}
Let $H \leq \affine(\reals^n)$, $T = \tra(\reals^n) \cap H$.
If $h,h' \in H$ are in the same coset in $H/T$, then $\tau(h')-\tau(h) \in \tau(T) := \{u \mid t_u \in T\}$.
\end{theorem}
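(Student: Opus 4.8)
The plan is to exhibit an explicit element of $T$ whose translation part is $\tau(h') - \tau(h)$. The natural candidate is $t := h' \circ h^{-1}$: since $H$ is a subgroup and $h, h' \in H$, automatically $t \in H$, so it suffices to show that $t$ is a \emph{pure} translation, namely $t = t_{\tau(h')-\tau(h)}$; then $t \in \tra(\reals^n) \cap H = T$ and the definition $\tau(T) := \{u \mid t_u \in T\}$ yields $\tau(h') - \tau(h) = \tau(t) \in \tau(T)$.

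Concretely, first I would invoke Theorem~\ref{thm:in-the-same-coset-means-same-linear-part}: because $h$ and $h'$ lie in the same coset of $H/T$, they share a linear part, say $\ell(h) = \ell(h') =: A$, so $h = f_{A,u}$ and $h' = f_{A,u'}$ with $u = \tau(h)$, $u' = \tau(h')$. Then a one-line computation using the affine-pair identities in \eqref{eqn:composition-and-inverse-of-affine-pair} gives $h^{-1} = f_{A^{-1},-A^{-1}u}$ and
\begin{align*}
h' \circ h^{-1} = f_{A,u'} \circ f_{A^{-1},-A^{-1}u} = f_{A A^{-1},\, A(-A^{-1}u) + u'} = f_{I,\, u'-u},
\end{align*}
i.e. $t = t_{u'-u}$ is translation by $u' - u = \tau(h') - \tau(h)$. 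Combining this with $t \in H$ finishes the argument as described above.

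An equally short alternative avoids Theorem~\ref{thm:in-the-same-coset-means-same-linear-part} entirely: being in the same coset means $h' = t_0 \circ h$ for some $t_0 \in T$, and then \eqref{eqn:composition-and-inverse-of-affine-pair} gives $\tau(h') = \tau(t_0 \circ h) = \tau(h) + \tau(t_0)$, so $\tau(h') - \tau(h) = \tau(t_0) \in \tau(T)$. I do not anticipate a genuine obstacle here; the only points requiring a little care are the coset convention (left versus right cosets, which is immaterial since $T \trianglelefteq H$) and remembering to use closure of the subgroup $H$ to land $h' \circ h^{-1}$ inside $H$ so that it actually belongs to $T$ rather than merely to $\tra(\reals^n)$.
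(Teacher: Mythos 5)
Your main argument is correct and is essentially the paper's proof: both rest on the observation that being in the same coset means $h'\circ h^{-1}\in T$, combined with the identities in \eqref{eqn:composition-and-inverse-of-affine-pair} and Theorem~\ref{thm:in-the-same-coset-means-same-linear-part} to see that the translation part of $h'\circ h^{-1}$ is $\tau(h')-\tau(h)$. Your alternative one-liner ($h'=t_0\circ h$ with $t_0\in T$, so $\tau(h')=\tau(h)+\tau(t_0)$) is also valid and is in fact slightly leaner than the paper's route, since it bypasses Theorem~\ref{thm:in-the-same-coset-means-same-linear-part} altogether.
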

\begin{proof}
See Appendix~\ref{app:in-the-same-coset-translation-part-diff}.
\end{proof}
\begin{remark}
Theorems~\ref{thm:in-the-same-coset-means-same-linear-part} and \ref{thm:in-the-same-coset-translation-part-diff} present two characterizations of elements in the same coset in $H/T$, respectively. The former, through the linear part, is an if-and-only-if characterization; while the latter, through the translation part, is a necessary but not sufficient characterization.
\end{remark}

\begin{theorem}\label{thm:characterize-linear-part}
Let $H \leq \affine(\reals^n)$, $T = \tra(\reals^n) \cap H$. Then $H/T \cong \ell(H)$.
\end{theorem}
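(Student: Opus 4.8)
The plan is to invoke the First Isomorphism Theorem for groups, applied to the homomorphism $\ell$ restricted to $H$. By Lemma~\ref{lemma:linear-part-is-homomorphism}, $\ell: \affine(\reals^n) \to \genlin_n(\reals)$ is a group homomorphism, hence so is its restriction $\ell|_H : H \to \genlin_n(\reals)$, whose image is by definition $\ell(H) \leq \genlin_n(\reals)$. First I would identify the kernel: $\ker(\ell|_H) = \{h \in H \mid \ell(h) = I\}$, which consists exactly of the affine transformations in $H$ with trivial linear part, i.e.\ the maps $f_{I,u}(x) = x+u$ that lie in $H$, namely the pure translations in $H$; this is precisely $T = \tra(\reals^n) \cap H$. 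Since $T \trianglelefteq H$ (as already noted), the First Isomorphism Theorem then yields $H/T = H/\ker(\ell|_H) \cong \ell(H)$.

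Alternatively, and perhaps more in keeping with the self-contained style of the preceding development, one can exhibit the isomorphism explicitly using Theorem~\ref{thm:in-the-same-coset-means-same-linear-part}. Define $\varphi: H/T \to \ell(H)$ by $\varphi(T \circ h) := \ell(h)$. Theorem~\ref{thm:in-the-same-coset-means-same-linear-part} states that $h, h' \in H$ lie in the same coset in $H/T$ if and only if $\ell(h) = \ell(h')$: the ``only if'' direction makes $\varphi$ well-defined, and the ``if'' direction makes $\varphi$ injective. Surjectivity onto $\ell(H)$ is immediate from the definition of $\ell(H)$ as the image of $H$ under $\ell$. Finally, $\varphi$ is a homomorphism, since $\varphi\bigl((T\circ h)(T\circ h')\bigr) = \varphi\bigl(T \circ (h \circ h')\bigr) = \ell(h \circ h') = \ell(h)\,\ell(h') = \varphi(T\circ h)\,\varphi(T\circ h')$, using that $\ell$ is a homomorphism (Lemma~\ref{lemma:linear-part-is-homomorphism}) and that coset multiplication in $H/T$ is well-defined because $T \trianglelefteq H$. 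Hence $\varphi$ is a group isomorphism and $H/T \cong \ell(H)$.

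There is essentially no hard step here: the only points requiring any care are the identification $\ker(\ell|_H) = T$ (immediate from the form $f_{A,u}(x) = Ax+u$, since $A = I$ forces $f_{A,u}$ to be a translation), and the well-definedness and injectivity of $\varphi$, which are exactly the content of the already-proven Theorem~\ref{thm:in-the-same-coset-means-same-linear-part}. So the ``main obstacle'' is merely to assemble these pieces correctly; I would present the explicit-isomorphism version, as it directly reuses Theorem~\ref{thm:in-the-same-coset-means-same-linear-part} and keeps the argument elementary.
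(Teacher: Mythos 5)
Your proposal is correct, and your preferred explicit-isomorphism argument is essentially the paper's own proof: the paper defines $\bar{\ell}(T \circ h) = \ell(h)$ and uses Theorem~\ref{thm:in-the-same-coset-means-same-linear-part} together with Lemma~\ref{lemma:linear-part-is-homomorphism} exactly as you do, while your First Isomorphism Theorem route is the alternative the paper itself mentions in the remark following the theorem. No gaps.
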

\begin{proof}
It is clear that $\ell(H) \leq \genlin_n(\reals)$, since $H \leq \affine(\reals^n)$ and $\ell$ is a homomorphism (Lemma~\ref{lemma:linear-part-is-homomorphism}) which preserves subgroups.
Let $\bar{\ell}: H/T \to \ell(H)$ be the function of the form $\bar{\ell}(T \circ h) = \ell(h)$, we claim that $\bar{\ell}$ is an isomorphism. To see this, for any $T\circ h, T \circ h' \in H/T$,
\begin{align*}
\bar{\ell}((T\circ h)(T \circ h')) = \bar{\ell}(T \circ (h\circ h')) = \ell(h\circ h') = \ell(h)\ell(h') = \bar{\ell}(T \circ h) \bar{\ell}(T \circ h'),
\end{align*}
which implies $\bar{\ell}$ is a homomorphism. Further, for any $T\circ h, T \circ h' \in H/T$, if $\bar{\ell}(T\circ h) = \bar{\ell}(T\circ h')$, then $\ell(h) = \ell(h')$. By Theorem~\ref{thm:in-the-same-coset-means-same-linear-part}, this implies that $T \circ h = T \circ h'$, so $\bar{\ell}$ is injective.
Lastly, for any $A \in \ell(H)$, there exists an $h \in H$ such that $\ell(h) = A$. For this particular $h$, $T \circ h \in H/T$, and $\bar{\ell}(T\circ h) = \ell(h)=A$. This implies that $\bar{\ell}$ is surjective.
\end{proof}
\begin{remark}
Theorem~\ref{thm:characterize-linear-part} can be proved directly from the first isomorphism theorem, by recognizing $\ell|_H$ is a homomorphism whose kernel and image are $T$ and $\ell(H)$, respectively. However, the above proof explicitly gives the isomorphism $\bar{\ell}$ which is useful in the sequel.
\end{remark}

\begin{theorem}[Compatibility]\label{thm:compatibility-via-group-action}
Let $H \leq \affine(\reals^n)$, $T = \tra(\reals^n) \cap H$.
For any $A \in \ell(H)$ and $v \in \tau(T)$, we have $Av \in \tau(T)$.
Further, if we define a function $\cdot: \ell(H) \times \tau(T) \to \tau(T)$ of the form $(A,v) \mapsto Av$, then $\cdot$ is a group action of $\ell(H)$ on $\tau(T)$.
\end{theorem}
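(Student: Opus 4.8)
The plan is to prove closure first (namely $Av \in \tau(T)$ whenever $A \in \ell(H)$ and $v \in \tau(T)$), and only then the two group-action axioms, which become routine once closure is in hand. Throughout I will use that $\tau(T) = \{u \in \reals^n \mid t_u \in H\}$, since $T = \tra(\reals^n) \cap H$ and every translation is automatically a pure translation.

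For closure, the idea is to realize the vector $Av$ as the translation part of a conjugate. Given $A \in \ell(H)$, pick $h \in H$ with $\ell(h) = A$; under the identification of Section~\ref{sec:the-affine-transformation-group}, write $h = (w, A)$ for some $w \in \reals^n$ and $t_v = (v, I)$. Using the composition and inverse formulas in \eqref{eqn:composition-and-inverse-of-affine-pair}, one computes
\begin{align*}
h \circ t_v \circ h^{-1} = (w, A)(v, I)(-A^{-1}w, A^{-1}) = (w + Av, A)(-A^{-1}w, A^{-1}) = (Av, I) = t_{Av}.
\end{align*}
Since translations are normal in affine transformations, $T \trianglelefteq H$; as $t_v \in T$ and $h \in H$, the conjugate $h \circ t_v \circ h^{-1} = t_{Av}$ lies in $T$, hence $Av \in \tau(T)$. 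This simultaneously shows the map $(A,v) \mapsto Av$ is well-defined as a function $\ell(H) \times \tau(T) \to \tau(T)$.

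For the action axioms, note first that $\ell(H) \leq \genlin_n(\reals)$ is a group by Lemma~\ref{lemma:linear-part-is-homomorphism}, with identity $I$ (because $\id \in H$ and $\ell(\id) = I$). Then $I \cdot v = Iv = v$ for all $v \in \tau(T)$, and for $A, B \in \ell(H)$ we have $(AB)\cdot v = (AB)v = A(Bv) = A \cdot (B \cdot v)$ by associativity of matrix multiplication, with each intermediate vector lying in $\tau(T)$ by the closure just proved. Hence $\cdot$ is a group action of $\ell(H)$ on $\tau(T)$.

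The only substantive step is the conjugation identity $h \circ t_v \circ h^{-1} = t_{Av}$ together with $T \trianglelefteq H$; I do not anticipate a genuine obstacle. The one subtlety worth flagging is that the codomain must be $\tau(T)$ and not merely $\tau(H)$ — it is precisely the normality of $T$ in $H$ (rather than just $T \leq H$) that forces the conjugate back into $T$, and thus the vector $Av$ back into $\tau(T)$.
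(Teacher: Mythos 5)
Your proof is correct and follows essentially the same route as the paper: both realize $Av$ as the translation part of the conjugate $h \circ t_v \circ h^{-1} = t_{Av}$ (computed via Equation~\eqref{eqn:composition-and-inverse-of-affine-pair}), invoke $T \trianglelefteq H$ to place this conjugate back in $T$, and then verify the two action axioms as routine matrix-vector identities. No gaps.
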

\begin{proof}
See Appendix~\ref{app:compatibility-via-group-action}.
\end{proof}

So far, we have seen that for any subgroup $H \leq \affine(\reals^n)$, its subset of pure translations $T := \tra(\reals^n) \cap H$ is a normal subgroup of $H$; $T$ is also a normal subgroup of $\tra(\reals^n)$, since $\tra(\reals^n)$ is a commutative group.
As a result, both quotient groups $H/T$ and $\tra(\reals^n)/T$ are well-defined.
We next introduce a function, called a \emph{vector system}, which connects the two quotient groups.
It turns out that vector systems comprise the last piece of information that leads to a complete identification of every subgroup of $\affine(\reals^n)$.
Note that $H/T \cong \ell(H)$ (Theorem~\ref{thm:characterize-linear-part}) and $\tra(\reals^n)/T \cong \reals^n/\tau(T)$; thus for conceptual ease (think in terms of matrices and vectors), we introduce vector systems connecting $\ell(H)$ and $\reals^n/\tau(T)$ instead.

\begin{definition}[Vector system]\label{def:vector-system}
For any $L\leq \genlin_n(\reals)$ and $V \leq \reals^n$, an $(L,V)$-vector system is a function $\xi: L \to \reals^n/V$, which in addition satisfies the following two conditions:
\begin{itemize}
\setlength\itemsep{0in}
\item [1.] compatibility condition: for any $A \in L$, $AV = \{Av \mid v\in V\} = V$;
\item [2.] cocycle condition: for any $A,A' \in L$, $\xi(AA') = \xi(A) + A\xi(A') $.
\end{itemize}
\underline{Note:} elements in $\reals^n/V$ are cosets of the form $V + u$ for $u \in \reals^n$. It is easy to check: for any two cosets in $\reals^n/V$, the sum
\begin{align*}
(V+u)+(V+u') = \{v+u+v'+u' \mid v,v' \in V\} = V + (u + u');
\end{align*}
for any $A \in L$ and any coset in $\reals^n/V$, the product
\begin{align*}
A(V+u) = \{A(v+u) \mid v \in V\} = V + Au.
\end{align*}
So, the sum and product in the cocycle condition are defined in the above sense.
\end{definition}
We use $\Xi_{L,V}$ to denote the family of all $(L,V)$-vector systems.
One can check that $\Xi_{L,V} \neq \emptyset$ if and only if $L,V$ are compatible (consider the trivial vector system $\xi_{L,V}^{0}$ given by $\xi_{L,V}^{0}(A) = V$ for all $A \in L$).
We use $\Xi^{*} := \{\Xi_{L,V} \mid L \leq \genlin_n(\reals), V \leq \reals^n \mbox{ compatible}\}$ to denote the universe of all vector systems.
\begin{remark}
The universe of all vector systems $\Xi^*$ can be parameterized by the set of compatible pairs $(L,V) \in \hcal_{\genlin_n(\reals)}^{*} \times \hcal_{\reals^n}^{*}$.
The reason is straightforward: $L$ and $V$ respectively define the domain and codomain of a function, and two functions are different if either their domains or their codomains are different.
\end{remark}

\begin{lemma}\label{lemma:vector-system-properties}
Let $L\leq \genlin_n(\reals)$, $V \leq \reals^n$, and $\xi \in \Xi_{L,V}$, then
\begin{itemize}
\setlength\itemsep{0in}
\item [1.] for the identity matrix $I \in L$, $\xi(I) = V$;
\item [2.] for any $A \in L$, $\xi(A^{-1}) = -A^{-1}\xi(A)$.
\end{itemize}
\end{lemma}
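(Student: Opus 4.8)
The plan is to treat $\xi$ as a (twisted) $1$-cocycle and extract both identities by feeding special arguments into the cocycle condition $\xi(AA') = \xi(A) + A\xi(A')$. Two preliminary facts set the stage. First, $\reals^n/V$ is an abelian group under the coset addition $(V+u)+(V+u') = V+(u+u')$ spelled out in the Note following Definition~\ref{def:vector-system}, and its identity element is the coset $V$ itself (i.e.\ $V = V+\bm{0}$), while the additive inverse of $V+u$ is $V+(-u)=:-(V+u)$. Second, for any $A \in L$ the map $V+u \mapsto A(V+u) = V+Au$ is well defined (this is exactly the compatibility condition $AV=V$, which also gives $A^{-1}V = V$ since $A^{-1}\in L$) and is a group automorphism of $\reals^n/V$; in particular it fixes the identity coset $V$ and commutes with taking inverses, $A(-(V+u)) = -(A(V+u))$.

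For Statement~1, I would substitute $A = A' = I$ into the cocycle condition. Since $I\cdot I = I$ and the identity matrix acts trivially on cosets, this reads $\xi(I) = \xi(I) + \xi(I)$ in $\reals^n/V$. Adding $-\xi(I)$ to both sides yields $\xi(I) = V$, the identity coset.

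For Statement~2, I would substitute $A' = A^{-1}$, which is legitimate because $L$ is a subgroup and hence $A^{-1}\in L$. The cocycle condition becomes $\xi(AA^{-1}) = \xi(A) + A\xi(A^{-1})$, and the left-hand side equals $\xi(I) = V$ by Statement~1. Therefore $V = \xi(A) + A\xi(A^{-1})$ in $\reals^n/V$, so $A\xi(A^{-1}) = -\xi(A)$. Applying the automorphism ``multiply by $A^{-1}$'' to both sides and using that it commutes with inverses gives $\xi(A^{-1}) = A^{-1}\bigl(-\xi(A)\bigr) = -A^{-1}\xi(A)$, as claimed.

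There is essentially no real obstacle here; the only point requiring care is bookkeeping, namely that all arithmetic takes place in the quotient $\reals^n/V$ rather than in $\reals^n$. One must therefore invoke compatibility to know that the matrix action on cosets is well defined and behaves linearly (it respects sums and negation of cosets), which is precisely the content of the Note following Definition~\ref{def:vector-system}; once that is in hand the two identities are immediate specializations of the cocycle relation.
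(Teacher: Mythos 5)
Your proof is correct and follows essentially the same route as the paper: both results are obtained by specializing the cocycle condition and doing arithmetic in the quotient $\reals^n/V$ (the paper plugs in $\xi(A)=\xi(IA)$ and $\xi(A^{-1}A)$ and works with coset representatives, while you use $A=A'=I$ and $\xi(AA^{-1})$ and cancel abstractly, which is only a cosmetic difference). Your explicit remarks that compatibility makes the matrix action on cosets well defined and compatible with negation are exactly the bookkeeping the paper's representative-based computation carries out implicitly.
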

\begin{proof}
See Appendix~\ref{app:vector-system-properties}.
\end{proof}

\begin{theorem}[Affine subgroup identification]\label{thm:affine-group-identification}
Let
\begin{align*}
\Sigma := \{(L,V,\xi) \mid L \leq \genlin_n(\reals), V \leq \reals^n, \xi \in \Xi_{L,V}\},
\end{align*}
then there is a bijection between $\hcal_{\affine(\reals^n)}^{*}$ and $\Sigma$.
\end{theorem}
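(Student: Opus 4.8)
The plan is to construct the bijection explicitly in both directions and verify they are mutual inverses. Given a subgroup $H \leq \affine(\reals^n)$, I would map it to the triple $(L, V, \xi)$ where $L := \ell(H)$, $V := \tau(T)$ with $T = \tra(\reals^n) \cap H$, and $\xi: L \to \reals^n/V$ is the function that sends $A \in \ell(H)$ to the coset $V + \tau(h)$, where $h$ is any element of $H$ with $\ell(h) = A$. The earlier theorems do most of the work here: Theorem~\ref{thm:characterize-linear-part} gives $L = \ell(H) \leq \genlin_n(\reals)$, and since $T \leq \tra(\reals^n) \cong \reals^n$ we get $V = \tau(T) \leq \reals^n$; Theorem~\ref{thm:compatibility-via-group-action} gives the compatibility condition $AV = V$ for all $A \in L$; Theorem~\ref{thm:in-the-same-coset-translation-part-diff} shows $\xi$ is well-defined (if $\ell(h) = \ell(h')$ then $\tau(h') - \tau(h) \in V$, so $V + \tau(h) = V + \tau(h')$); and the composition rule~\eqref{eqn:composition-and-inverse-of-affine-pair} gives the cocycle condition, since picking representatives $h, h'$ with $\ell(h) = A, \ell(h') = A'$ yields $\ell(h \circ h') = AA'$ and $\tau(h \circ h') = \tau(h) + A\tau(h')$, hence $\xi(AA') = V + \tau(h) + A\tau(h') = \xi(A) + A\xi(A')$. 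So $\xi \in \Xi_{L,V}$, and $(L, V, \xi) \in \Sigma$.

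For the reverse map, given a triple $(L, V, \xi) \in \Sigma$, I would define
\begin{align*}
H_{(L,V,\xi)} := \{f_{A,u} \mid A \in L, \ u \in \xi(A)\},
\end{align*}
that is, all affine transformations whose linear part lies in $L$ and whose translation part lies in the coset $\xi(A) = V + (\text{something})$ of $\reals^n/V$. I would check this is a subgroup: closure follows from the cocycle condition (if $u \in \xi(A)$ and $u' \in \xi(A')$, then using the coset arithmetic spelled out in Definition~\ref{def:vector-system}, $u + Au' \in \xi(A) + A\xi(A') = \xi(AA')$, so $f_{A,u} \circ f_{A',u'} = f_{AA', u+Au'} \in H$); the identity $f_{I,0}$ lies in $H$ because $\xi(I) = V \ni 0$ by Lemma~\ref{lemma:vector-system-properties}(1); and closure under inverses follows from Lemma~\ref{lemma:vector-system-properties}(2), since $f_{A,u}^{-1} = f_{A^{-1}, -A^{-1}u}$ and $-A^{-1}u \in -A^{-1}\xi(A) = \xi(A^{-1})$ (using $u \in \xi(A)$ and the coset-scaling identity $A^{-1}\xi(A) = V + A^{-1}\cdot(\text{rep})$, valid because $A^{-1}V = V$ by compatibility).

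The last step is to verify the two maps are mutually inverse. Starting from $H$, forming $(L, V, \xi)$, and rebuilding $H_{(L,V,\xi)}$: an affine transformation $f_{A,u}$ lies in $H_{(L,V,\xi)}$ iff $A \in \ell(H)$ and $u \in V + \tau(h)$ for the chosen representative $h$; I must check this is equivalent to $f_{A,u} \in H$. The forward inclusion uses that if $f_{A,u} \in H$ then $\ell(f_{A,u}) = A \in \ell(H)$ and $\tau(f_{A,u}) = u$, and $f_{A,u}$ lies in the same coset of $H/T$ as $h$ (Theorem~\ref{thm:in-the-same-coset-means-same-linear-part}), so $u - \tau(h) \in V$ by Theorem~\ref{thm:in-the-same-coset-translation-part-diff}; conversely, if $A \in \ell(H)$ and $u \in V + \tau(h)$, write $u = v + \tau(h)$ with $v \in V = \tau(T)$, so $t_v \in T \leq H$ and $h \in H$, whence $t_v \circ h \in H$ has linear part $A$ and translation part $v + \tau(h) = u$, i.e.\ $t_v \circ h = f_{A,u} \in H$. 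Starting instead from $(L, V, \xi)$, building $H_{(L,V,\xi)}$, and extracting its triple: one checks $\ell(H_{(L,V,\xi)}) = L$ (surjectivity needs each $\xi(A)$ nonempty, which holds since it is a coset), $\tau(\tra(\reals^n) \cap H_{(L,V,\xi)}) = V$ (pure translations in $H$ correspond to $A = I$, and $\xi(I) = V$), and the recovered vector system agrees with $\xi$ by construction. I expect the main obstacle to be bookkeeping: being careful that $\xi$ takes values in cosets (elements of $\reals^n/V$, not $\reals^n$), so that statements like ``$u \in \xi(A)$'' and the coset arithmetic in the cocycle and inverse conditions are handled consistently; none of the individual verifications is deep, but there are several of them and the well-definedness of $\xi$ (independence of the choice of representative $h$) is the linchpin that makes the whole correspondence coherent.
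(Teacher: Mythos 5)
Your proposal is correct and follows essentially the same route as the paper's proof: the same map $H \mapsto (\ell(H), \tau(T), \xi_H)$ with $\xi_H(A) = \tau(T)+\tau(h)$, the same inverse $(L,V,\xi) \mapsto \{f_{A,u} \mid A \in L,\, u \in \xi(A)\}$, and the same supporting facts (Theorems~\ref{thm:in-the-same-coset-means-same-linear-part}, \ref{thm:in-the-same-coset-translation-part-diff}, \ref{thm:compatibility-via-group-action}, Lemma~\ref{lemma:vector-system-properties}, and Equation~\eqref{eqn:composition-and-inverse-of-affine-pair}). The only difference is organizational---you verify that the two maps are mutual inverses, while the paper checks injectivity and surjectivity of $\Psi$ directly---which is an equivalent packaging of the same argument.
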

\begin{proof}
(Outline)
Let $\Psi: \hcal_{\affine(\reals^n)}^{*} \to \Sigma$ be the function defined by
\begin{align*}
\Psi(H) := (\ell(H),~ \tau(T),~ \xi_H) \quad \mbox{ for any } H \in \hcal_{\affine(\reals^n)}^{*},
\end{align*}
where $T := \tra(\reals^n) \cap H$, and $\xi_H: \ell(H) \to \reals^n/\tau(T)$ is given by $\xi_H(A) = \tau(\bar{\ell}^{-1}(A))$ with $\bar{\ell}: H/T \to \ell(H)$ being the isomorphism defined in the proof of Theorem~\ref{thm:characterize-linear-part}.
The plan is to first show that $\Psi$ is well-defined, and then to show that it is bijective; in particular, we will show that the inverse function
\begin{align*}
\Psi^{-1}((L,V,\xi)) = \{f_{A,u} \in \affine(\reals^n) \mid A \in L, u \in \xi(A)\} \quad \mbox{ for any } (L,V,\xi) \in \Sigma.
\end{align*}
The entire proof is divided into four parts.
We relegate the full proof to Appendix~\ref{app:affine-group-identification}.
\end{proof}
\begin{remark}\label{remark:enumerate-affine-subgroups}
The bijection $\Psi$ from $\hcal_{\affine(\reals^n)}^{*}$ to $\Sigma$ allows us to use the latter to parameterize the former.
Further, through the inverse function $\Psi^{-1}$, we can enumerate affine subgroups by enumerating triplets $(L,V,\xi) \in \Sigma$, or more specifically, by enumerating matrix subgroups of $\genlin_n(\reals)$, vector subgroups of $\reals^n$, and then vector systems for every compatible pair of a matrix subgroup and a vector subgroup.
Note that enumeration for each element in the triplet is still not practical if no restriction is imposed.
Nevertheless, we have broken the original subgroup enumeration problem into three smaller enumeration problems.
More importantly, we are now more directed in imposing restrictions on both subgroups and spaces, under which the three smaller enumerations become practical.
We will discuss these restrictions (\eg being isometric, finite, discrete, compact) in more detail in the sequel.
\end{remark}

\subsection{The Isometry Group $\iso(\reals^n)$}
\label{sec:isometries-of-rn}

One way to restrict $\Sigma := \{(L,V,\xi) \mid L \leq \genlin_n(\reals), V \leq \reals^n, \xi \in \Xi_{L,V}\}$ is to consider a special subgroup of $\genlin_n(\reals)$.
Instead of all subgroups of $\genlin_n(\reals)$, we consider only subgroups consisting of \emph{orthogonal matrices}.
This restriction gives rise to the subgroup lattice $\hcal_{\iso(\reals^n)}^{*}$ where $\iso(\reals^n)$ denotes the group of isometries of $\reals^n$.
In this subsection, we first give an overview of $\iso(\reals^n)$, and then cast $\hcal_{\iso(\reals^n)}^{*}$ in the big picture of $\hcal_{\affine(\reals^n)}^{*}$ and $\Sigma$.

An \emph{isometry} of $\reals^n$, with respect to the Euclidean distance $\dist$, is a transformation $h:\reals^n \to \reals^n$ which preserves distances: $\dist(h(x),h(x')) = \dist(x,x')$, for all $x,x' \in \reals^n$.
We use $\iso(\reals^n)$ to denote the set of all isometries of $\reals^n$, which is a subgroup of the transformation group $\transf(\reals^n)$.
So, we call $\iso(\reals^n)$ the \emph{isometry group} of $\reals^n$.

A \emph{(generalized) rotation} of $\reals^n$ is a linear transformation $r_A: \reals^n \to \reals^n$ given by $x \mapsto Ax$, for some orthogonal matrix $A \in \orthmat_n(\reals) := \{A\in \reals^{n\times n}\mid A^\top = A^{-1}\} \leq \genlin_n(\reals)$.
We use $\rot(\reals^n)$ to denote the set of all rotations of $\reals^n$, which is a subgroup of the linear transformation group $\lin(\reals^n)$.
So, we call $\rot(\reals^n)$ the \emph{rotation group} of $\reals^n$.

There are two key characterizations of $\iso(\reals^n)$.
The first one regards its components:
\begin{align*}
\iso(\reals^n) = \langle \tra(\reals^n) \cup \rot(\reals^n) \rangle \quad \mbox{ where } \tra(\reals^n) \cap \rot(\reals^n) = \{\id\}.
\end{align*}
This characterization says that $\iso(\reals^n)$ comprises exclusively translations, rotations, and their finite compositions.
Note that we can rewrite the above characterization as $\tra(\reals^n) \vee \rot(\reals^n) = \iso(\reals^n)$ and $\tra(\reals^n) \wedge \rot(\reals^n) = \{\id\}$.
This determines the positions of the four subgroups $\{\id\}$, $\tra(\reals^n)$, $\rot(\reals^n)$, and $\iso(\reals^n)$ in the subgroup lattice $(\hcal_{\transf(\reals^n)}^{*}, \leq)$, which forms a diamond shape in the direct acyclic graph in Figure~\ref{fig:subgroups-and-spaces}a.
The second characterization of $\iso(\reals^n)$ regards a unique representation for every isometry of $\reals^n$, which is done by a group decomposition of $\iso(\reals^n)$ as semi-direct products:
\begin{align*}
\iso(\reals^n) = \tra(\reals^n)\circ \rot(\reals^n) \cong \reals^n \rtimes \orthmat_n(\reals).
\end{align*}
This characterization says that every isometry of $\reals^n$ can be uniquely represented as an affine transformation $f_{A,u} \in \affine(\reals^n)$ where $A \in \orthmat_n(\reals)$ and $u \in \reals^n$. This further implies that $\iso(\reals^n)$ is a special subgroup of $\affine(\reals^n)$.

Let $\Psi: \hcal_{\affine(\reals^n)}^{*} \to \Sigma$ be the bijection defined in the proof of Theorem~\ref{thm:affine-group-identification}, and let
\begin{align*}
\Sigma' := \{(L,V,\xi) \mid L \leq \orthmat_n(\reals), V \leq \reals^n, \xi \in \Xi_{L,V}\} \subseteq \Sigma.
\end{align*}
One can check: $\Psi^{-1}(\Sigma') = \hcal_{\iso(\reals^n)}^{*}$.
This means $\Psi|_{\hcal_{\iso(\reals^n)}^{*}}: \hcal_{\iso(\reals^n)}^{*} \to \Sigma'$ is well-defined and bijective.
Therefore, the subgroups of $\iso(\reals^n)$ can be enumerated by the triplets in $\Sigma'$ in a similar manner as in Remark~\ref{remark:enumerate-affine-subgroups}.
The only difference is that we now enumerate subgroups of $\orthmat_n(\reals)$ instead of the entire $\genlin_n(\reals)$.

Note that restricting to subgroups of $\orthmat_n(\reals)$ does not really make the enumeration problem practical. However, there are many ways of imposing additional restrictions on $\iso(\reals^n)$ to eventually achieve practical enumerations.
We want to point out that there is no universal way of constraining the infinite enumeration problem into a practical one: the design of restrictions is most effective if it is consistent with the underlying topic domain.
So, for instance, one can start with his/her intuition to try out some restrictions whose effectivenesses can be verified via a subsequent learning process (cf.\ Section~\ref{sec:discussion-to-info-lattice-and-learning}).
In the next subsection, we give two examples to illustrate some of the existing design choices that have been made in two different domains.

\subsection{Special subgroups of $\iso(\reals^n)$ used in Chemistry and Music}
\label{sec:special-subgroups-chemistry-and-music}

From two examples, we show how additional restrictions can be imposed to yield a finite collection of subgroups of $\iso(\reals^n)$, capturing different parts of the infinite subgroup lattice $\hcal_{\iso(\reals^n)}^{*}$.
The two examples are from two different topic domains: one is from chemistry (more precisely, crystallography), the other is from music.
The ways of adding restrictions in these two examples are quite different: one introduces conjugacy relations to obtain a finite collection of subgroup types; the other restricts the space to be discrete or even finite.

\subsubsection{Crystallographic Space Groups}
\label{sec:space-groups}

In crystallography, symmetry is used to characterize crystals, to identify repeating parts of molecules, and to simplify both data collection and subsequent calculations.
Further, the symmetry of physical properties of a crystal such as thermal conductivity and optical activity has a strong connection with the symmetry of the crystal.
So, a thorough knowledge of symmetry is crucial to a crystallographer.
A complete set of symmetry classes is captured by a collection of 230 unique 3-dimensional space groups.
However, space groups represent a special type of subgroups of $\iso(\reals^n)$ which can be defined in general for any dimension.

We give a short review of known results from crystallography, and then identify space groups in the parametrization set $\Sigma$ that we derived earlier.
A \emph{crystallographic space group} or \emph{space group} $\Gamma$ is a discrete (with respect to the subset topology) and cocompact (\ie the abstraction space $\pi(\Gamma) := \reals^n/\Gamma$ is compact with respect to the quotient topology) subgroup of $\iso(\reals^n)$.
So, if the underlying topic domain indeed considers only compact abstractions, space groups are good candidates. A major reason is that for a given dimension, there exist only finitely many space groups (up to isomorphism or affine conjugacy) by Bieberbach's second and third theorems \citep{Bieberbach1911,Charlap2012}.

Bieberbach's first theorem \citep{Bieberbach1911,Charlap2012} gives an equivalent characterization of space groups: a subgroup $\Gamma$ of $\iso(\reals^n)$ is a space group if $T: = \tra(\reals^n) \cap \Gamma$ is isomorphic to $\integers^n$ and $\tau(T)$ spans $\reals^n$. In particular, for a space group $\Gamma$ in standard form, we have $\ell(\Gamma) \leq \orthmat_n(\integers)$, $\tau(T) = \integers^n$ \citep{EickS2006}.
Therefore, we can use
\begin{align*}
\Sigma''_{cryst} := \{(L,V,\xi) \mid L \leq \orthmat_n(\integers), V = \integers^n, \xi \in \Xi_{L,V}\} \subseteq \Sigma' \subseteq \Sigma
\end{align*}
to parameterize the set of all space groups in standard form.
We will soon (in Section~\ref{sec:isometries-of-zn}) see that $|\orthmat_n(\integers)| = n!2^n$ which is finite.
For every $L \leq \orthmat_n(\integers)$, the enumeration of vector systems $\xi \in \Xi_{L,\integers^n}$ is also made feasible in \citet{Zassenhaus1948} by identifying orbits in $H^1(L,\reals^n/\integers^n)$ under the group action of ${\rm N}_{\genlin_n(\integers)}(L)$ on $H^1(L,\reals^n/\integers^n)$, where $H^1(L,\reals^n/\integers^n)$ is the first cohomology group of $L$ with values in $\reals^n/\integers^n$ and ${\rm N}_{\genlin_n(\integers)}(L)$ is the integral normalizer of $L$.
We refer interested readers to the original Zassenhaus algorithm \citep{Zassenhaus1948} and the GAP package CrystCat \citep{FelschG2000} for more details on the algorithmic implementation of space groups.

\subsubsection{Isometries of $\integers^n$ in Music}
\label{sec:isometries-of-zn}

Another example of obtaining a finite collection of subgroups of $\iso(\reals^n)$ comes from computational music theory.
This is an extension to our earlier work on building an automatic music theorist \citep{YuVGK2016, YuV2017, YuLV2017}.
In this example, we impose restrictions on the space, focusing on discrete subsets of $\reals^n$ that represent music pitches from equal temperament.
Restrictions on the space further result in restrictions on the subgroups under consideration, namely only those subgroups that \emph{stabilize} the restricted subsets of $\reals^n$.
We start our discussion on isometries of $\integers^n$, while further restrictions for a finite discrete subspace such as $\integers_{[a,b]}^n$ or $\integers_{[-b,b]}^n$ (Figure~\ref{fig:subgroups-and-spaces}b) will be presented in Section~\ref{sec:restrict-to-finite-subspaces}.
%
%
We first introduce a few definitions regarding the space $\integers^n$ in parallel with their counterparts regarding $\reals^n$, and then establish their equivalences under \emph{restricted setwise stabilizers}.


\begin{definition}
An isometry of $\integers^n$, with respect to the Euclidean distance $\dist$ (or more precisely $\dist|_{\integers^n}$) is a function $h': \integers^n \to \integers^n$ which preserves distances: $\dist(h'(x),h'(x')) = \dist(x,x')$, for all $x,x' \in \integers^n$. We use $\iso(\integers^n)$ to denote the set of all isometries of $\integers^n$.
\end{definition}

\begin{definition}
A translation of $\integers^n$ is a function $t'_u: \integers^n \to \integers^n$ of the form $x \mapsto x + u$, where $u \in \integers^n$. We use $\tra(\integers^n)$ to denote the set of all translations of $\integers^n$.
\end{definition}


\begin{definition}
A (generalized) rotation of $\integers^n$ is a function $r'_A: \integers^n \to \integers^n$ of the form $x \mapsto Ax$, where $A \in \orthmat_n(\integers) := \{A \in \integers^{n\times n}\mid A^\top = A^{-1}\}$. We use $\rot(\integers^n)$ to denote the set of all rotations of $\integers^n$.
\end{definition}
It is easy to check that $(\tra(\integers^n), \circ)$ is isomorphic to $(\integers^n, +)$, and $(\rot(\integers^n), \circ)$ is isomorphic to $(\orthmat_n(\integers), \cdot)$; further, $\tra(\integers^n), \rot(\integers^n) \leq \transf(\integers^n)$, and $\tra(\integers^n), \rot(\integers^n) \subseteq \iso(\integers^n)$, so translations and rotations of $\integers^n$ are transformations and are also isometries. However, we do not know yet whether $(\iso(\integers^n), \circ)$ is a group or whether $\iso(\integers^n) \subseteq \transf(\integers^n)$. It turns out that the results are indeed positive, \ie $\iso(\integers^n) \leq \transf(\integers^n)$, but we need more steps to see this.

\begin{definition}
Let $G \leq \transf(X)$, $Y \subseteq X$, and $G_Y := \{g \in G \mid g(Y) = Y\}$ be the setwise stabilizer of $Y$ under $G$.
The restricted setwise stabilizer of $Y$ under $G$ is the set
\begin{align*}
G_Y|_Y := \{g|_Y\mid g \in G_Y\},
\end{align*}
where $g|_Y: Y \to Y$ is the (surjective) restriction of the function $g$ to $Y$.
\end{definition}

\begin{theorem}\label{thm:fy-fxyy}
For any $Y \subseteq X$, $\transf(Y) = \rstab{\transf}{X}{Y}$.
\end{theorem}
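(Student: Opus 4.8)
The plan is to establish the set equality $\transf(Y) = \rstab{\transf}{X}{Y}$ by proving the two inclusions separately, each being a direct verification. Recall that $\rstab{\transf}{X}{Y} = \stab{\transf}{X}{Y}|_Y$, where $\stab{\transf}{X}{Y} = \{g \in \transf(X) \mid g(Y) = Y\}$ is the setwise stabilizer.

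For $\rstab{\transf}{X}{Y} \subseteq \transf(Y)$, I would take an arbitrary $g \in \stab{\transf}{X}{Y}$, \ie a bijection $g: X \to X$ with $g(Y) = Y$, and check that $g|_Y$ is a bijection of $Y$: injectivity of $g|_Y$ is inherited from injectivity of $g$, and surjectivity of $g|_Y$ onto $Y$ is precisely the hypothesis $g(Y) = Y$. Hence $g|_Y \in \transf(Y)$, which gives this inclusion.

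For the reverse inclusion $\transf(Y) \subseteq \rstab{\transf}{X}{Y}$ --- the only part requiring a construction --- I would start from an arbitrary bijection $h: Y \to Y$ and extend it by the identity off $Y$: define $\hat h: X \to X$ by $\hat h(x) = h(x)$ for $x \in Y$ and $\hat h(x) = x$ for $x \in X \setminus Y$. The verification then splits into three routine pieces: (i) $\hat h \in \transf(X)$, \ie $\hat h$ is a bijection of $X$ --- injectivity follows from a short case analysis on whether each of two arguments lies in $Y$ or in $X \setminus Y$ (the mixed case is impossible because $\hat h$ maps $Y$ into $Y$ and $X \setminus Y$ into $X \setminus Y$), and surjectivity uses surjectivity of $h$ on $Y$ together with the identity elsewhere; (ii) $\hat h(Y) = h(Y) = Y$, so $\hat h \in \stab{\transf}{X}{Y}$; and (iii) $\hat h|_Y = h$ by construction. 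Together these show $h = \hat h|_Y \in \rstab{\transf}{X}{Y}$.

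Combining the two inclusions yields the theorem. I do not anticipate any genuine obstacle: the mathematical content is simply the observation that any self-bijection of a subset extends to a self-bijection of the ambient set that fixes the complement pointwise. The only mild points of care are the exhaustiveness of the case analysis when checking $\hat h$ is a bijection, and the degenerate cases $Y = \emptyset$ (where $\transf(Y)$ is the one-element group consisting of the empty map) and $Y = X$ (where both sides are $\transf(X)$), both of which are handled uniformly by the same extension-by-identity construction.
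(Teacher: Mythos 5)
Your proposal is correct and matches the paper's own argument: both inclusions are verified exactly as in the appendix, with the restriction $g|_Y$ inheriting bijectivity for one direction and the extension-by-identity off $Y$ supplying the other. No gaps or meaningful differences to report.
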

\begin{proof}
See Appendix~\ref{app:fy-fxyy}.
\end{proof}
\begin{corollary}\label{coro:fz-frzz}
$\transf(\integers^n) = \transf(\reals^n)_{\integers^n}|_{\integers^n}$.
\end{corollary}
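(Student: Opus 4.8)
The statement is nothing more than the special case of Theorem~\ref{thm:fy-fxyy} obtained by instantiating $X = \reals^n$ and $Y = \integers^n$. So the plan is very short. First I would observe that $\integers^n$ is genuinely a subset of $\reals^n$, so the sole hypothesis ``$Y \subseteq X$'' of Theorem~\ref{thm:fy-fxyy} is met with no caveats. Then I would simply substitute these choices into the conclusion $\transf(Y) = \transf(X)_Y|_Y$ and read off $\transf(\integers^n) = \transf(\reals^n)_{\integers^n}|_{\integers^n}$, after checking that the instantiated notation $\transf(\reals^n)_{\integers^n}|_{\integers^n}$ matches the general restricted-setwise-stabilizer notation $\transf(X)_Y|_Y$ verbatim.

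If one instead wanted a self-contained argument that does not quote Theorem~\ref{thm:fy-fxyy}, one would prove the two inclusions by hand. For the inclusion $\transf(\reals^n)_{\integers^n}|_{\integers^n} \subseteq \transf(\integers^n)$: any $g \in \transf(\reals^n)$ with $g(\integers^n) = \integers^n$ restricts to a map $g|_{\integers^n}\colon \integers^n \to \integers^n$ that is surjective by the stabilizer condition and injective as a restriction of the injection $g$, hence a bijection of $\integers^n$. For the reverse inclusion: given $h \in \transf(\integers^n)$, decompose $\reals^n = \integers^n \sqcup (\reals^n \setminus \integers^n)$ and define $g\colon \reals^n \to \reals^n$ to act as $h$ on $\integers^n$ and as the identity on $\reals^n \setminus \integers^n$; then $g$ is a bijection of $\reals^n$ fixing $\integers^n$ setwise with $g|_{\integers^n} = h$, so $h \in \transf(\reals^n)_{\integers^n}|_{\integers^n}$. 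But this is exactly what Theorem~\ref{thm:fy-fxyy} already packages in full generality, so invoking it is the cleaner route.

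The main (and essentially only) obstacle here is bookkeeping: confirming that Theorem~\ref{thm:fy-fxyy} places no structural requirements on $X$ or $Y$ beyond being sets and $Y \subseteq X$, so that the metric/linear structure on $\reals^n$ and $\integers^n$ is irrelevant to this particular corollary. Since $\integers^n \subseteq \reals^n$ without qualification, there is no genuine difficulty and the corollary follows immediately.
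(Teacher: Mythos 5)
Your proposal is correct and matches the paper exactly: the paper states this as an immediate corollary of Theorem~\ref{thm:fy-fxyy} with $X=\reals^n$, $Y=\integers^n$, giving no further argument, and your optional hands-on verification simply reproduces the appendix proof of that theorem (identity extension off $Y$, restriction of a stabilizing bijection).
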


\begin{theorem}\label{thm:trz-trrzz}
$\tra(\integers^n) = \tra(\reals^n)_{\integers^n}|_{\integers^n}$, and $\rot(\integers^n) = \rot(\reals^n)_{\integers^n}|_{\integers^n}$.
\end{theorem}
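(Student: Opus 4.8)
The plan is to prove the two set-equalities $\tra(\integers^n) = \tra(\reals^n)_{\integers^n}|_{\integers^n}$ and $\rot(\integers^n) = \rot(\reals^n)_{\integers^n}|_{\integers^n}$ by establishing each as a pair of inclusions, following the template just used for Corollary~\ref{coro:fz-frzz} (which specialized Theorem~\ref{thm:fy-fxyy}). For the translation case, first I would show $\subseteq$: given $t'_u \in \tra(\integers^n)$ with $u \in \integers^n$, the ambient translation $t_u \in \tra(\reals^n)$ satisfies $t_u(\integers^n) = \integers^n$ (since $\integers^n + u = \integers^n$ for $u \in \integers^n$), so $t_u \in \tra(\reals^n)_{\integers^n}$ and its restriction $t_u|_{\integers^n}$ is exactly $t'_u$. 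For $\supseteq$: given any $t_u \in \tra(\reals^n)$ with $t_u(\integers^n) = \integers^n$, I would argue that $u = t_u(\bm{0}) \in \integers^n$ (applying the stabilizer condition to the point $\bm{0}$), hence $t_u|_{\integers^n} = t'_u \in \tra(\integers^n)$.

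For the rotation case the same two-inclusion strategy works, but the $\supseteq$ direction is the one requiring care. The $\subseteq$ direction is routine: if $r'_A \in \rot(\integers^n)$ with $A \in \orthmat_n(\integers)$, then $A$ has integer entries and integer-entried inverse $A^\top$, so $r_A$ maps $\integers^n$ into $\integers^n$ and $r_{A^{-1}}$ does too, giving $r_A(\integers^n) = \integers^n$; thus $r_A \in \rot(\reals^n)_{\integers^n}$ and restricts to $r'_A$. For $\supseteq$: suppose $r_A \in \rot(\reals^n)$, i.e. $A \in \orthmat_n(\reals)$, with $r_A(\integers^n) = \integers^n$. I would apply $r_A$ to the standard basis vectors $e_1,\dots,e_n \in \integers^n$: each $Ae_i \in \integers^n$, which says every column of $A$ is an integer vector, so $A \in \integers^{n\times n}$; combined with $A^\top = A^{-1}$ this gives $A \in \orthmat_n(\integers)$, hence $r_A|_{\integers^n} = r'_A \in \rot(\integers^n)$.

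The only genuine subtlety—and the step I expect to be the main obstacle—is verifying that these restrictions genuinely land in the claimed sets, i.e. that a stabilizing isometry of $\reals^n$ restricts to a \emph{bijection} of $\integers^n$ that is of the prescribed translational or rotational form, not merely an abstract isometry of $\integers^n$. The condition $g(\integers^n) = \integers^n$ (setwise stabilizer, not just $g(\integers^n) \subseteq \integers^n$) is what guarantees surjectivity of the restriction and, for rotations, forces $A^{-1} = A^\top$ to also have integer entries; one should be mildly careful that $\orthmat_n(\integers)$ is defined exactly as integer orthogonal matrices, so that no appeal to a separate classification (signed permutation matrices) is needed here. A clean way to organize all four inclusions is to note that each restriction map is injective on the relevant setwise stabilizer (two ambient affine maps agreeing on $\integers^n$ agree on its affine span $\reals^n$), so the set-equalities can equivalently be phrased as equalities of the stabilizer subgroups themselves before restricting; but the direct two-inclusion argument above is shortest and I would simply defer it to an appendix in the same style as the proof of Theorem~\ref{thm:fy-fxyy}.
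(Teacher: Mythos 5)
Your proposal is correct and follows essentially the same route as the paper's proof: each equality is shown by two inclusions, with the nontrivial directions handled exactly as in the paper by evaluating the stabilizing ambient map at $\bm{0}$ (to get $u \in \integers^n$) and at the standard basis vectors $\bm{e_1},\ldots,\bm{e_n}$ (to get integer columns, hence $A \in \orthmat_n(\integers)$). Your extra remark justifying $r_A(\integers^n)=\integers^n$ via the integrality of both $A$ and $A^{-1}=A^\top$ is a small refinement the paper leaves implicit, but the argument is otherwise the same.
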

\begin{proof}
See Appendix~\ref{app:trz-trrzz}.
\end{proof}

\begin{theorem}\label{thm:isoz-isorzz}
$\iso(\integers^n) = \iso(\reals^n)_{\integers^n}|_{\integers^n}$.
\end{theorem}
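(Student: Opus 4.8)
The plan is to prove the two set inclusions $\iso(\integers^n) \subseteq \iso(\reals^n)_{\integers^n}|_{\integers^n}$ and $\iso(\reals^n)_{\integers^n}|_{\integers^n} \subseteq \iso(\integers^n)$ separately, using the structural characterizations of $\iso(\reals^n)$ already established. The easy direction is the second one: if $g \in \iso(\reals^n)$ with $g(\integers^n) = \integers^n$, then $g|_{\integers^n}$ is a bijection of $\integers^n$ onto itself and obviously preserves Euclidean distances (being a restriction of a distance-preserving map), so $g|_{\integers^n} \in \iso(\integers^n)$. This uses essentially nothing beyond the definitions. Note that by Corollary~\ref{coro:fz-frzz} we even know $g|_{\integers^n} \in \transf(\integers^n)$, so the image really does land in $\iso(\integers^n)$ as a set of transformations.

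For the forward inclusion, I would take an arbitrary $h' \in \iso(\integers^n)$ and produce an $h \in \iso(\reals^n)$ with $h(\integers^n) = \integers^n$ and $h|_{\integers^n} = h'$. The natural route is to normalize: set $u := h'(\bm 0) \in \integers^n$ and consider $h'_0 := t'_{-u} \circ h'$, an isometry of $\integers^n$ fixing the origin. A distance-preserving self-map of $\integers^n$ fixing $\bm 0$ must send the standard basis vectors $e_1,\dots,e_n$ (the points at distance $1$ from the origin, paired off by the fact that $e_i$ and $-e_i$ are the only lattice points at distance $1$ from $\bm 0$ whose mutual distance is $2$) to an orthonormal-up-to-sign configuration; more carefully, since $h'_0$ preserves all pairwise distances among $\bm 0, \pm e_1, \dots, \pm e_n$, the images $h'_0(e_i)$ form a set of integer vectors that is again an orthonormal basis of $\reals^n$, i.e. the matrix $A$ with columns $h'_0(e_i)$ lies in $\orthmat_n(\integers)$. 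Then I would argue $h'_0(x) = Ax$ for every $x \in \integers^n$: writing $x = \sum x_i e_i$, the value $h'_0(x)$ is pinned down by its distances to $\bm 0$ and to each $\pm e_i$ (an overdetermined but consistent system forcing uniqueness in $\reals^n$, hence in $\integers^n$), and $Ax$ satisfies the same constraints. Consequently $h' = t'_u \circ r'_A|_{\integers^n}$ with $u \in \integers^n$, $A \in \orthmat_n(\integers)$; lifting to $h := t_u \circ r_A \in \iso(\reals^n)$ gives an isometry of $\reals^n$ stabilizing $\integers^n$ with $h|_{\integers^n} = h'$, which is exactly what we need. In fact this simultaneously shows $\iso(\integers^n) = \tra(\integers^n) \circ \rot(\integers^n)$, reinforcing that $\iso(\integers^n)$ is a group, echoing the analogous decomposition of $\iso(\reals^n)$; combined with Theorem~\ref{thm:trz-trrzz} one could alternatively phrase the whole argument via $\tra(\integers^n) \circ \rot(\integers^n) = (\tra(\reals^n) \circ \rot(\reals^n))_{\integers^n}|_{\integers^n}$.

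The main obstacle I anticipate is the rigidity step: showing that an origin-fixing isometry of $\integers^n$ is globally linear with an integer orthogonal matrix. Over $\reals^n$ this is the classical Mazur–Ulam-flavored fact that distance-preserving maps fixing a point are linear, but over $\integers^n$ one cannot invoke connectedness or convexity, so the argument must be made combinatorially: first show the images of $\pm e_i$ are forced (using that $d(\bm 0, e_i) = 1$ and $d(e_i, -e_i) = 2$ and $d(e_i, e_j) = \sqrt 2$ for $i \neq j$, which together force an integer-orthonormal image), and then show every other lattice point's image is determined by its distance profile to this rigid frame. The bookkeeping for "the distances to $\bm 0, \pm e_1, \dots, \pm e_n$ uniquely determine a point of $\reals^n$" is routine linear algebra (it amounts to recovering coordinates from squared distances), so I would not belabor it, but it is the one place where a little care is needed rather than pure definition-chasing. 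I expect the authors relegate exactly this to the appendix.
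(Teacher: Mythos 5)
Your proposal is correct and follows essentially the same route as the paper's proof: normalize by $h'(\bm{0})$, extract $A \in \orthmat_n(\integers)$ from the images of the standard basis vectors, show $h'_0(x) = Ax$ for every $x \in \integers^n$, and lift to $h = t_u \circ r_A \in \iso(\reals^n)$ stabilizing $\integers^n$. The only difference is cosmetic: the paper establishes linearity via the polarization identity $\langle h'_0(x), h'_0(y)\rangle = \langle x, y\rangle$ and the computation $A^\top\bigl(h'_0(x) - Ax\bigr) = \bm{0}$, whereas you phrase the same computation as recovering a point from its squared distances to the affinely spanning frame $\bm{0}, A\bm{e_1}, \ldots, A\bm{e_n}$.
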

\begin{proof}
See Appendix~\ref{app:isoz-isorzz}.
\end{proof}

\begin{remark}
Through restricted setwise stabilizers, Corollary~\ref{coro:fz-frzz} as well as Theorems~\ref{thm:trz-trrzz} and \ref{thm:isoz-isorzz} collectively verify that transformations, translations, rotations, and isometries of $\integers^n$ are precisely those transformations, translations, rotations, and isometries of $\reals^n$ that stabilize $\integers^n$, respectively.
In particular, it is now clear that $(\iso(\integers^n), \circ)$ is indeed a group, and moreover $\tra(\integers^n), \rot(\integers^n) \leq \iso(\integers^n) \leq \transf(\integers^n)$.
\end{remark}

The parallels between translations, rotations, isometries of $\integers^n$ and their counterparts of $\reals^n$ yield the two characterizations of $\iso(\integers^n)$ which are parallel to the those of $\iso(\reals^n)$:
\begin{align*}
\iso(\integers^n) &= \langle \tra(\integers^n) \cup \rot(\integers^n) \rangle \quad \mbox{ where } \tra(\integers^n) \cap \rot(\integers^n) = \{\id\}; \\
\iso(\integers^n) &= \tra(\integers^n) \circ \rot(\integers^n) \cong \integers^n \rtimes \orthmat_n(\integers).
\end{align*}
This further yields the parametrization of $\hcal_{\iso(\integers^n)}^{*}$ by
\begin{align*}
\Sigma''_{isozn} := \{(L, V, \xi) \mid L \leq \orthmat_n(\integers), V \leq \integers^n, \xi \in \Xi''_{L,V}\} \subseteq \Sigma' \subseteq \Sigma,
\end{align*}
where $\Xi''_{L,V} := \{\xi \in \Xi_{L,V} \mid \xi(L) \subseteq \integers^n/V\} \subseteq \Xi_{L,V}$.
Note that $\iso(\integers^n)$ still have infinitely many subgroups, since the choices for $V$ and $\xi$ are still unlimited.
Next we will show how to enumerate a finite subset from $\hcal_{\iso(\integers^n)}^{*}$ when considering the music domain.

The space of music pitches from equal temperament can be denoted by $\integers$.
Every adjacent pitch is separated by a half-step (or semi-tone) denoted by the integer $1$, which is also the distance between every adjacent keys (regardless of black or white) in a piano keyboard.
While the absolute integer assigned to each music pitch is not essential, in the standard MIDI convention, C$_4$ (the middle C) is $60$, C$\sharp _4$ is $61$, and so forth.
Therefore, the space $\integers^n$ represents the space of chords consisting of $n$ pitches. For instance, $\integers^3$ denotes the space of trichords, $\integers^4$ denotes the space of tetrachords, and so forth.
Known music transformations of fixed-size chords \citep{Tymoczko2010, Lewin2010} can be summarized as a subset of the following parametrization set
\begin{align*}
\Sigma''_{music} := \{(L, V, \xi) \mid L \leq \orthmat_n(\integers), V \in \hcal_{\integers^n}^{M}, \xi = \xi^{0}_{L,V} \in \Xi''_{L,V}\} \subseteq \Sigma''_{isozn},
\end{align*}
where $\hcal_{\integers^n}^{M} := \{\langle \bm{1} \rangle, (12\integers)^n, \langle \bm{1} \rangle \vee (12\integers)^n\}$ is a finite collection of music translation subgroups including music transpositions, octave shifts, and their combinations;
$\xi_{L,V}^{0}$ is the trivial vector system given by $\xi_{L,V}^{0}(A) = V$ for any $A \in L$ requiring the inclusion of all rotations to include music permutations and inversions.
Together with the fact that $\orthmat_n(\integers)$ is finite, the enumeration of each element in the triplet $(L,V,\xi)$ is finite, yielding a finite $\Sigma''_{music}$.

It is important to recognize that the significance of the parametrization set $\Sigma''_{music}$ is not limited to recover known music-theoretic concepts but to complete existing knowledge by forming a music ``closure'' $\Sigma''_{music}$.
Such a ``closure'' can be further fine-tuned to be either more efficient (\eg by removing uninteresting rotation subgroups) or more expressive (\eg by adding more translation subgroups).

\subsection{Section Summary}

In this section, we first moved down from the full transformation group of $\reals^n$---the top vertex in the subgroup lattice $(\hcal_{\transf(\reals^n)}^{*}, \leq)$---to the affine group of $\reals^n$.
Focusing on $\affine(\reals^n)$, we derived a complete identification of its subgroups by constructing a parametrization set $\Sigma$ and a bijection $\Psi: \hcal_{\affine(\reals^n)}^{*} \to \Sigma$.
So, every subgroup of $\affine(\reals^n)$ bijectively corresponds to a unique triplet in $\Sigma$.
Towards the goal of a finite collection of affine subgroups, we further moved down in the subgroup lattice $(\hcal_{\transf(\reals^n)}^{*}, \leq)$ from the affine group of $\reals^n$ to the isometry group of $\reals^n$.
Focusing on $\iso(\reals^n)$, we identified the parametrization of $\hcal_{\iso(\reals^n)}^{*}$ by a subset $\Sigma' \subseteq \Sigma$.
From there, we made a dichotomy in our top-down path, and presented two examples to obtain two collections of subgroups used in two different topic domains.
One is a finite collection of space groups (in standard form and up to affine conjugacy) used in crystallography, which is parameterized by $\Sigma''_{cryst} \subseteq \Sigma'$;
the other is a finite completion of existing music concepts, which is parameterized by $\Sigma''_{music} \subseteq \Sigma''_{isozn} \subseteq \Sigma'$. A complete roadmap that we have gone through is summarized in Figure~\ref{fig:top-down-roadmap}.

\begin{figure}[t]
\begin{center}
\includegraphics[width=0.85\columnwidth]{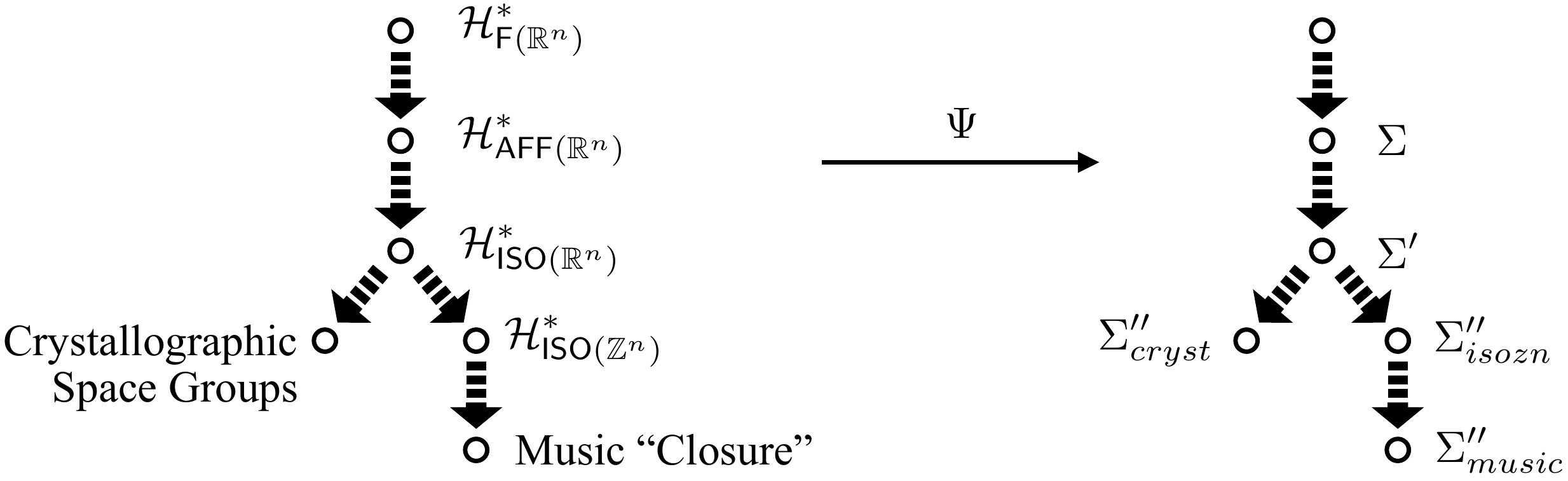}
\end{center}
\caption{Roadmap of the top-down paths in terms of collection of subgroups (left) and the corresponding parametrization paths (right) under the parametrization map $\Psi$.}
\label{fig:top-down-roadmap}
\end{figure}

We finally reiterate that the selection of top-down paths is one's design choice.
Whenever necessary, one should make his/her own decision on creating a new branch or even trying out several branches along major downward paths.
The top-down path with two branches introduced in this section serve for illustration purposes.


\section{The Bottom-Up Approach: Generating Set}
\label{sec:the-bottom-up-approach-gen-set}

We follow a bottom-up approach to extract a partial subgroup lattice $\hcal_{\langle S \rangle}$ from a generating set $S$.
This is done by an induction procedure which first extracts cyclic subgroups $\{\langle s \rangle \mid s \in S\}$ as base cases, and then inductively extracts other subgroups via the join of the extracted ones.
The resulting collection of subgroups $\hcal_{\langle S \rangle}$ is generally not the complete subgroup lattice $\hcal_{\langle S \rangle}^{*}$ since some of its subgroups are missing.
The dual of this induction procedure gives a mirrored induction algorithm that computes the corresponding abstraction semiuniverse in an efficient way.
The missing subgroups can be made up by adding more generators, but this hinders the efficiency.
At the end of this section, we will discuss the trade-off between expressiveness and efficiency when designing a generating set in practice.

\subsection{From Generating Set to Subgroup (Semi)Lattice}
\label{sec:from-gen-set-to-subgroup-semilattice}

Let $S \subseteq \transf(X)$ be a finite subset consisting of transformations of a set $X$.
We construct a collection $\hcal_{\langle S \rangle}$ consisting of subgroups of $\langle S \rangle$ where every subgroup is generated by a subset of $S$.
To succinctly record this process and concatenate it with the abstraction generating chain, we introduce the following one-step \emph{subgroup generating chain}:
\begin{align*}
\mbox{a subset of $\transf(X)$}
\xrightarrow[]{\textit{group generation}}
\mbox{a subgroup of $\transf(X)$},
\end{align*}
which can be further encapsulated by the \emph{subgroup generating function} defined as follows.

\begin{definition}\label{def:subgroup-generating-function}
The subgroup generating function is the mapping $\pi': 2^{\transf(X)} \to \hcal_{\transf(X)}^{*}$ where $2^{\transf(X)}$ is the power set of $\transf(X)$, $\hcal_{\transf(X)}^{*}$ is the collection of all subgroups of $\transf(X)$, and for any $S \in 2^{\transf(X)}$, $\pi'(S) := \langle S \rangle := \{s_k \circ \cdots \circ s_1 \mid s_i \in S \cup S^{-1}, i = 1, \ldots, k, k \in \integers_{\geq 0}\}$ where $S^{-1} := \{s^{-1} \mid s \in S\}$.
By convention, $s_k \circ \cdots \circ s_1 = \id$ for $k = 0$, and $\pi'(\emptyset) = \langle \emptyset \rangle = \{\id\}$.
\end{definition}

\begin{remark}
The subgroup generating function in Definition~\ref{def:subgroup-generating-function} is nothing but generating a subgroup from its given generating set.
However, we can now write the procedure at the beginning of this subsection succinctly as $\hcal_{\langle S \rangle} = \pi'(2^S)$ for any finite subset $S \subseteq \transf(X)$;
further, the subgroup generating chain and the abstraction generating chain can now be concatenated, which is denoted by the composition $\Pi := \pi \circ \pi'$.
\end{remark}

Like the abstraction generating function $\pi$, the subgroup generating function $\pi'$ is not necessarily injective, since a generating set of a group is generally not unique; $\pi'$ is surjective, since every subgroup per se is also its own generating set.
The following theorem captures the structure of $\pi'(2^S)$ for a finite subset $S \subseteq \transf(X)$.

\begin{theorem}\label{thm:subsets-generated-subgroups-is-join-semilattice}
Let $S \subseteq \transf(X)$ be a finite subset, and $\pi'$ be the subgroup generating function.
Then $(\pi'(2^S), \leq)$ is a join-semilattice, but not necessarily a meet-semilattice.
In particular,
\begin{align*}
\pi'(A\cup B) = \pi'(A) \vee \pi'(B) \quad \mbox{ for any } A,B \subseteq S.
\end{align*}
\end{theorem}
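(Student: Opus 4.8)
The plan is to reduce everything to the single identity
\[
\pi'(A \cup B) \;=\; \pi'(A) \vee \pi'(B) \qquad \text{for all } A, B \subseteq S,
\]
which is the ``in particular'' assertion. Granting it, the join-semilattice claim is immediate: if $A, B \subseteq S$ then $A \cup B \subseteq S$, so $\pi'(A) \vee \pi'(B) = \pi'(A \cup B)$ again lies in $\pi'(2^S)$; thus $\pi'(2^S)$ is closed under the join of the ambient subgroup lattice $\hcal_{\transf(X)}^{*}$, and since that join is a least upper bound in $\hcal_{\transf(X)}^{*}$ it is also the least upper bound inside $\pi'(2^S)$. The negative half then amounts to producing one finite $S$ for which $\pi'(2^S)$ fails to be closed under meet.

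First I would prove the identity. Unwinding Definition~\ref{def:subgroup-generating-function} together with the description of the join in a subgroup lattice, the claim becomes the standard equality $\langle A \cup B\rangle = \langle\, \langle A\rangle \cup \langle B\rangle\,\rangle$. For ``$\supseteq$'' of the bracketed form: $A \cup B \subseteq \langle A\rangle \cup \langle B\rangle \subseteq \langle\, \langle A\rangle \cup \langle B\rangle\,\rangle$, and since the last set is a subgroup it must contain $\langle A \cup B\rangle$. For ``$\subseteq$'': $A, B \subseteq A \cup B$ gives $\langle A\rangle, \langle B\rangle \subseteq \langle A \cup B\rangle$ by monotonicity of $\langle\cdot\rangle$, so $\langle A\rangle \cup \langle B\rangle$ sits inside the subgroup $\langle A \cup B\rangle$, whence $\langle\, \langle A\rangle \cup \langle B\rangle\,\rangle \subseteq \langle A \cup B\rangle$. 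No computation inside $\transf(X)$ is needed, only the universal property that a subgroup which contains a set contains the subgroup it generates.

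For the meet claim I would exhibit a small concrete counterexample. Take $X = \{0, 1, \dots, 11\}$, so $\transf(X)$ is the symmetric group on twelve symbols, and let $a, b \in \transf(X)$ be the cyclic shifts $a\colon x \mapsto x + 3$ and $b\colon x \mapsto x + 2$ read modulo $12$; put $S = \{a, b\}$. Then $\langle a\rangle$ is the group of shifts by $\{0, 3, 6, 9\}$ (order $4$), $\langle b\rangle$ the group of shifts by $\{0, 2, 4, 6, 8, 10\}$ (order $6$), and $\langle a, b\rangle$ the group of all twelve shifts (order $12$), so $\pi'(2^S) = \{\,\{\id\},\ \langle a\rangle,\ \langle b\rangle,\ \langle a, b\rangle\,\}$ has four distinct members, of orders $1, 4, 6, 12$. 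But $\langle a\rangle \wedge \langle b\rangle = \langle a\rangle \cap \langle b\rangle$ is the group of shifts by $\{0, 6\}$, of order $2$, which is none of those four; hence $\pi'(2^S)$ is not closed under meet.

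The only delicate point in the write-up is what ``not necessarily a meet-semilattice'' should mean. As an abstract poset $\pi'(2^S)$ always has the least element $\pi'(\emptyset) = \{\id\}$ and is a join-semilattice by the first part, hence is in fact a lattice; so the intended reading, consistent with the paper's use of ``semilattice of $\mathfrak{P}_X^{*}$,'' must be that $\pi'(2^S)$ need not be a meet-\emph{sub}semilattice of $\hcal_{\transf(X)}^{*}$ --- precisely what the counterexample establishes. Modulo clarifying this, there is no real obstacle: the whole proof rests on the one-line identity $\langle A \cup B\rangle = \langle\, \langle A\rangle \cup \langle B\rangle\,\rangle$ plus the bookkeeping above.
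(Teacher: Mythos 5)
Your proof is correct and follows essentially the same route as the paper: establish $\pi'(A\cup B)=\pi'(A)\vee\pi'(B)$ via the identity $\langle A\cup B\rangle=\langle\,\langle A\rangle\cup\langle B\rangle\,\rangle$, conclude join-closure of $\pi'(2^S)$ since $A\cup B\subseteq S$, and exhibit a concrete counterexample to meet-closure. Your counterexample (shifts by $2$ and $3$ on twelve points, whose intersection $\langle$shift by $6\rangle$ lies outside $\pi'(2^S)$) differs from the paper's choice of translations $t_{\bm{e_1}},t_{\bm{e_2}},t_{(3/2)\cdot\bm{1}}$ of $\reals^2$ but is equally valid, and your reading of ``not necessarily a meet-semilattice'' as failure of closure under the meet of $\hcal_{\transf(X)}^{*}$ is exactly what the paper's own example establishes.
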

\begin{proof}
For any $A,B \subseteq S$, we have
\begin{align*}
\pi'(A\cup B) = \langle A\cup B \rangle = \langle \langle A \rangle \cup \langle B \rangle \rangle = \langle \pi'(A) \cup \pi'(B) \rangle = \pi'(A) \vee \pi'(B).
\end{align*}
Then for any $\pi'(A), \pi'(B) \in \pi'(2^S)$ where $A,B \subseteq S$, the join $\pi'(A) \vee \pi'(B)  = \pi'(A \cup B) \in \pi'(2^S)$, since $A\cup B \subseteq S$. So, $(\pi'(2^S), \leq)$ is a join-semilattice.

We give an example in which $(\pi'(2^S), \leq)$ is not a meet-semilattice. Let $X = \reals^n$ and $S = \{t_{\bm{e_1}}, t_{\bm{e_2}}, t_{(3/2)\cdot\bm{1}}\}$ be a set consisting of three translations where $\bm{e_1} = (1,0), \bm{e_2} = (0,1), \bm{1} = (1,1)$.
Further, let $A = \{t_{\bm{e_1}}, t_{\bm{e_2}}\}$ and $B = \{t_{(3/2)\cdot\bm{1}}\}$. The meet $\pi'(A) \wedge \pi'(B) = \langle A \rangle \cap \langle B \rangle = \langle t_{3\cdot \bm{1}} \rangle \not\in \pi'(2^S)$.
\end{proof}

\begin{remark}
Although the collection of subgroups generated by the subgroup generating function $\pi'$ is not a lattice in general, it is sufficient that it is a join-semilattice.
This is because the family of abstractions generated by the abstraction generating function $\pi$ is a meet-semiuniverse (recall the strong and week dualities in Theorem~\ref{thm:duality-subgroup-and-partition-lattice}).
As a result, the closedness of $\pi'(2^S)$ under join is carried over through the strong duality to preserve the closedness of $\pi(\pi'(2^S))$ under meet. This preservation of closednesses under join and meet has a significant practical implication: it directly yields an induction algorithm that implements $\Pi(2^S) := \pi \circ \pi'(2^S)$ from a finite subset $S$.
\end{remark}

\subsection{An Induction Algorithm}
\label{sec:an-induction-algorithm}

We describe an algorithmic implementation of $\Pi(2^S)$ from a finite generating set $S$ for a finite space $X$.
More specifically, the algorithmic problem here is as follows.
\begin{equation}
\label{eqn:induction-algo-problem}
\begin{aligned}
\mbox{Inputs:} \quad & \mbox{1) a finite set $X$ (to be abstracted)}, \\
& \mbox{2) a finite generating set $S \subseteq \transf(X)$}; \\
& \mbox{the underlying group action is: $\langle S \rangle$ acting on $X$}. \\
\mbox{Output:} \quad & \mbox{the abstraction semiuniverse (for $X$)} \\
& \qquad \Pi(2^S) = \pi \circ \pi'(2^S) = \{X/\langle S' \rangle \mid S' \subseteq S\}.
\end{aligned}
\end{equation}
By literally following the definition $\Pi = \pi \circ \pi'$, a naive algorithm solves Problem~\eqref{eqn:induction-algo-problem} straightforwardly in two steps.
It first considers the subgroup join-semilattice $\pi'(2^S)$, then computes the abstraction meet-semiuniverse $\pi(\pi'(2^S))$.
However, as mentioned in Remark~\ref{remark:duality-subgroup-and-partition-lattice}, computing every abstraction of $X$ by identifying orbits from a subgroup action can be expensive.
In this subsection, we first present a naive two-step implementation, and then introduce an induction algorithm that solves Problem~\eqref{eqn:induction-algo-problem} efficiently in an indirect way.

\paragraph{A Naive Two-Step Implementation.}
Step One: consider $\pi'(2^S) = \{\langle S' \rangle \mid S' \subseteq S\}$.
This is straightforward since we can simply enumerate (possibly with duplication) every subgroup in $\pi'(2^S)$ by indexing its generating set $S' \subseteq S$.
Step Two: consider $\pi(\pi'(2^S))$, \ie for every $S' \subseteq S$ and its corresponding $\pi'(S') = \langle S' \rangle$, we compute $\pi(\pi'(S')) = X/\langle S' \rangle$ by identifying the set of orbits $\{\langle S' \rangle x\mid x \in X\}$.
More specifically, as a subroutine, for every pair $x,x' \in X$, we need to check whether or not they are in the same orbit---known as the \emph{orbit identification} problem.
The number of checks needed is $O(|X|^2)$ which can be computationally expensive if $|X|$ is large.
However, what really makes this naive approach fail is that most checks may not finish in finite time.
To get a rough sense, take $S' = \{s_1,s_2\}$ as an example, without leveraging additional properties, a brute-force check may be endless: ``$x' = s_1(x)$?'', ``$x' = s_1^{-1}(x)$?'', ``$x' = s_2(x)$?'', ``$x' = s_1\circ s_2 \circ s_2 \circ s_1^{-1}(x)$?'', and so forth.

\paragraph{An Induction Algorithm.}
Instead, we give an algorithm based on induction on $|S'|$ for all nonempty subsets $S' \subseteq S$.
(Note: $\Pi(\emptyset) = X/\langle \emptyset \rangle$ is simply the finest partition of $X$.)

\noindent \underline{Base case:} compute $\Pi(S')$ for $|S'|=1$ (Algorithm~\ref{algo:base-partn}) as orbits under a cyclic subgroup:
\begin{align}\label{eqn:base-case}
\Pi(S') = \{\langle S' \rangle x\mid x \in X\}.
\end{align}
\underline{Induction step:} compute $\Pi(S')$ for $|S'|>1$ (Algorithm~\ref{algo:meet}) as the meet of two partitions:
\begin{align}\label{eqn:induction-step}
\Pi(S') = \Pi(S'') \wedge \Pi(S''') \quad \mbox{ for any } S'',S''' \subset S' \mbox{ and } S'' \cup S''' = S'.
\end{align}
In the base case, every partition, called a \emph{base partition}, is explicitly computed from orbits.
Yet orbit identification is feasible for one generator, say $s$, since two points $x,x' \in X$ are in the same orbit if and only if $x' = s^n (x)$ for some finite $n \in \integers$.
We can do even better than the quadratic-time orbit identification: Algorithm~\ref{algo:base-partn} uses \emph{orbit tracing} instead, which is linear in $|X|$.
In the induction step, the meet operation bypasses the endless brute-force checks in orbit identification.
Its correctness can be proved by leveraging Theorem~\ref{thm:subsets-generated-subgroups-is-join-semilattice} and the strong duality in Theorem~\ref{thm:duality-subgroup-and-partition-lattice}, or more explicitly,

\begin{minipage}[t]{0.45\textwidth}
\begin{algorithm}[H]
\KwIn{a generator $s$ and a set $X$}
\KwOut{the base partition $\Pi(\{s\})$}
\vspace{0.05in}
\SetKwFunction{FBasePartn}{BasePartn}
\SetKwProg{Fn}{Function}{:}{}
\Fn{\FBasePartn{s}}{
initialize label id: $l = 0$\;
\For{each point $x \in X$}
{
  \If{$x$ is not labeled}{
    initialize a new orbit:\\
    \quad $O = \{x\}$\;
    transform:\\
    \quad $y = s(x)$\;
    \While{$y \in X$ and $y \neq x$ and $y$ is not labeled}{
      enlarge the orbit:\\
      \quad $O = O \cup \{y\}$\;
      transform:\\
      \quad $y = s(y)$\;
    }
    \If{$y \notin X$ or $y = x$}{
      create a new label:\\
      \quad $l = l + 1$\;
    }
    \If{$y$ is labeled}{
      use $y$'s label:\\
      \quad $l = y$'s label\;
    }
    label every point in the orbit $O$ by $l$\;
  }
}
\KwRet the partition according to the labels\;
}
\vspace{0.1in}
\caption{Computing base partitions by tracing orbits: $O(|X|)$.}
\label{algo:base-partn}
\end{algorithm}
\vspace{0.1in}
\end{minipage}
~
\begin{minipage}[t]{0.47\textwidth}
\begin{algorithm}[H]
\KwIn{two partitions $\pcal$ and $\qcal$ of a set $X$}
\KwOut{the meet $\pcal \wedge \qcal$, \ie finest common coarsening of $\pcal$ and $\qcal$}
\vspace{0.05in}
\SetKwFunction{FMeet}{Meet}
\SetKwProg{Fn}{Function}{:}{}
\Fn{\FMeet{$\pcal$, $\qcal$}}{
\For{each cell $Q \in \qcal$}
{
  initialize a new cell:\\
  \quad $P_{merge} = \emptyset$\;
  \For{each cell $P \in \pcal$}
  {
  \If{$P \cap Q \neq \emptyset$}{
    merge:
    \quad $P_{merge} = P_{merge} \cup P$\;
    remove:\\
    \quad $\pcal = \pcal \backslash \{P\}$\;
    }
  }
  insert:\\
  \quad $\pcal = \pcal \cup \{P_{merge}\}$\;
}
\KwRet $\pcal$\;
}
\vspace{0.1in}
\caption{Computing partitions generated from more than one generators inductively by taking the meet of two partitions computed earlier: $O(|\pcal||\qcal|)$. Normally, all base partitions should be already computed and cached before running the induction steps.}
\label{algo:meet}
\end{algorithm}
\end{minipage}

\begin{align*}
\Pi(S') &= \pi \circ \pi'(S'' \cup S''') \\
&= \pi(\pi'(S'')\vee \pi'(S''')) \\
&= \pi\circ \pi'(S'') \wedge \pi\circ\pi'(S''') \\
&= \Pi(S'') \wedge \Pi(S''').
\end{align*}
The above proof holds for any pair $S'', S''' \subseteq S'$ as long as $S'' \cup S''' = S'$.
However, different choices of $(S'',S''')$ can yield different run time which will be discussed later.

\paragraph{Extended Use in a General Setting.}
The induction algorithm, consisting of several runs of Algorithm~\ref{algo:base-partn} followed by several runs of Algorithm~\ref{algo:meet}, works for any finitely generated group $\langle S \rangle$ acting on any finite set $X$, \ie Problem~\eqref{eqn:induction-algo-problem} in general.
But we may do it more generally.
The way Algorithm~\ref{algo:base-partn} is currently written---especially the simple check of whether $y \in X$---allows the induction algorithm to be run in a more \emph{general setting},
where $\langle S \rangle$ acts on a possibly infinite set $\tilde{X}$, and our input $X$ is a subset of the larger ambient space $\tilde{X}$.
Compared to Problem~\eqref{eqn:induction-algo-problem}, this more general setting is more precisely stated as follows.
\begin{equation}
\label{eqn:induction-algo-problem-general}
\begin{aligned}
\mbox{Inputs:} \quad & \mbox{1) a possibly infinite set $\tilde{X}$ (to be abstracted)}, \\
& \mbox{2) a finite generating set $S \subseteq \transf(\tilde{X})$}, \\
& \mbox{3) a finte subset $X \subseteq \tilde{X}$}; \\
& \mbox{the underlying group action is: $\langle S \rangle$ acting on $\tilde{X}$ (not $X$)}. \\
\mbox{Output:} \quad & \mbox{the abstraction semiuniverse (for $\tilde{X}$ restricted to $X$)} \\
& \qquad \Pi(2^S) = \pi \circ \pi'(2^S) = \{X/\langle S' \rangle \mid S' \subseteq S\}, \\
& \mbox{where $X/\langle S' \rangle := \{\langle S'\rangle x \cap X\mid x\in X\}$.}
\end{aligned}
\end{equation}
Here we are using a non-standard yet unambigiuous notation $X/\langle S' \rangle$ to mean the partition of $X$ obtained from restricting $\tilde{X}/\langle S' \rangle$ to $X$, since the group action is on $\tilde{X}$ instead of $X$.
Problem~\eqref{eqn:induction-algo-problem-general} is more general since it includes Problem~\eqref{eqn:induction-algo-problem} as a special case by setting $X = \tilde{X}$ for a finite $\tilde{X}$.
So now, we can do computational abstraction on an infinite input space, and presents the result on whatever finite subspace is asked for.
However, we need to be careful here.
Even though the algorithm will run, it is not always accurate in the general setting (it is accurate for Problem~\eqref{eqn:induction-algo-problem} only): both Algorithm~\ref{algo:base-partn} and \ref{algo:meet} may only give an approximation for Problem~\eqref{eqn:induction-algo-problem-general}.
A full discussion on this general setting, including where the approximation might occur and how to correct it, will be detailed in Section~\ref{sec:restrict-to-finite-subspaces}.

\begin{figure}[t]
\begin{center}
\includegraphics[width=0.63\columnwidth]{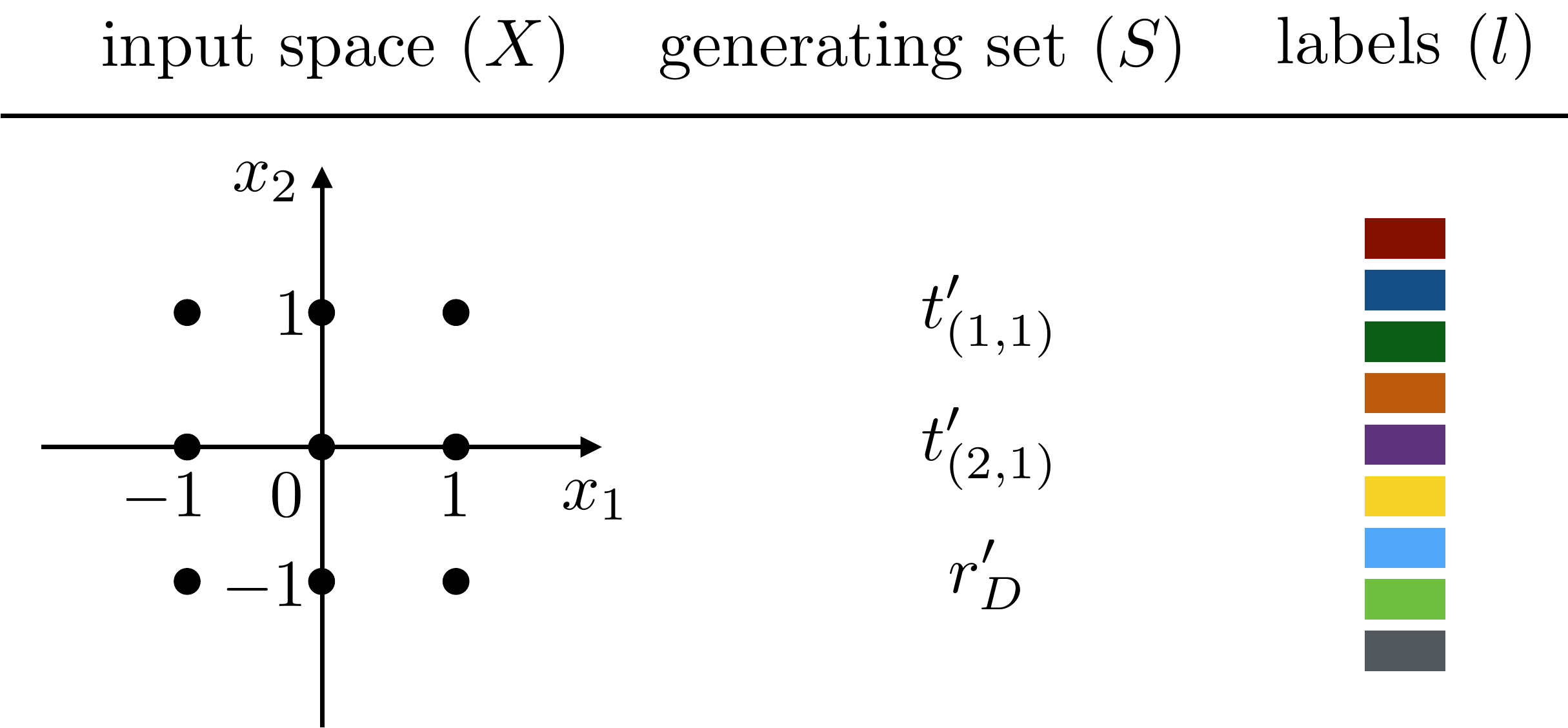}
\end{center}
\caption{A toy example on which the induction algorithm is to be run.}
\label{fig:toy-example-for-induction-algo}
\end{figure}

\paragraph{Run on an Toy Example.}
We walk through a complete run of the above induction algorithm on a toy example (Figure~\ref{fig:toy-example-for-induction-algo}).
In order to hit all cases and show the algorithm's full functionality, we draw the example from the general setting formalized in Problem~\eqref{eqn:induction-algo-problem-general}:
\begin{align*}
X &= \{x \in \integers^2 \mid \|x\|_\infty \leq 1\} \subseteq \integers^2, \\
S &= \{t'_{(1,1)}, t'_{(2,1)}, r'_{D}\} \subseteq \transf(\integers^2) \quad \mbox{ where } D = \begin{bmatrix}0&1\\1&0\end{bmatrix}.
\end{align*}
So here, $X = \{(-1,-1),(0,-1),(1,-1),(-1,0),(0,0),(1,0),(-1,1),(0,1),(1,1)\}$ is a nine-element set;
$S$ contains three generators, where $t'_{(1,1)}, t'_{(2,1)}$ are two translations by vectors $(1,1), (2,1)$, respectively, and $r'_{D}$ is the reflection about the diagonal line ($x_2 = x_1$).
More explicitly, for any $x = (x_1,x_2) \in X$,
$t'_{(1,1)}(x) = (x_1+1,x_2+1)$, $t'_{(2,1)}(x) = (x_1+2,x_2+1)$, and $r'_{D}(x) = (x_2,x_1)$.
The desired output for this toy example is the following abstraction semiuniverse consisting of $2^{|S|} = 2^3 = 8$ abstractions:
\begin{align*}
\Pi(2^S) &= \{~X/\langle\emptyset\rangle, \\
& \hspace{0.26in}~
X/\langle t'_{(1,1)} \rangle, ~~X/\langle t'_{(2,1)} \rangle, ~~X/\langle r'_{D} \rangle, \\
& \hspace{0.26in}~
X/\langle\{t'_{(1,1)}, t'_{(2,1)}\}\rangle, ~~X/\langle\{t'_{(2,1)}, r'_{D}\}\rangle, ~~X/\langle\{r'_{D}, t'_{(1,1)}\}\rangle, \\
& \hspace{0.26in}~
X/\langle\{t'_{(1,1)}, t'_{(2,1)}, r'_{D}\}\rangle~\}.
\end{align*}

\vspace{0.1in}
\noindent \underline{Global procedure.}
In a full run of the induction procedure, we first run Algorithm~\ref{algo:base-partn} three independent times to compute the three base partitions $X/\langle t'_{(1,1)} \rangle$, $X/\langle t'_{(2,1)} \rangle$, $X/\langle r'_{D} \rangle$;
we then run Algorithm~\ref{algo:meet} three independent times to compute the next three partitions $X/\langle\{t'_{(1,1)}, t'_{(2,1)}\}\rangle$, $X/\langle\{t'_{(2,1)}, r'_{D}\}\rangle$, $X/\langle\{r'_{D}, t'_{(1,1)}\}\rangle$, each of which is computed from its corresponding base partitions, \eg $X/\langle\{t'_{(1,1)}, t'_{(2,1)}\}\rangle$ is computed from $X/\langle t'_{(1,1)} \rangle$ and $X/\langle t'_{(2,1)} \rangle$.
Lastly, we run Algorithm~\ref{algo:meet} one more time to compute $X/\langle\{t'_{(1,1)}, t'_{(2,1)}, r'_{D}\}\rangle$ from earlier computed partitions, \eg from $X/\langle\{t'_{(1,1)}, t'_{(2,1)}\}\rangle$ and $X/\langle r'_{D} \rangle$ (among many other choices).
Note that $X/\langle \emptyset \rangle := \{\{x\}\mid x\in X\}$ is the finest partition of $X$, which does not require any algorithmic run.
Next, we will zoom into this global process, and walk the readers through each individual run of Algorithm~\ref{algo:base-partn} and Algorithm~\ref{algo:meet} one at a time.

\begin{figure}[t]
\begin{center}
\includegraphics[width=0.8\columnwidth]{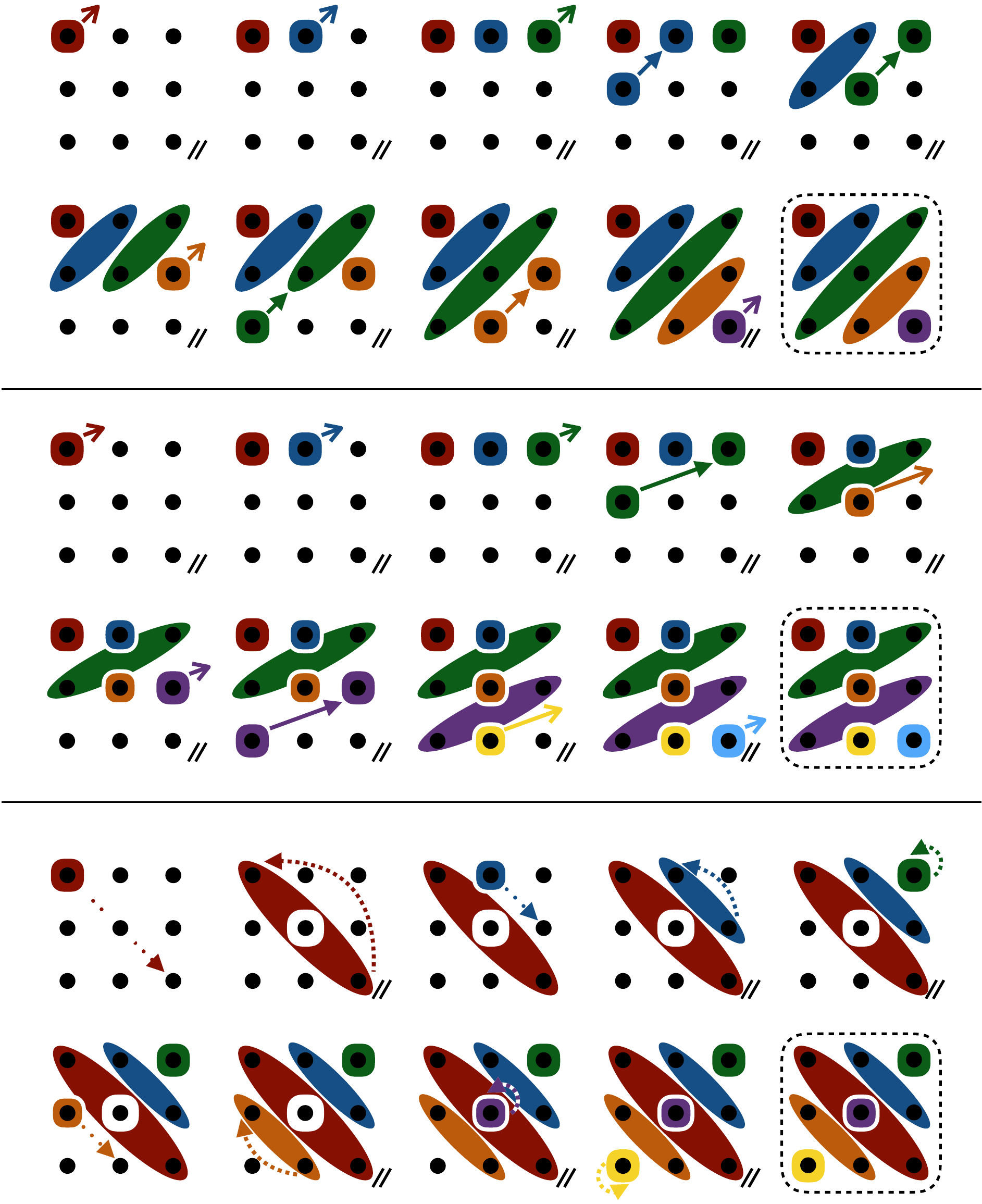}
\end{center}
\caption{Three independent runs of Algorithm~\ref{algo:base-partn} for generators $t'_{(1,1)}$ (top), $t'_{(2,1)}$ (middle), and $r'_{D}$ (bottom), respectively. Every color denotes the label of a partition cell; every double slash (/\!/) marks the end of a for-loop iteration; every arrow indicates a transformation (see more details in the text for the four types of arrows). Final outputs are in dashed boxes.}
\label{fig:toy-algo1}
\end{figure}

\vspace{0.1in}
\noindent \underline{Algorithm~\ref{algo:base-partn}.}
The three independent runs of this algorithm are illustrated in Figure~\ref{fig:toy-algo1}.
The sequential coloring of the nine dots in the input set $X$ denotes the labeling process; the double slash (/\!/) denotes the end of an iteration in the for-loop (so, the first two runs cost nine iterations each, and the third run costs six iterations).
Both the coloring sequence and the double slashes mark the linear progress of the algorithm.
Closer attention should be paid to the arrows: every arrow, regardless of its type, denotes an action of applying the transformation (specified by the single generator).
However, the result of the transformation includes four cases (corresponding to the four conditions in Algorithm~\ref{algo:base-partn}, namely $y\notin X$, $y=x$, $y~is~labeled$, and otherwise).
Therefore, we adopt four different types of arrows to denote these four cases, respectively.

\vspace{-0.05in}
\begin{itemize}
\setlength\itemsep{0in}
\item [1.] \textit{An open arrow} indicates that the transformed point jumps out of the input set (see condition $y \notin X$).
There is only one type of open arrow (arrowhead is $>$); contrarily, there are three types of triangle arrows (arrowhead is a filled triangle).
\item [2.] \emph{A solid triangle arrow} indicates that the transformed point is an already labeled point (see condition $y~is~labeled$).
\item [3.] \emph{A dashed triangle arrow} indicates that the transformed point coincides with an earlier point in the transformation sequence (see condition $y = x$).
\item [4.] \emph{A dotted triangle arrow} indicates that the transformed point does not fall under any of the above conditions (see condition of the while-loop).
\end{itemize}
\vspace{-0.05in}
The final output of each run of Algorithm~\ref{algo:base-partn}---a partition of $X$ whose cells are marked by colors---is shown in a dashed box in Figure~\ref{fig:toy-algo1}.

\begin{figure}[t]
\begin{center}
\includegraphics[width=0.98\columnwidth]{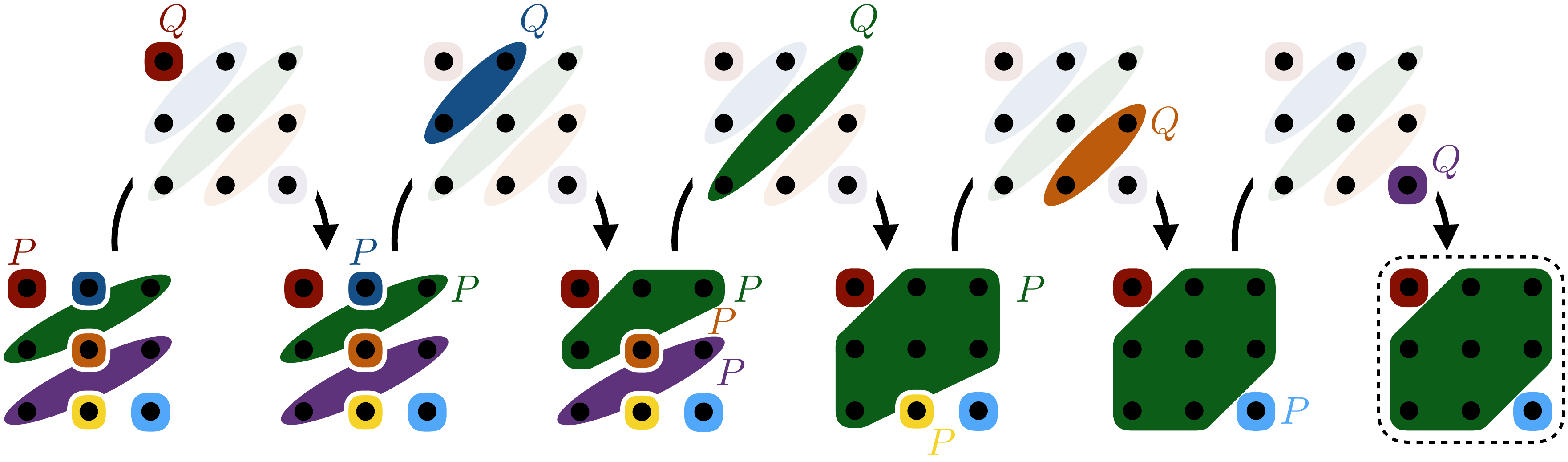}
\end{center}
\caption{One run of Algorithm~\ref{algo:meet} from input partitions $\pcal = X/\langle t'_{(2,1)} \rangle$ and $\qcal = X/\langle t'_{(1,1)} \rangle$. The outer for-loop over $Q \in \qcal$ is shown in the top row; the inner for-loop over $P \in \pcal$ is shown in the bottom row.
The marked $P$ cells are the ones that intersect with the following $Q$ cell.
During the entire process, $\qcal$ remains unchanged, whereas $\pcal$ changes in place from one of the algorithm inputs to the algorithm output.
The final output is in the dashed box.}
\label{fig:toy-algo2}
\end{figure}

\noindent \underline{Algorithm~\ref{algo:meet}.}
One run of this algorithm is illustrated in Figure~\ref{fig:toy-algo2} (other runs are similar).
On this run, the algorithm takes as inputs two partitions $\pcal = X/\langle t'_{(2,1)} \rangle$, $\qcal = X/\langle t'_{(1,1)} \rangle$, and computes their meet $\pcal \wedge \qcal$ for our desired output $X/\langle\{t'_{(1,1)}, t'_{(2,1)}\}\rangle$.
This is a pretty standard algorithm that computes the meet of two partitions in general.
The idea is to take one of the input partitions, say $\qcal$, as a fixed reference partition, and iterates its cells in the outer loop;
we then gradually change the other partition $\pcal$ \emph{in place} from an input partition to the output partition.
The changes are made in each inner loop: for each $Q \in \qcal$, we collect all cells in the current $\pcal$ that intersect with $Q$, and merge them into one cell.
Take the third outer iteration in Figure~\ref{fig:toy-algo2} as an example where $Q = \{(-1,-1),(0,0),(1,1)\}$ (green):
three out of the six cells in $\pcal$---namely $\{(-1,0),(0,1),(1,1)\}$ (green), $\{(0,0)\}$ (orange), and $\{(-1,-1),(1,0)\}$ (purple)---intersect with $Q$, so they are merged into one single cell (the green cell in the following $\pcal$).
The output of this run of Algorithm~\ref{algo:meet}---again a partition of $X$ whose cells are marked by colors---is shown in a dashed box in Figure~\ref{fig:toy-algo2}.
After all runs of Algorithm~\ref{algo:meet} finish, the final output of the entire induction algorithm is shown in Figure~\ref{fig:toy-algo-output-hierarchy}.

\begin{figure}[t]
\begin{center}
\includegraphics[width=0.8\columnwidth]{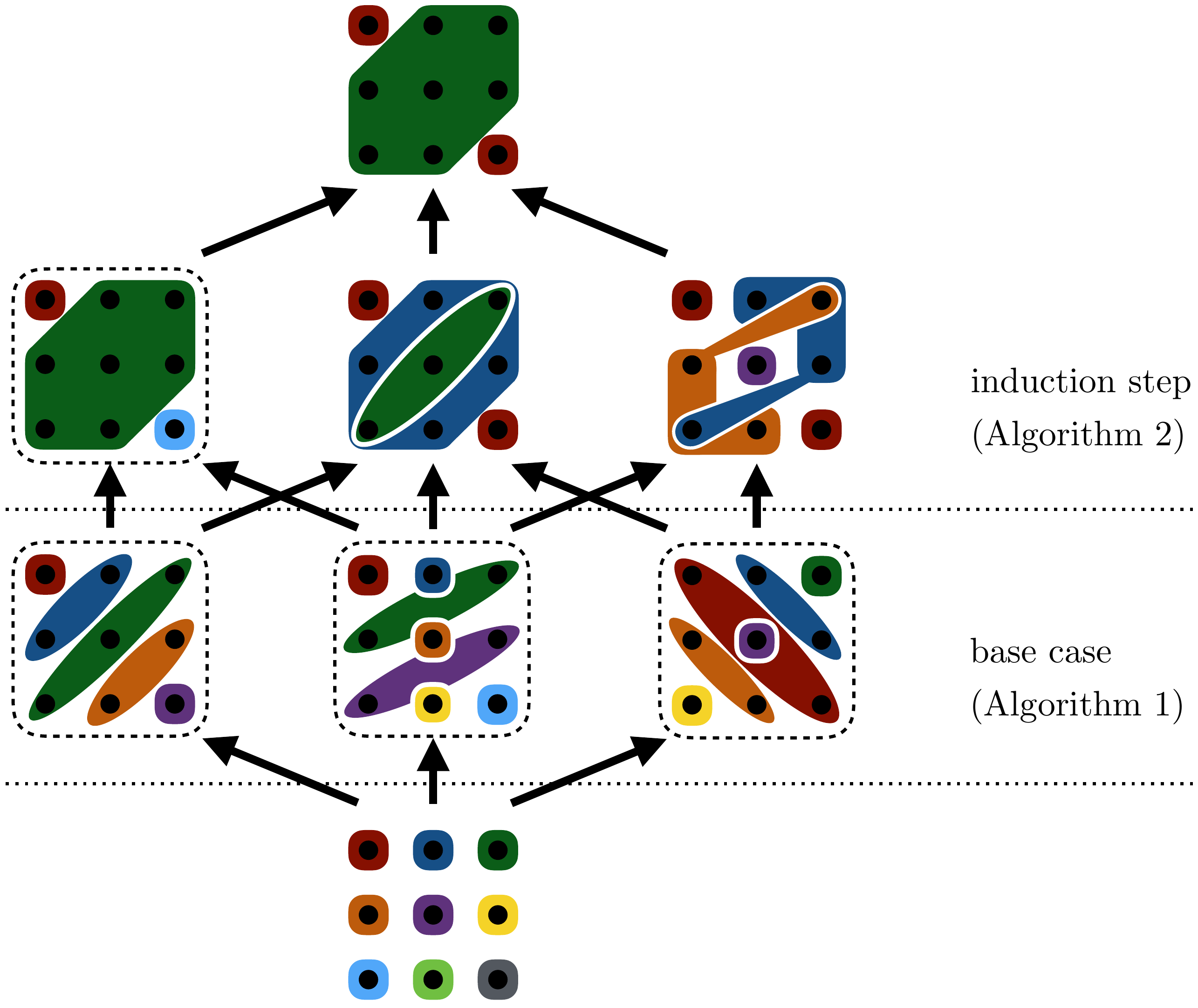}
\end{center}
\caption{The final output of the induction algorithm run on the toy example in Figure~\ref{fig:toy-example-for-induction-algo}. Every directed edge ($\acal \to \acal'$) denotes the ``finer than'' relation ($\acal \succeq \acal'$), since the input partitions of Algorithm~\ref{algo:meet} are both finer than the output partition. Partitions in the dashed boxes are the same partitions in the dashed boxes from Figures~\ref{fig:toy-algo1} and \ref{fig:toy-algo2}.}
\label{fig:toy-algo-output-hierarchy}
\end{figure}

\vspace{0.1in}
\noindent \underline{Caution: correctness.}
It is important to notice that this toy example falls under the general setting formalized in Problem~\eqref{eqn:induction-algo-problem-general}.
The underlying group action here is $\langle S \rangle$ acting on $\integers^n$ (not $X$) and the input space $X$ is a finite subset of $\integers^n$.
So, not all output partitions in Figure~\ref{fig:toy-algo-output-hierarchy} are correct.
It is an easy check that while all base partitions are correct, the leftmost partition in the second row from the top in Figure~\ref{fig:toy-algo-output-hierarchy} is only an approximation of the desired partition $X/\langle\{t_{(1,1)},t_{(2,1)}\}\rangle = \{X\}$ (since $(1,1)$ and $(2,1)$ form a basis of $\integers^n$).
However, every incorrect partition in Figure~\ref{fig:toy-algo-output-hierarchy} can be corrected by running the additional ``Expand-and-Restrict'' technique right after each run of Algorithm~\ref{algo:meet}.
See Section~\ref{sec:restrict-to-finite-subspaces} for more details, and particularly Section~\ref{sec:expand-and-restrict} for the rectification technique.

\paragraph{Computational Complexity.}
Recall in the general setting formalized in Problem~\eqref{eqn:induction-algo-problem-general}, each of the desired abstractions of $X$ is the partition $X/\langle S' \rangle = \{\langle S'\rangle x \cap X\mid x\in X\} = (\tilde{X}/\langle S' \rangle)|_X$, \ie the partition $\tilde{X}/\langle S' \rangle$ restricted to the subspace $X$ (see the precise definition in Definition~\ref{def:an-abstraction-of-a-subspace}).
The crux of this problem is \emph{orbit identification}: given any $x,x' \in X$, check whether there exists an $s \in \langle S' \rangle$ such that $x' = s\cdot x$.
In some special cases, there might be smart ways of conducting these checks in a magnitude lower than $O(|X|^2)$ times (like in our \emph{orbit tracing}).
But things may get tough in general.
So, before analyzing the computational complexity of our induction algorithm, we first discuss the complexity of the problem.

\vspace{0.1in}
\noindent \underline{Complexity of the Problem.}
We analyze the complexity of the problem---particularly, the problem of orbit identification---in the worst case.
Here, the worst case is among all possible $S$ (which implies all possible $S' \subseteq S$) and all possible $\tilde{X}$ (which implies all possible $X \subseteq \tilde{X}$).
Unfortunately, we can show that in the worst case, this problem is \emph{unsolvable}: it is even harder than the famous \emph{word problem} for groups, and the word problem was shown to be unsolvable \citep{Novikov1955, Boone1958, Britton1958}.
Our problem is harder because it includes the word problem as a special case.
To see this, let $G = \langle U | R \rangle$ be a group presentation, and consider the group action of $\langle\langle R \rangle\rangle$ acting on $F(U)$, where $\langle\langle R \rangle\rangle$ is the normally generated subgroup by $R$, $F(U)$ is the free group over $U$, and the group action is the group operation of $F(U)$ restricted to $\langle\langle R \rangle\rangle \times F(U)$.
The orbit identification problem for this group action is: given any $w,w' \in F(U)$, decide whether there exists an $r \in \langle\langle R \rangle\rangle$ such that $w' = r\cdot w$.
This is equivalent to decide when the two words $w,w'$ (elements of a free group are called \emph{words}) represent the same element in $G = F(U)/\langle\langle R \rangle\rangle$, or equivalently, when $w'\cdot w^{-1} \in \langle\langle R \rangle\rangle$, which is precisely the statement of the word problem.
A simple example of $G=\langle U|R \rangle$ with unsolvable word problem can be found in \citet{Collins1986}.
Although our problem is unsolvable in the worst case, it is solvable in many special cases.
For example, our induction algorithm can solve all instances of Problem~\eqref{eqn:induction-algo-problem} exactly; when further equipped with the ``Expand-and-Restrict'' technique (Section~\ref{sec:expand-and-restrict}), our algorithm can also solve many instances of Problem~\eqref{eqn:induction-algo-problem-general} either exactly or approximately.

\vspace{0.1in}
\noindent \underline{Complexity of the Induction Algorithm.}
Our induction algorithm always runs Algorithm~\ref{algo:base-partn} $|S|$ times for the base case, then runs Algorithm~\ref{algo:meet} in the induction step $2^{|S|}-|S|-1$ times.
In any run of Algorithm~\ref{algo:base-partn}, every point in the input space $X$ will be transformed exactly once and be labeled exactly once.
Thus, the complexity of Algorithm~\ref{algo:base-partn} is always $O(|X|)$ in all runs, yielding a total complexity of $O(|S||X|)$ for the base case.
Regarding the complexity of Algorithm~\ref{algo:meet}, we count the total number of merge steps.
The outer for-loop has exactly $|\qcal|$ iterations, and the number of merge steps in every outer iteration is precisely the number of the corresponding inner iterations.
The number of iterations in each inner for-loop is the number of cells in the current $\pcal$ that intersect with the current $Q \in \qcal$, which unfortunately varies from case to case.
However, given that the size of $\pcal$ is monotonically decreasing, we can upper bound this number by the initial $|\pcal|$.
So, the total number of merge steps is upper bounded by $|\pcal||\qcal|$; the complexity of Algorithm~\ref{algo:meet} is upper bounded by $O(|\pcal||\qcal|)$.
Unlike Algorithm~\ref{algo:base-partn} which has a fixed complexity for every individual run, the complexity of Algorithm~\ref{algo:meet} varies from run to run according to the sizes of each run's input partitions.
In case a rough estimation is useful, we can further upper bound $O(|\pcal||\qcal|)$ consistently by $O(|X|^2)$ for $\pcal$, $\qcal$ in all runs of Algorithm~\ref{algo:meet}.
Therefore, the total complexity of the entire induction algorithm is very loosely upper bounded by
\begin{align*}
O(|S||X|)+O(2^{|S|}|X|^2).
\end{align*}
We make two comments about this upper bound.
First, the exponential growth with respect to $|S|$ is not only inevitable but also necessary.
This is because our induction algorithm is expected to output a (semi)universe of $O(2^{|S|})$ abstractions, and in general we want a large universe so that the abstractions therein are as expressive as possible.
Second, while the first term in the upper bound is tight, the second term with a quadratic dependence on $|X|$ is very loose.
This suggests that we can do better, and sometimes much better, than $|X|^2$ in practice.
It is worth noting that how loose the bound is depends on the generators in $S$, the input space $X$, and where in the hierarchy Algorithm~\ref{algo:meet} is run.
While the first two are given (so we do not have a choice), we often have a choice for $\pcal$ and $\qcal$ when running Algorithm~\ref{algo:meet}, since in Equation~\eqref{eqn:induction-step} in the induction step, we can often have many choices for selecting $S''$ and $S'''$ especially when $S'$ is large.
Therefore, a clear strategy is to pick $\pcal = \Pi(S'')$ and $\qcal = \Pi(S''')$ among the candidate set $\{(S'',S''')|S''\subset S', S''' \subset S', S''\cup S''' = S'\}$ such that the product $|\pcal||\qcal|$ is minimized.

\vspace{0.1in}
In summary, the essence of the induction step is to do orbit identification indirectly.
However, we cannot get around orbit identification, but always have to do it in some form.
Therefore, it is important to notice that while our induction algorithm runs on all cases (including the worst case), this does \emph{not} contradict the fact that the problem in the worst case is unsolvable.
This is because the output from running the induction algorithm alone is not always accurate in the general setting (\ie Problem~\eqref{eqn:induction-algo-problem-general}), especially when considering a group action on an infinite set whose orbits are further to be restricted to a finite subset.
As will be detailed in Section~\ref{sec:restrict-to-finite-subspaces}, we need the Expand-and-Restrict technique to correct errors.
In the worst case and when approximations are unacceptable, we need an infinite expand, which is computationally infeasible and agrees with the unsolvability of the problem.

\subsection{Finding a Generating Set of $\iso(\integers^n)$}
\label{sec:find-a-gen-set}

We give an example of finding a finite generating set.
The key idea is based on recursive group decompositions.
In light of storing abstractions of a set $X$ in digital computers, we consider the discrete space $X = \integers^n (\subseteq \reals^n)$.
Further, we restrict our attention to generators that are isometries of $\integers^n$, since $\iso(\integers^n)$ is finitely generated.
We show this by explicitly finding a finite generating set of $\iso(\integers^n)$.

Recall that (in Section~\ref{sec:isometries-of-zn}) we presented one of the characterizations of $\iso(\integers^n)$ as
\begin{align}\label{eqn:generator-decomposition-of-iso-zn}
\iso(\integers^n) = \langle \tra(\integers^n) \cup \rot(\integers^n) \rangle \quad \mbox{ where } \tra(\integers^n) \cap \rot(\integers^n) = \{\id\}.
\end{align}
We start from this characterization, and seek a generating set of $\tra(\integers^n)$ and a generating set of $\rot(\integers^n)$.
Finding generators of $\tra(\integers^n)$ is easy: $\tra(\integers^n) = \langle t'_{\bm{e_1}} \cup \cdots \cup t'_{\bm{e_n}} \rangle$.
However, finding generators of $\rot(\integers^n)$ requires more structural inspections.
The strategy is to first study the matrix group $\orthmat_n(\integers)$ which is isomorphic to $\rot(\integers^n)$, and then transfer results to $\rot(\integers^n)$.
Interestingly, $\orthmat_n(\integers)$ has a decomposition similar to what $\iso(\integers^n)$ has in Equation~\eqref{eqn:generator-decomposition-of-iso-zn}.
By definition, $\orthmat_n(\integers)$ consists of all orthogonal matrices with integer entries.
For any $A \in \orthmat_n(\integers)$, the orthogonality and integer-entry constraints restrict every column vector of $A$ to be a unique standard basis vector or its negation.
This will lead to the decomposition of $\rot(\integers^n)$.

\paragraph{Notations.}
$\bm{1} = (1,\ldots, 1) \in \reals^n$ is the all-ones vector;
$\bm{e_1}, \ldots, \bm{e_n}$ are the standard basis vectors of $\reals^n$ where $\bm{e_i} \in \{0,1\}^n$ has a $1$ in the $i$th coordinate and $0$s elsewhere;
$\bm{\nu_1}, \ldots, \bm{\nu_n}$ are the so-called unit negation vectors of $\reals^n$ where $\bm{\nu_i} \in \{-1,1\}^n$ has a $-1$ in the $i$th coordinate and $1$s elsewhere.

\begin{definition}[Permutation]\label{def:permutation}
A {permutation matrix} is a matrix obtained by permuting the rows of an identity matrix; we denote the set of all $n \times n$ permutation matrices by $\perm_n$.
A {permutation of an index set} is a bijection $\sigma: \{1, \ldots, n\} \to \{1, \ldots, n\}$; the set of all permutations of the size-$n$ index set is known as the symmetric group $S_n$.
A {permutation of (integer-valued) vectors} is a rotation $r'_P: \integers^n \to \integers^n$ for some $P \in \perm_n$; we denote the set of all permutations of $n$-dimensional vectors by $\rot_\perm(\integers^n) \subseteq \rot(\integers^n)$.
\end{definition}

\begin{definition}[Negation]\label{def:negation}
A {(partial) negation matrix} is a diagonal matrix whose diagonal entries are drawn from $\{-1,1\}$; we denote the set of all $n \times n$ negation matrices by $\nega_n$.
A {(partial) negation of (integer-valued) vectors} is a rotation $r'_N: \integers^n \to \integers^n$ for some $N \in \nega_n$; we denote the set of all negations of $n$-dimensional vectors by $\rot_\nega(\integers^n) \subseteq \rot(\integers^n)$.
\end{definition}

\begin{remark}
Under Definitions~\ref{def:permutation} and \ref{def:negation}, one can verify that a permutation (of vectors) maps $x$ to $Px$ by permuting $x$'s coordinates according to $P \in \perm_n$;
likewise, a negation (of vectors) maps $x$ to $Nx$ by negating $x$'s coordinates according to $N \in \nega_n$.
\end{remark}

\begin{theorem}\label{thm:permutation-and-negation-isomorphism}
We have the following characterizations of permutations and negations:
\begin{align*}
(\rot_\perm(\integers^n), \circ) \cong (\perm_n, \cdot) \cong (S_n, \circ) \quad \mbox{ and } \quad (\rot_\nega(\integers^n), \circ) \cong (\nega_n, \cdot).
\end{align*}
In particular, these imply that $|\rot_\perm(\integers^n)| = |\perm_n| = |S_n| = n!$ and $|\rot_\nega(\integers^n)| = |\nega_n| = 2^n$.
\end{theorem}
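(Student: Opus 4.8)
The plan is to exhibit explicit isomorphisms in each case and then read off the cardinalities from them. Consider first the permutation side. Define $\Phi_\perm \colon \perm_n \to \rot_\perm(\integers^n)$ by $\Phi_\perm(P) = r'_P$. This is surjective by the very definition of $\rot_\perm(\integers^n)$, and it is a homomorphism because $(PQ)x = P(Qx)$ for all $x \in \integers^n$, hence $r'_{PQ} = r'_P \circ r'_Q$; I note in passing that $\rot_\perm(\integers^n)$ is thereby a genuine subgroup of $\rot(\integers^n)$, since $\perm_n$ is closed under matrix multiplication and matrix inverse (the inverse of a permutation matrix is its transpose). For injectivity, suppose $r'_P = r'_Q$ as functions on $\integers^n$; evaluating at the standard basis vectors $\bm{e_1}, \ldots, \bm{e_n} \in \integers^n$ gives $P\bm{e_i} = Q\bm{e_i}$ for every $i$, i.e.\ $P$ and $Q$ have the same columns, so $P = Q$. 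Thus $\Phi_\perm$ is an isomorphism.

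Next I would invoke the classical isomorphism $(\perm_n, \cdot) \cong (S_n, \circ)$: send $\sigma \in S_n$ to the matrix $P_\sigma$ whose $j$th column is $\bm{e}_{\sigma(j)}$. The only point requiring care here is the bookkeeping of this convention so that the map is a homomorphism rather than an anti-homomorphism---with the column convention just stated one checks $(P_\sigma P_\tau)_{ik} = \sum_j [i = \sigma(j)][j = \tau(k)] = [i = \sigma(\tau(k))]$, so $P_{\sigma\tau} = P_\sigma P_\tau$. Bijectivity is immediate, since $\sigma$ is recovered from $P_\sigma$ by reading off which row of each column carries the $1$. Composing this with $\Phi_\perm$ yields $(\rot_\perm(\integers^n), \circ) \cong (\perm_n, \cdot) \cong (S_n, \circ)$.

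The negation side is entirely parallel. Define $\Phi_\nega \colon \nega_n \to \rot_\nega(\integers^n)$ by $\Phi_\nega(N) = r'_N$; it is surjective by definition and a homomorphism since $(NM)x = N(Mx)$, and again $\rot_\nega(\integers^n)$ is a subgroup because $\nega_n$ is closed under multiplication and inverse (indeed each $N \in \nega_n$ satisfies $N^2 = I$, so $N^{-1} = N$). Injectivity follows exactly as before: $r'_N = r'_M$ forces $N\bm{e_i} = M\bm{e_i}$ for all $i$, i.e.\ equal diagonal entries, so $N = M$. Hence $\Phi_\nega$ is an isomorphism.

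Finally, the cardinalities drop out. We have $|S_n| = n!$ by the usual count of bijections of an $n$-element set, and the isomorphisms above transfer this to $|\rot_\perm(\integers^n)| = |\perm_n| = n!$. For negations, a matrix in $\nega_n$ is determined by an independent choice of sign in each of its $n$ diagonal positions, so $|\nega_n| = 2^n$, and therefore $|\rot_\nega(\integers^n)| = 2^n$. I do not anticipate a genuine obstacle; the one place to be deliberate is fixing the column/row convention in the $\perm_n \cong S_n$ step so the correspondence composes in the right order, and confirming that the two sets $\rot_\perm(\integers^n)$ and $\rot_\nega(\integers^n)$ really are subgroups (which is what makes the isomorphism statements meaningful)---both handled by the closure remarks above.
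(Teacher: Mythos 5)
Your proof is correct and follows essentially the same route as the paper: exhibit the evident correspondences $P \leftrightarrow r'_P$, $\sigma \leftrightarrow P^\sigma$ (with the same column convention), and $N \leftrightarrow r'_N$, check they are bijective homomorphisms, and read off the cardinalities. The only differences are cosmetic — you orient the maps from matrices to rotations and verify injectivity by evaluating on standard basis vectors, and you spell out the subgroup/closure checks the paper leaves as an exercise.
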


\begin{proof}
It is an exercise to check that all entities in the theorem are indeed groups.

Let $\phi_\perm: \rot_\perm(\integers^n) \to \perm_n$ be the function given by $\phi_\perm(r'_P) = P$, for any $r'_P \in \rot_\perm(\integers^n)$. For any $r'_P, r'_Q \in \rot_\perm(\integers^n)$, if $\phi_\perm(r'_P) = \phi_\perm(r'_Q)$, \ie $P = Q$, then $r'_P = r'_Q$, so $\phi_\perm$ is injective.
For any $P \in \perm_n$, $r'_P \in \rot_\perm(\integers^n)$ and $\phi_\perm(r'_P) = P$, so $\phi_\perm$ is surjective.
Further, for any $r'_P, r'_Q \in \rot_\perm(\integers^n)$, $\phi_\perm(r'_P \circ r'_Q) = \phi_\perm(r'_{P \cdot Q}) = P \cdot Q = \phi_\perm(r'_P) \cdot \phi_\perm(r'_Q)$, so $\phi_\perm$ is a homomorphism.
Now we see that $\phi_\perm$ is an isomorphism. So, $(\rot_\perm(\integers^n), \circ) \cong (\perm_n, \cdot)$.

Let $\phi_S: S_n \to \perm_n$ be the function given by $\sigma \mapsto P^\sigma$, where $P^\sigma$ is an $n \times n$ permutation matrix obtained by permuting the rows of the identity matrix according to $\sigma$, \ie
\begin{align*}
P^\sigma_{ij} = \begin{cases}
1 & i = \sigma(j) \\
0 & i \neq \sigma(j)
\end{cases}
\quad \mbox{ for any } i, j \in \{1, \ldots, n\}.
\end{align*}
For any $\sigma, \mu \in S_n$, if $\phi_S(\sigma) = \phi_S(\mu)$, \ie $P^\sigma = P^\mu$, then $\sigma(j) = \mu(j)$ for all $j \in \{1, \ldots, n\}$, \ie $\sigma = \mu$, so $\phi_S$ is injective.
For any $P \in \perm_n$, let $\sigma: \{1, \ldots, n\} \to \{1, \ldots, n\}$ be the function given by $\sigma(j) \in \{i|P_{ij} = 1\}$, which is well-defined since $\{i|P_{ij} = 1\}$ is a singleton for all $j \in \{1, \ldots, n\}$ given that $P \in \perm_n$ is a permutation matrix. It is clear that $\sigma \in S_n$, and $\phi_S(\sigma) = P$. So, $\phi_S$ is surjective.
Further, for any $\sigma, \mu \in S_n$, $\phi_S(\sigma \circ \mu) = P^{\sigma \circ \mu} = P^{\sigma} \cdot P^{\mu} = \phi_S(\sigma) \cdot \phi_S(\mu)$ where the second equality holds because for all $i,j \in \{1, \ldots, n\}$,
\begin{align*}
(P^{\sigma} \cdot P^{\mu})_{ij} = \sum_{k=1}^n P^\sigma_{ik}\cdot P^\mu_{kj} = P^\sigma_{i\mu(j)} \cdot 1 = \begin{cases}
1 & i = \sigma \circ \mu(j) \\
0 & i \neq \sigma \circ \mu(j)
\end{cases} = P^{\sigma \circ \mu}_{ij},
\end{align*}
so $\phi_S$ is a homomorphism.
Now we see that $\phi_S$ is an isomorphism. So, $(S_n, \circ) \cong (\perm_n, \cdot)$.

Let $\phi_\nega: \rot_\nega(\integers^n) \to \nega_n$ be the function given by $\phi_\nega(r'_N) = N$, for any $r'_N \in \rot_\nega(\integers^n)$. For any $r'_N, r'_M \in \rot_\nega(\integers^n)$, if $\phi_\nega(r'_N) = \phi_\nega(r'_M)$, \ie $N = M$, then $r'_N = r'_M$, so $\phi_\nega$ is injective.
For any $N \in \nega_n$, $r'_N \in \rot_\nega(\integers^n)$ and $\phi_\nega(r'_N) = N$, so $\phi_\nega$ is surjective.
Further, for any $r'_N, r'_M \in \rot_\nega(\integers^n)$, $\phi_\nega(r'_N \circ r'_M) = \phi_\nega(r'_{N \cdot M}) = N \cdot M = \phi_\nega(r'_N) \cdot \phi_\nega(r'_M)$, so $\phi_\nega$ is a homomorphism.
Now we see that $\phi_\nega$ is an isomorphism. So, $(\rot_\nega(\integers^n), \circ) \cong (\nega_n, \cdot)$.
\end{proof}

\begin{theorem}
We have the following characterization of $\orthmat_n(\integers)$:
\begin{align*}
\orthmat_n(\integers) =  \langle \nega_n \cup \perm_n \rangle \quad \mbox{ where } \nega_n \cap \perm_n = \{I\}.
\end{align*}
\end{theorem}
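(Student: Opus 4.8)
The statement bundles two claims: the intersection identity $\nega_n \cap \perm_n = \{I\}$ and the generation identity $\orthmat_n(\integers) = \langle \nega_n \cup \perm_n \rangle$. The plan is to dispose of the intersection and the ``easy'' inclusion $\langle \nega_n \cup \perm_n \rangle \leq \orthmat_n(\integers)$ first, then prove the reverse inclusion. For the intersection, an element of $\nega_n$ is diagonal with $\pm 1$ diagonal entries, while an element of $\perm_n$ has its unique nonzero entry in each row equal to $1$; a matrix meeting both constraints is diagonal with all diagonal entries $1$, hence $I$, and $I$ obviously lies in both sets. For the easy inclusion it suffices to check $\nega_n \subseteq \orthmat_n(\integers)$ and $\perm_n \subseteq \orthmat_n(\integers)$: a negation matrix $N$ satisfies $N^\top N = N^2 = I$ with integer entries, and a permutation matrix has integer entries and orthonormal columns (a rearrangement of the standard basis); since $\orthmat_n(\integers)$ is a group, it contains the subgroup generated by any subset of itself.

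The substantive direction is $\orthmat_n(\integers) \leq \langle \nega_n \cup \perm_n \rangle$. I would take $A \in \orthmat_n(\integers)$ and invoke the observation recorded just before the theorem: each column of $A$ is an integer vector of Euclidean norm $1$ (from orthonormality together with integrality), and the only such vectors are $\pm\bm{e_1}, \ldots, \pm\bm{e_n}$, since $\sum_i x_i^2 = 1$ with all $x_i \in \integers$ forces exactly one coordinate to be $\pm 1$ and the rest to vanish. Writing the $j$th column of $A$ as $\epsilon_j \bm{e_{\sigma(j)}}$ with $\epsilon_j \in \{-1,1\}$, pairwise orthogonality of distinct columns forces $\sigma$ to be a bijection of $\{1,\ldots,n\}$. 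Setting $P^\sigma \in \perm_n$ to be the permutation matrix of $\sigma$ as in Theorem~\ref{thm:permutation-and-negation-isomorphism} (so that $P^\sigma \bm{e_j} = \bm{e_{\sigma(j)}}$) and $D := \diag(\epsilon_1, \ldots, \epsilon_n) \in \nega_n$, one checks $P^\sigma D\, \bm{e_j} = \epsilon_j \bm{e_{\sigma(j)}} = A \bm{e_j}$ for all $j$, whence $A = P^\sigma D$. Thus $A$ is a product of two elements of $\nega_n \cup \perm_n$ and so lies in $\langle \nega_n \cup \perm_n \rangle$, finishing the argument.

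I do not anticipate a genuine obstacle: every step is elementary linear algebra. The only place demanding a little care is the bookkeeping in the factorization $A = P^\sigma D$ --- keeping the order of the permutation and negation factors consistent with the $P^\sigma$ convention fixed earlier in the paper; equivalently one may write $A = D' P^\sigma$ for a suitable sign matrix $D'$, and either presentation closes the proof.
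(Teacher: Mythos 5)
Your proof is correct and follows essentially the same route as the paper: both directions hinge on the observation that integrality plus orthonormality forces every column of $A \in \orthmat_n(\integers)$ to be $\pm\bm{e_k}$ with distinct indices, yielding a factorization of $A$ into a permutation matrix and a sign (negation) matrix. The only cosmetic difference is that you write $A = P^\sigma D$ while the paper writes $A = NP$ with the negation on the left; as you note, either ordering completes the argument.
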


\begin{proof}
We first show that $\nega_n, \perm_n \leq \orthmat_n(\integers)$.
$(\orthmat_n(\integers), \cdot)$ is a group since matrix multiplication $\cdot$ is associative, $I \in \orthmat_n(\integers)$ is the identity element, and for any $A \in \orthmat_n(\integers)$, $A^\top \in \orthmat_n(\integers)$ is its inverse. Pick any $N \in \nega_n$, then $N \in \integers^{n\times n}$ and $N^\top N = NN = I$, so $N \in \orthmat_n(\integers)$, which implies that $\nega_n \subseteq \orthmat_n(\integers)$. Pick any $P \in \perm_n$, then $P = [\bm{e_{\sigma(1)}}, \cdots, \bm{e_{\sigma(n)}}] \in \integers^{n \times n}$ for some $\sigma \in S_n$ and $(P^\top P)_{ij} = \bm{e_{\sigma(i)}^\top}\bm{e_{\sigma(j)}} = \delta_{ij}$, \ie $P^\top P = I$, so $P \in \orthmat_n(\integers)$, which implies that $\perm_n \subseteq \orthmat_n(\integers)$.
Now we perform subgroup tests to show that $\nega_n, \perm_n \leq \orthmat_n(\integers)$. First, we check that 1) $I \in \nega_n$, 2) for any $N,N' \in \nega_n$, $NN' \in \nega_n$, 3) for any $N \in \nega_n$, $N^{-1} = N \in \nega_n$; therefore, $\nega_n \leq \orthmat_n(\integers)$. Second, we check that 1) $I \in \perm_n$, 2) for any $P,P' \in \perm_n$, $PP' \in \perm_n$, 3) for any $P \in \perm_n$, $P^{-1} = P^\top \in \perm_n$; therefore, $\perm_n \leq \orthmat_n(\integers)$.

Now we show that $\nega_n \cap \perm_n = \{I\}$. Pick any $N \in \nega_n \backslash \{I\}$ and any $P \in \perm_n$. It is clear that $N \neq P$ since $N$ has at least one $-1$ entries while $P$ has no $-1$ entries. This implies that $(\nega_n\backslash \{I\})\cap \perm_n = \emptyset$. Further, $I \in \nega_n \cap \perm_n$. Therefore, $\nega_n \cap \perm_n = \{I\}$.

Lastly we show that $\orthmat_n(\integers) =  \langle \nega_n \cup \perm_n \rangle$. It is clear that $\langle \nega_n \cup \perm_n \rangle \subseteq \orthmat_n(\integers)$, since $\nega_n, \perm_n \leq \orthmat_n(\integers)$.
Conversely, pick any $A = [\bm{a_1}, \cdots, \bm{a_n}] \in \orthmat_n(\integers)$ where $\bm{a_i}$ denotes the $i$th column of $A$. By definition, $A^\top A = I$, so $\langle \bm{a_i}, \bm{a_i} \rangle = \|\bm{a_i}\|_2^2 = 1$ for $i \in \{1, \ldots, n\}$, and $\langle \bm{a_i}, \bm{a_j} \rangle = 0$ for $i,j \in \{1, \ldots, n\}$ and $i \neq j$. On the one hand, given $A \in \integers^{n \times n}$, the unit-norm property $\|\bm{a_i}\|_2^2 = 1$ implies that $\bm{a_i}$ is a standard basis vector or its negation, \ie $\bm{a_i} = \pm\bm{e_k}$ for some $k \in \{1, \ldots, n\}$. On the other hand, for $i \neq j$, the orthogonality property $\langle \bm{a_i}, \bm{a_j} \rangle = 0$ implies that for some $k \neq k'$, $\bm{a_i} = \pm\bm{e_k}$ and $\bm{a_j} = \pm\bm{e_{k'}}$. Thus, there exist some vector $\alpha = (\alpha_1, \ldots, \alpha_n) \in \{1,-1\}^n$ and some permutation $\sigma \in S_n$ such that $A = [\alpha_1\bm{e_{\sigma(1)}}, \cdots, \alpha_n\bm{e_{\sigma(n)}}] = \diag(\alpha)[\bm{e_{\sigma(1)}}, \cdots, \bm{e_{\sigma(n)}}] = NP$, where $N = \diag(\alpha) \in \nega_n$ and $P = [\bm{e_{\sigma(1)}}, \cdots, \bm{e_{\sigma(n)}}] \in \perm_n$. This implies $A \in \langle \nega_n \cup \perm_n \rangle$. So, $\orthmat_n(\integers) \subseteq \langle \nega_n \cup \perm_n \rangle$.
\end{proof}

\begin{corollary}\label{coro:generator-decomposition-of-rot-zn}
We have the following characterization of $\rot(\integers^n)$:
\begin{align}\label{eqn:generator-decomposition-of-rot-zn}
\rot(\integers^n) = \langle \rot_\nega(\integers^n) \cup \rot_\perm(\integers^n) \quad \mbox{ where } \rot_\nega(\integers^n) \cap \rot_\perm(\integers^n) = \{\id\}.
\end{align}
\end{corollary}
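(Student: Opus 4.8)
The plan is to transport the characterization of $\orthmat_n(\integers)$ just established (the theorem $\orthmat_n(\integers) = \langle \nega_n \cup \perm_n \rangle$ with $\nega_n \cap \perm_n = \{I\}$) through the isomorphism $\rot(\integers^n) \cong \orthmat_n(\integers)$. Concretely, I would introduce the map $\phi: \rot(\integers^n) \to \orthmat_n(\integers)$ given by $r'_A \mapsto A$. One checks, exactly as in the proof of Theorem~\ref{thm:permutation-and-negation-isomorphism} for $\phi_\perm$ and $\phi_\nega$, that $\phi$ is a group isomorphism: it is a homomorphism since $r'_A \circ r'_B = r'_{AB}$, it is surjective by the very definition of $\rot(\integers^n)$, and it is injective because $x \mapsto Ax$ determines $A$ uniquely on $\integers^n$ (which contains a real basis). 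Moreover $\phi$ restricts to the isomorphisms $\phi_\nega$ and $\phi_\perm$ of Theorem~\ref{thm:permutation-and-negation-isomorphism} on the subsets $\rot_\nega(\integers^n)$ and $\rot_\perm(\integers^n)$ (Definitions~\ref{def:permutation} and~\ref{def:negation}), so $\phi(\rot_\nega(\integers^n)) = \nega_n$ and $\phi(\rot_\perm(\integers^n)) = \perm_n$.

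Next I would invoke two standard facts about a group isomorphism $\phi: G \to G'$: (i) it commutes with taking generated subgroups, $\phi(\langle S \rangle) = \langle \phi(S) \rangle$ for any $S \subseteq G$; and (ii) being in particular a bijection, it commutes with intersection, $\phi(A \cap B) = \phi(A) \cap \phi(B)$ for any subsets $A, B \subseteq G$. Both are immediate consequences of $\phi$ being a bijective homomorphism, so no real work is needed.

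Then the corollary follows at once. Applying (i) with $S = \rot_\nega(\integers^n) \cup \rot_\perm(\integers^n)$, and using $\phi(\rot(\integers^n)) = \orthmat_n(\integers) = \langle \nega_n \cup \perm_n \rangle$ from the preceding theorem, gives $\phi(\langle \rot_\nega(\integers^n) \cup \rot_\perm(\integers^n) \rangle) = \langle \nega_n \cup \perm_n \rangle = \phi(\rot(\integers^n))$, whence $\langle \rot_\nega(\integers^n) \cup \rot_\perm(\integers^n) \rangle = \rot(\integers^n)$ by injectivity of $\phi$. Applying (ii), $\phi(\rot_\nega(\integers^n) \cap \rot_\perm(\integers^n)) = \nega_n \cap \perm_n = \{I\} = \phi(\{\id\})$, so again by injectivity $\rot_\nega(\integers^n) \cap \rot_\perm(\integers^n) = \{\id\}$.

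I do not anticipate any genuine obstacle here: the only points requiring (routine) care are the well-definedness and bijectivity of $\phi$, which hinge on the already-used fact that a linear map on $\integers^n$ determines its matrix uniquely, and the two commutation properties of isomorphisms, which are standard. All the mathematical substance sits in the preceding theorem about $\orthmat_n(\integers)$; the corollary is simply its image under $\phi$, mirroring how Equation~\eqref{eqn:generator-decomposition-of-iso-zn} mirrors the analogous decomposition of $\orthmat_n(\integers)$.
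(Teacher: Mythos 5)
Your proposal is correct and follows essentially the same route the paper intends: the corollary is obtained by transporting the theorem $\orthmat_n(\integers) = \langle \nega_n \cup \perm_n \rangle$ with $\nega_n \cap \perm_n = \{I\}$ through the isomorphism $(\rot(\integers^n),\circ) \cong (\orthmat_n(\integers),\cdot)$ given by $r'_A \mapsto A$, which restricts to the isomorphisms of Theorem~\ref{thm:permutation-and-negation-isomorphism} on $\rot_\nega(\integers^n)$ and $\rot_\perm(\integers^n)$. Your explicit verification of the well-definedness of $\phi$ and the two commutation facts is routine and fills in exactly what the paper leaves implicit.
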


\begin{remark}
The decomposition of the rotation group $\rot(\integers^n)$ in Equation~\eqref{eqn:generator-decomposition-of-rot-zn} has a similar form compared to the decomposition of the isometry group $\iso(\integers^n)$ in Equation~\eqref{eqn:generator-decomposition-of-iso-zn}.
One can show that $\rot(\integers^n)$ has a second characterization that is similar to the second characterization of $\iso(\integers^n)$, where $\rot(\integers^n)$ can also be decomposed as semi-direct products:
\begin{align*}
\rot(\integers^n) = \rot_\nega(\integers^n) \circ \rot_\perm(\integers^n) \cong \nega_n \rtimes \perm_n.
\end{align*}
However, this characterization is not used in this paper, so we omit its proof.
\end{remark}
By Corollary~\ref{coro:generator-decomposition-of-rot-zn}, finding generators of $\rot(\integers^n)$ breaks down into finding those of $\rot_\nega(\integers^n)$ and $\rot_\perm(\integers^n)$, respectively.
First, from unit negation vectors, we can find generators for negations:
\begin{align*}
\rot_\nega(\integers^n) = \langle \{r'_{\diag(\bm{\nu_1})}, \ldots, r'_{\diag(\bm{\nu_n})}\} \rangle.
\end{align*}
Second, from the fact that the symmetric group is generated by $2$-cycles of the form $(i,i+1)$: $S_n = \langle \{(1,2), \ldots, (n-1,n) \} \rangle$ \citep{Conrad2016}, we can find generators for permutations:
\begin{align*}
\rot_\perm(\integers^n) = \langle \{ r'_{P^{(1,2)}}, \ldots, r'_{P^{(n-1,n)}} \} \rangle,
\end{align*}
where $P^{(i,i+1)} \in \perm_n$ is obtained by swapping the $i$th and $(i+1)$th rows of $I$.
Finally,
\begin{align}
\notag
\iso(\integers^n) &= \left\langle \tra(\integers^n) \cup \rot(\integers^n) \right\rangle \\
\notag
&= \left\langle \tra(\integers^n) \cup \rot_\nega(\integers^n) \cup \rot_\perm(\integers^n)\right\rangle \\
\label{eqn:final-decomp-of-iso-zn}
&= \left\langle \tra_0 \cup \rot_{\nega 0} \cup \rot_{\perm 0} \right\rangle,
\end{align}
where $\tra_0 := \{t'_{\bm{e_i}}\}_{i=1}^n$, $\rot_{\nega 0} := \{r'_{\diag(\bm{\nu_i})}\}_{i=1}^n$, and $\rot_{\perm 0} := \{r'_{P^{(i,i+1)}}\}_{i=1}^{n-1}$.
Here, we performed recursive group decompositions to yield the generating set $\tra_0 \cup \rot_{\nega 0} \cup \rot_{\perm 0}$ with finite size $n+n+(n-1) = 3n-1$.
This verifies that $\iso(\integers^n)$ is indeed finitely generated.

\subsection{Trade-off: Minimality or Diversity (Efficiency or Expressiveness)}
\label{sec:trade-off-minimality-or-diversity}

A generating set of a group is not unique.
There are two extremes when considering the size of a generating set.
One considers the \emph{largest} generating set of a group which is the group itself;
the other considers a \emph{minimal} generating set which is not unique either.

\begin{definition}\label{def:minimal-generating-set}
Let $G$ be a group, $S \subseteq G$, and $\langle S \rangle$ be the subgroup of $G$ generated by $S$. We say that $S$ is a minimal generating set (of $\langle S \rangle$) if for any $s \in S$, $\langle S \backslash \{s\} \rangle \neq \langle S \rangle$.
\end{definition}

\begin{theorem}\label{thm:minimal-generating-set}
Let $G$ be a group and $S \subseteq G$, then $S$ is a minimal generating set if and only if for any $s \in S$, $s \notin \langle S \backslash \{s\} \rangle$.
\end{theorem}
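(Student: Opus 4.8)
The plan is to reduce the biconditional to a per-element statement: fix a single $s \in S$ and prove that
\[
\langle S \setminus \{s\}\rangle \neq \langle S\rangle \quad \Longleftrightarrow \quad s \notin \langle S \setminus \{s\}\rangle .
\]
Since Definition~\ref{def:minimal-generating-set} says $S$ is minimal precisely when the left-hand condition holds for \emph{every} $s \in S$, and the theorem's right-hand side is ``for any $s \in S$, $s \notin \langle S \setminus \{s\}\rangle$'', conjoining the per-element equivalence over all $s \in S$ yields exactly the asserted biconditional. Throughout I would use two standard facts about the subgroup-generation operator (both implicit in the ``subgroup generated by a set'' preliminary): monotonicity---if $T \subseteq T'$ then $\langle T\rangle \leq \langle T'\rangle$---and minimality---$\langle T\rangle$ is the smallest subgroup of $G$ containing $T$, so any subgroup containing $T$ contains $\langle T\rangle$.

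For the forward implication I would argue by contraposition: suppose $s \in \langle S \setminus \{s\}\rangle$. Then the subgroup $\langle S \setminus \{s\}\rangle$ contains $S \setminus \{s\}$ and also contains $s$, hence it contains all of $S$; by the minimality fact it therefore contains $\langle S\rangle$, i.e.\ $\langle S\rangle \leq \langle S \setminus \{s\}\rangle$. Since $S \setminus \{s\} \subseteq S$, monotonicity gives the reverse inclusion $\langle S \setminus \{s\}\rangle \leq \langle S\rangle$, so $\langle S \setminus \{s\}\rangle = \langle S\rangle$, which is exactly the failure of minimality for this $s$.

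For the converse, again by contraposition: suppose $\langle S \setminus \{s\}\rangle = \langle S\rangle$. Since $s \in S \subseteq \langle S\rangle = \langle S \setminus \{s\}\rangle$, we get $s \in \langle S \setminus \{s\}\rangle$ immediately, contradicting the right-hand condition. There is no genuine obstacle here; the argument is just the formal unwinding of ``$\langle\cdot\rangle$ is a closure operator,'' and the only point requiring mild care is keeping the universal quantifier over $s \in S$ in front throughout so the per-element equivalences assemble correctly into the stated theorem.
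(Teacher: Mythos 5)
Your proof is correct, and its logical skeleton matches the paper's: you reduce the theorem to a per-element biconditional (which is exactly how Definition~\ref{def:minimal-generating-set} quantifies over $s$), handle the easy direction by noting $s \in \langle S \rangle$, and handle the nontrivial direction by contraposition. The one place you genuinely diverge is the mechanism for the key inclusion $\langle S \rangle \leq \langle S \backslash \{s\} \rangle$ when $s \in \langle S \backslash \{s\} \rangle$: the paper works with the ``finite products of elements of $S \cup S^{-1}$'' description and explicitly rewrites an arbitrary word, substituting $s_k \circ \cdots \circ s_1$ for each occurrence of $s$ and $s_1^{-1} \circ \cdots \circ s_k^{-1}$ for each occurrence of $s^{-1}$, whereas you invoke the other characterization of $\langle \cdot \rangle$ recorded in Appendix~\ref{app:symmetry-generated-abstraction}---$\langle S \rangle$ is the smallest subgroup containing $S$---so that any subgroup containing $S$ (here $\langle S \backslash \{s\} \rangle$, which contains $S \backslash \{s\}$ and, by hypothesis, $s$) must contain $\langle S \rangle$, with monotonicity supplying the reverse inclusion. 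Both characterizations are stated in the paper, so either route is legitimate; yours is shorter and avoids the word-substitution bookkeeping, while the paper's is more constructive in that it exhibits an explicit expression of each element of $\langle S \rangle$ over the smaller generating set, a style consistent with how generators are manipulated elsewhere in Section~\ref{sec:the-bottom-up-approach-gen-set}.
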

\begin{proof}
See Appendix~\ref{app:minimal-generating-set}.
\end{proof}

Considering $\iso(\integers^n) = \langle \tra_0 \cup \rot_{\nega 0} \cup \rot_{\perm 0} \rangle$, it is easy to check that $\tra_0$, $\rot_{\nega 0}$, and $\rot_{\perm 0}$ are minimal individually;
whereas their union is not.
Nevertheless, it is not hard to show that $S^\star := \{t'_{\bm{e_1}}, r'_{\diag(\bm{\nu_1})}\} \cup \rot_{\perm 0}$ is a minimal generating set of $\iso(\integers^n)$ with size $n+1$.

\vspace{0.1in}
There is a trade-off between minimality and diversity, which further leads to the trade-off between efficiency and expressiveness.
Again we use $\iso(\integers^n)$ as an example.
From one extreme, a minimal generating set is most efficient in the following sense: $S\subseteq \iso(\integers^n)$ is a minimal generating set (of $\langle S \rangle$) if and only if $\pi'|_{2^S}$ is injective, \ie for any $S',S'' \subseteq S$, if $S' \neq S''$, then $\pi'(S') = \langle S' \rangle \neq \langle S'' \rangle = \pi'(S'')$ (an easy check).
Therefore, whenever $S$ is not minimal, there are duplicates in the generated subgroups, and thus duplicates in the subsequent abstraction generations.
Every occurrence of a duplicate is wasted computation since it does not produce a new abstraction in the end.
Intuitively, if a generating set is further away from being minimal, then more duplicates tend to occur and the abstraction generating process is less efficient.
To the other extreme, the largest generating set is most expressive in the following sense: if $S = \iso(\integers^n)$, then $\pi'(2^S) = \hcal_{\iso(\integers^n)}^{*}$, \ie the collection of all subgroups of $\iso(\integers^n)$; and in general, for any $S \subset S_+ \subseteq \iso(\integers^n)$, the monotonicity property $\pi'(2^S) \subset \pi'(2^{S_{+}})$ holds (an easy check).
However, the largest generating set is also the least efficient not only because it has the largest number of duplicates, but in this case, it is infinite.
Thus, to respect the trade-off between efficiency and expressiveness, we need to find a balance between the two extremes.

Our plan is to start from a minimal generating set $S^\star$ of $\iso(\integers^n)$ and then gradually enlarge it by adding the so-called \emph{derived generators}.
In other words, we aim for a filtration: $S^\star \subseteq S^\star_{+1} \subseteq S^\star_{+2} \subseteq S^\star_{+3} \subseteq \cdots$  such that the corresponding collections of subgroups satisfy
\begin{align*}
\pi'(2^{S^\star}) \subseteq \pi'(2^{S_{+1}^\star}) \subseteq \pi'(2^{S_{+2}^\star}) \subseteq \pi'(2^{S_{+3}^\star})\subseteq \cdots \quad \mbox{ and } \quad \bigcup_{m=1}^\infty \pi'(2^{S_{+m}^\star}) = \hcal_{\iso(\integers^n)}^{*}.
\end{align*}

\begin{definition}\label{def:derived-generators}
Let $S^\star$ be a minimal generating set of $\iso(\integers^n)$, and define
\begin{align*}
S^\star_{+m} := \{s_k^{\alpha_k}\circ \cdots \circ s_1^{\alpha_1} \mid k \in \integers_{\geq 0}, s_k, \ldots, s_1 \in S^\star, \alpha_k, \ldots, \alpha_1 \in \integers, \textstyle\sum_{i=1}^k|\alpha_i| \leq m\}.
\end{align*}
A {derived generator} of length $m$ is an $s \in S^\star_{+m}\backslash S^\star_{+(m-1)}$.
\end{definition}
\begin{remark}
In Definition~\ref{def:derived-generators}, $S^\star_{+m}$ is  the ``ball'' with center $\id$ and radius $m$ in the Cayley graph of $S^\star \cup (S^\star)^{-1}$.
It is an easy check that $S^\star \cup (S^\star)^{-1} = S^\star_{+1} \subseteq S^\star_{+2} \subseteq S^\star_{+3} \subseteq \cdots$. 
\end{remark}

Note that $\cup_{m=1}^{\infty}S^\star_{+m} = \langle S^\star \rangle = \iso(\integers^n)$, since the growing ``ball'' will eventually cover the whole Cayley graph.
Therefore, $\cup_{m=1}^\infty \pi'(2^{S_{+m}^\star}) = \hcal_{\iso(\integers^n)}^{*}$.
This suggests we gradually add derived generators of increasing length to $S^\star$, and approximate $\hcal_{\iso(\integers^n)}^{*}$ by $\pi'(2^{S_{+m}^\star})$ for some large $m$.
Without any prior preference, one must go through this full procedure to grow the ball $S^\star_{+m}$ from radius $m = 1$.
Although computationally intense, it is incremental.
More importantly, this is a \emph{one-time} procedure, but the resulting abstraction (semi)universe is \emph{universal}: computed abstractions can be used in different topic domains.

However, just like biological perception systems which have innate preference for certain stimuli, having prior preference for certain derived generators can make the abstraction (semi)universe grow more efficiently.
As an illustrative example and a design choice, we start from the minimal generating set $S^\star := \{t'_{\bm{e_1}}, r'_{\diag(\bm{\nu_1})}\} \cup \rot_{\perm 0}$, and prioritize three types of derived generators.
First, we add back the basis generators $\{t'_{\bm{e_i}}\}_{i=2}^n$ and $\{r'_{\diag(\bm{\nu_i})}\}_{i=2}^n$, so we get back the generating set $\tra_0 \cup \rot_{\nega 0} \cup \rot_{\perm 0}$ from Equation~\eqref{eqn:final-decomp-of-iso-zn}.
This restores the complete sets of translations, negations, and permutations---the three independent pillars generating $\iso(\integers^n)$.
The other two types of derived generators, called \emph{circulators} and $\emph{synchronizers}$, are inspired by biologically innate preference for \emph{periodicity} and \emph{synchronization} \citep{VanrullenZI2014,VonPD1992,Tymoczko2010}.

\begin{definition}\label{def:circulator-and-synchronizer}
Let $S = \{s_1, \ldots, s_k\}$ be a minimal generating set.
A circulator of $S$ with period $\alpha$ is: $s^\alpha$ for some $s \in S$ and $\alpha \in \integers_{>0}$.
(Consider group action on $X$ and any $x \in X$: the orbit $\langle s^\alpha \rangle x$ consists of periodic points from $\langle s \rangle x$.)
If $\langle S \rangle$ is Abelian, the synchronizer of $S$ is: $s_k\circ \cdots \circ s_1$.
\end{definition}

We denote the set of all circulators of $\tra_0$ with a fixed period $\alpha$ by $\tra_0^{\alpha} := \{t'_{\alpha{\bm{e_i}}}\}_{i=1}^n$.
Inspecting circulators of $\rot_{\nega 0}$ and $\rot_{\perm 0}$ does not yield new generators, since for any $s \in \rot_{\nega 0} \cup \rot_{\perm 0}$, $s^2 = \id$.
The synchronizers of $\tra_0$ and $\rot_{\nega 0}$ are $t'_{\bm{1}}$ and $r'_{-I}$ ($\langle \rot_{\perm 0} \rangle$ is not Abelian). Adding these circulators and synchronizers to $\tra_0 \cup \rot_{\nega 0} \cup \rot_{\perm 0}$ yields the generating set:
\begin{align}\label{eqn:final-s-star-plus}
S^\star_+ := \tra_0 \cup \tra_0^2 \cup \cdots \cup \tra_0^\tau \cup \{t'_{\bm{1}}\} \cup \rot_{\nega 0} \cup \{r'_{-I}\} \cup \rot_{\perm 0},
\end{align}
where $\tau$ denotes an upper bound on period exploration.
Note that $|S^\star_+| = \tau n + 1 + n + 1 + (n - 1) = (\tau+2)n + 1$. In light of real applications, we can use this generating set to generate an abstraction semiuniverse for automatic music concept learning.


\section{Implementing the General Setting: Restriction to Finite Subspaces}
\label{sec:restrict-to-finite-subspaces}

Computers have to work with finite spaces for finite execution time.
If the underlying space $X$ is finite, then there is no issue, and in particular, our induction algorithm in Section~\ref{sec:an-induction-algorithm} can solve any instances of Problem~\eqref{eqn:induction-algo-problem} exactly, regarding any finitely generated subgroup acting on any finite set.
If $X$ is infinite (\eg $\integers^n$), we can still mathematically think of any abstraction of $X$.
However, to compute an abstraction (or just to present it to people) in practice, we have to consider a finite subspace of $X$, \eg Problem~\eqref{eqn:induction-algo-problem-general}.
In this generating setting, we must be careful about both what an abstraction of a subspace means and what potential problems might occur.

\begin{definition}\label{def:an-abstraction-of-a-subspace}
Let $X$ be a set and $\pcal$ be an abstraction of $X$. For any $Y \subseteq X$, the restriction of $\pcal$ to $Y$ is an abstraction of $Y$ given by $\pcal|_Y := \{P \cap Y \mid P \in \pcal\} \backslash \{\emptyset\}$.
\end{definition}
\begin{remark}
Unless otherwise stated, the term ``an abstraction of a subspace'' means an abstraction of the ambient space restricted to that subspace. Under this definition, we need extra caution when computing an abstraction of a subspace.
\end{remark}

Let $X$ be a set, and $H \leq \transf(X)$ be a subgroup of the transformation group of $X$.
For any $Y \subseteq X$, according to Definition~\ref{def:an-abstraction-of-a-subspace}, the correct way of generating the abstraction of the subspace $Y$ from $H$ is:
\begin{align*}
\pi(H)|_Y = (X/H)|_Y = \{Hx \cap Y \mid x \in X\} \backslash \{\emptyset\}.
\end{align*}
A risky way of computing the abstraction of the subspace $Y$ is by thinking only in $Y$ while forgetting the ambient space $X$.
The risk here is to get a partition of $Y$, denoted $\rcal_Y^H$, which is strictly finer than $\pi(H)|_Y = \rcal_X^H|_Y$.
In other words, there are possibly cells in $\rcal_Y^H$ that should be merged but are not if they are connected via points outside the subspace $Y$.
For instance, consider $X = \integers^2$, $Y = \{(0,0),(1,0),(0,1),(1,1)\} \subseteq X$, and the subgroup $H = \langle \{ t'_{\bm{1}}, r'_{-I} \} \rangle \leq \iso(\integers^2) \leq \transf(\integers^2)$.
Let $\rcal_Y^H$ be the abstraction of $Y$ obtained by running the induction algorithm on $Y$ (instead of $X$) in the bottom-up approach.
One can check:
\begin{align*}
\rcal_Y^H &= \{~\{(0,0),(1,1)\}, ~\{(0,1)\}, ~\{(1,0)\}~\}; \\
\pi(H)|_Y &= \{~\{(0,0),(1,1)\}, ~\{(0,1),(1,0)\}~\}.
\end{align*}
The two points $(1,0)$ and $(0,1)$ should be in one cell since $(1,0) \xmapsto{r'_{-I}} (-1,0) \xmapsto{t'_{\bm{1}}} (0,1)$, but are not in $\rcal_Y^H$ since the via-point $(-1,0) \not\in Y$.
In general, the risk is present if we compute an abstraction of a subspace $Y$ from other abstractions of $Y$ or from orbit tracing.

However, for computational reasons, we want to forget the ambient space $X$!
In particular, the risky way is the only practical way if $X$ is infinite and $Y$ is finite, since it is not realistic to identify all orbits in an infinite space.
This suggests that we take the risk to generate $\rcal_Y^H$ as the first step, and rectify the result in a second step to merge cells that are missed in the first step. As a result, we introduce a technique called ``expand-and-restrict''.

\subsection{Expand-and-Restrict}
\label{sec:expand-and-restrict}

``Expand-and-restrict'' is an empirical technique which first expands the subspace and then restricts it back, \ie to compute $\rcal_{Y_+}^H|_Y$ for some finite subspace $Y_+$ such that $Y \subset Y_+ \subset X$.
The expansion $Y_+$ takes more via-points into consideration, so it helps merge cells that are missed in $\rcal_Y^H$.
In practice, we carry out this technique gradually in a sequential manner, which is similar to what we did in enlarging a minimal generating set (cf.\ Section~\ref{sec:trade-off-minimality-or-diversity}).
Given an infinite space $X$ and a finite subspace $Y \subset X$, we first construct a filtration
\begin{align*}
Y = Y_{+0} \subset Y_{+1} \subset Y_{+2} \subset \cdots \subset X \quad \mbox{ where $Y_{+k}$ is finite $\forall k \in \integers_{\geq 0}$ and } \bigcup_{k=0}^\infty Y_{+k} = X.
\end{align*}
We then start a search process for a good expansion $Y_{+k}$.
More specifically, we iteratively compute $\rcal_{Y_{+k}}^H|_Y$ for expansion factors $k=0, 1, 2, \ldots$ until the results reach a consensus among consecutive iterations.
To determine a consensus, theoretically, we need to find the smallest $k$ such that $\rcal_{Y_{+k}}^H|_Y = \rcal_{Y_{+k'}}^H|_Y$ for all $k' > k$, which requires an endless search.
In practice, we stop the search whenever $\rcal_{Y_{+k}}^H|_Y = \rcal_{Y_{+(k+1)}}^H|_Y = \cdots = \rcal_{Y_{+(k+\Delta k)}}^H|_Y$ for some positive integer $\Delta k$.
We call this an \emph{early stop}, whose resulting abstraction $\rcal_{Y_{+k}}^H|_Y$ is an empirical approximation of the true abstraction $\pi(H)|_Y$.
Note that without early stopping, we will have the correct result $\pi(H)|_Y = \rcal_X^H|_Y$ in the limit of this infinite search process.
Therefore, even in cases where the space $X$ is finite, if $X$ is much larger than the subspace $Y$, this empirical search can be more efficient than computing $\pi(H)|_Y$ directly, since earlier search iterations will be extremely cheap and if an early stop happens early there is a win.

\subsection{An Implementation Example}
\label{sec:an-implementation-example}

We give an example to illustrate some implementation details on generating abstractions of a finite subspace.
In this example, we consider finite subspaces of $X = \integers^n$ to be the centered hypercubes of the form $Y = \hcube{-b}{b}$ where $\integers_{[-b,b]} := \integers \cap [-b,b]$ and $b>0$ is finite.
To construct an abstraction semiuniverse for such a finite hypercube, we adopt the bottom-up approach and pick the generating set to be $S^\star_+$ defined in Equation~\eqref{eqn:final-s-star-plus}.
Taking $S^\star_+$ and $\hcube{-b}{b}$ as inputs, we run the induction algorithm, where both Algorithm~\ref{algo:base-partn} (for base cases) and Algorithm~\ref{algo:meet} (for the induction step) are run on the finite subspace $\hcube{-b}{b}$ instead of the infinite space $\integers^n$.
This is the first step which gives abstractions $\rcal_\hcube{-b}{b}^{\langle S \rangle}$ for $S \subseteq S^\star_+$.

As mentioned earlier, for every $S \subseteq S^\star_+$, the correct abstraction should be $\rcal_{\integers^n}^{\langle S \rangle}|_{\hcube{-b}{b}}$ which is generally not equal to $\rcal_\hcube{-b}{b}^{\langle S \rangle}$.
So, we run the ``expand-and-restrict'' technique as the second step.
We first construct a filtration: let $Y_{+k} = \hcube{-b-k}{b+k}$ be a finite expansion of $Y = \hcube{-b}{b}$, then it is clear that $Y = Y_{+0} \subset Y_{+1} \subset Y_{+2} \subset \cdots \subset X$ and $\cup_{k=0}^\infty Y_{+k} = X = \integers^n$.
We then start the empirical search process and set $\Delta k = 1$ (the most greedy search).
This means we will stop the search whenever $\rcal_{Y_{+k}}^{\langle S \rangle}|_Y = \rcal_{Y_{+(k+1)}}^{\langle S \rangle}|_Y$, and return the abstraction $\rcal_{Y_{+k}}^{\langle S \rangle}|_Y = \rcal_{\hcube{-b-k}{b+k}}^{\langle S \rangle}|_{\hcube{-b}{b}}$ as the final result to approximate $\rcal_{\integers^n}^{\langle S \rangle}|_{\hcube{-b}{b}} = \Pi(S)|_{\hcube{-b}{b}}$.

There are three additional implementation tricks that are special to this example.
The first trick applies to cases where the subspace $\hcube{-b}{b}$ is large, \ie a large $b$.
In this case, every search iteration in the ``expand-and-restrict'' technique is expensive and gets more expensive as the search goes.
However, for the generating set $S^\star_+$ specifically, it is typical to have $b \gg \tau$ so as to reveal strong periodic patterns.
Thus, we run the entire two-step abstraction generating process for $\hcube{-\tau}{\tau}$ instead of $\hcube{-b}{b}$, pretending $\hcube{-\tau}{\tau}$ is the subspace that we want to abstract.
This yields a much faster abstraction process since $\hcube{-\tau}{\tau}$ is much smaller than $\hcube{-b}{b}$.
The result is an abstraction $\rcal_{\hcube{-\tau-k}{\tau+k}}^{\langle S \rangle}|_{\hcube{-\tau}{\tau}}$ for some expansion factor $k$.
We reuse this same $k$ and compute $\rcal_{\hcube{-b-k}{b+k}}^{\langle S \rangle}|_{\hcube{-b}{b}}$ as the final result, which is the only expensive computation.
Note that this trick adds an additional empirical approximation, assuming that the same expansion factor $k$ works for both small and large subspaces.
While we have not yet found a theoretical guarantee for this assumption, this trick works well in practice, and provides huge computational savings.

\underline{Note:} for some generating subsets $S \subseteq S^\star_+$, we can prove (so no approximations) that the expansion factor $k = 0$ (no need to expand) or $1$.
Although this provides theoretical guarantees in certain cases, the tricks used in the current proofs are case-by-case depending on the chosen generators.
Thus, before we find a universal way of proving things, we prefer empirical strategies---like the above search process---which work universally in any event.

The second trick considers the subspace to be any general hypercube in $\integers^n$, which is not necessarily square or centered.
The trick here is simply to find a minimum centered square hypercube containing the subspace.
If the ambient space $X$ happens to be ``spatially stationary''---the absolute location of each element in the space is not important but only their relative position matters (\eg the space of music pitches)---then we find a minimum square hypercube containing the subspace and center it via a translation.
Centering is very important and specific to the chosen generating set $S^\star_+$.
This is because $S^\star_+$ contains only pure translations and pure rotations;
and centering square hypercubes makes pure rotations safe: no rotation maps a point in $\hcube{-b}{b}$ outside (one can check that for any $r'_A \in \rot(\integers^n)$, $r'_A(\hcube{-b}{b}) = \hcube{-b}{b}$).
In practice, centering dramatically decreases the number of miss-merged cells, and makes it safe to choose small $\Delta k$ for early stopping.
This explains why we only consider subspaces of the form $\hcube{-b}{b}$ in the first place, and boldly choose $\Delta k = 1$.

The third trick considers a quick-and-dirty pruning of duplicates in generating a family of partitions, leaving room for larger-period explorations.
Without this trick, to generate the partition family $\Pi(2^{S^\star_+})$, we need $|2^{S^\star_+}| = 2^{(\tau+2)n+1} = O(2^\tau)$ computations, which hinders exploration on period $\tau$.
However, $S^\star_+$ is not minimal, so $|\Pi(2^{S^\star_+})| < |2^{S^\star_+}|$, suggesting many computations are not needed since they yield the same abstraction.
We focus on circulators, where we exclude computations on those $S \in 2^{S^\star_+}$ containing multiple periods.
This reduces the number of computations to $(2^{3n+1}-2^{2n+1})\tau+2^{2n+1} = O(\tau)$.

A real run on the subspace $\integers^4_{[-12,12]}$ and $\tau = 4$ computes $31232$ partitions, during which all search processes in ``expand-and-restrict'' end in at most three iterations.
This means in this experiment we only need to expand the subspace by $k = 0$ or $1$ for all abstractions in the family.

Lastly, we briefly mention the task of completing a global hierarchy on an abstraction family $\mathfrak{P}_Y$.
A brute-force algorithm makes $O(|\mathfrak{P}_Y|^2)$ comparisons, determining the relation ($\preceq$ or incomparable) for every unordered pair of partitions $\pcal,\qcal \in \mathfrak{P}_Y$.
Locally, we run a subroutine {\tt GetRelation(P,Q)} implemented via the contingency table \citep{HubertA1985} whenever we want to query a pair of partitions.
Globally, we use two properties to reduce the number of calls to {\tt GetRelation(P,Q)}: 1) transitivity: for any $\pcal, \pcal', \pcal'' \in \mathfrak{P}_Y$, $\pcal \preceq \pcal'$ and $\pcal' \preceq \pcal''$ implies $\pcal \preceq \pcal''$; 2) dualities in Theorem~\ref{thm:duality-subgroup-and-partition-lattice}.
The final output of our abstraction process is a directed acyclic graph of the abstraction (semi)universe $\Pi(2^{S^\star_+})$.
Similar to the first trick above, in practice it suffices to complete the hierarchy for smaller subspaces like $\hcube{-\tau}{\tau}$, assuming the same hierarchy holds for the actual subspace under consideration.


\section{Discussion: Information Lattice and Learning}
\label{sec:discussion-to-info-lattice-and-learning}

In his 1953 work, Claude E. Shannon attempted to unravel the nature of information beyond just quantifying its amount \citep{Shannon1953}.
In the specific context of communication problems, he coined the term \emph{information element} to denote the nature of information which is invariant under ``(language) translations'' or different encoding-decoding schemes.
He further introduced a partial order between a pair of information elements, eventually yielding a lattice of information elements, or \emph{information lattice} in short.

In this section, we first present a brief overview of Shannon's original work and then cast the information lattice in our abstraction-generation framework without needing to introduce information-theoretic functionals.
Our abstraction-generation framework not only generalizes Shannon's information lattice, but more importantly presents a generating chain that brings learning into the picture.
This eventually opens the opportunity for data-driven concept learning which aims to discover human-interpretable rules from sensory data.
After the theoretical connections, we present a real implementation of an information lattice from a music application.
In this application, we build an automatic music theorist and pedagogue that self-learns music composition rules and provides people with personalized music composition lessons \citep{YuVGK2016,YuV2017,YuLV2017}.

\subsection{Theoretical Generalization: a Separation of Clustering and Statistics}
\label{sec:theoretic-generalization}

We present an overview of Shannon's original work and a follow-up work by \citet{LiC2011} which formalizes Shannon's idea in a more principled way.

\paragraph{Nature of information:}
\begin{center}
information element \quad or \quad random variable?
\end{center}
An \emph{information element} is an equivalence class of random variables (of a common sample space) with respect to the ``being-informationally-equivalent'' relation, where two random variables are informationally equivalent if they induce the same $\sigma$-algebra (of the sample space).
Under this definition, the notion of an information element---essentially a probability space---is more abstract than that of a random variable: an information element can be realized by different random variables.
The relationship between different but informationally-equivalent random variables and their corresponding information element is analogous to the relationship between different translations (say, English, French, or a code) of a message and the actual content of the message.
Since different but faithful translations are viewed as different ways of describing the same information, the information itself is then regarded as the equivalence class of all translations or ways of describing the same information.
Therefore, the notion of information element reveals the nature of information.

\paragraph{Group-theoretic interpretation:}
\begin{align}\label{eqn:chain-info-partition-subgroup}
\mbox{information lattice $\rightarrow$ partition lattice $\rightarrow$ subgroup lattice ($\rightarrow$ interpretation).}
\end{align}
An \emph{information lattice} is a lattice of information elements, where the partial order is defined by $x \leq y \iff H(x|y) = 0$ where $H$ denotes the information entropy.
The join of two information elements $x\vee y = x+y$ is called the total information of $x$ and $y$;
the meet of two information elements $x \wedge y = xy$ is called the common information of $x$ and $y$.
By definition, every information element can be uniquely determined by its induced $\sigma$-algebra.
Also, it is known that every $\sigma$-algebra of a countable sample space can be uniquely determined by its generating (via union operation) sample-space-partition.
Thus, an information lattice has a one-to-one correspondence to a partition lattice.
Further, given a partition of a sample space, \citet{LiC2011} constructed a unique permutation subgroup whose group action on the sample space produces orbits that coincide with the given partition.
Therefore, under this specific construction (which also validates our Theorem~\ref{thm:abstraction-generating-function-surjective}), any partition lattice has a one-to-one correspondence to the constructed subgroup lattice \citep[see][General Isomorphism Theorem]{LiC2011}.
This yields the above Chain~\eqref{eqn:chain-info-partition-subgroup} which further achieves group-theoretic interpretations of various information-theoretic results, bringing together information theory and group theory \citep[chap.~16]{ChanY2002, Yeung2008}.
\vspace{0.1in}

Now we cast the above results into our framework, and reveal the key differences.

\paragraph{Nature of abstraction:}
\begin{center}
clustering \quad or \quad classification?
\end{center}
Generalizing Shannon's insight on the nature of information essentially reveals the difference between clustering and classification in machine learning.
We can similarly define an equivalence relation on the set of all classifications where two classifications are equivalent if they yield the same set of classes and only differ by class labels.
For instance, given a set of animals, classifying them into \{fish, amphibians, reptiles, birds, mammals\} is equivalent to classifying them into \{poisson, amphibiens, reptiles, des oiseaux, mammif\`{e}res\}, where the different class labels are only English and French translations of the same set of animal classes.
So, clustering to classification is analogous to information element to random variable; and it is clustering rather than classification that captures the nature of abstraction.
This again explains why abstraction is formalized as a clustering problem in this paper.

\begin{table}[t]
\centering
\begin{tabular}{l|ll}
    & {\small{\textit{Partition lattice}}} & {\small \textit{Information lattice}} \\
    \hline
    element & partition $(\pcal)$; & information element $(x)$; \\
     & clustering $(X,\pcal)$; & probability space $(X, \Sigma, P)$; \\
     & equiv. class of classifications & equiv. class of random variables \\
    partial order&  $\pcal \preceq \qcal$ & $x\leq y \iff H(x|y) = 0$ \\
    join & $\pcal \vee \qcal$ & $x+y$ \\
    meet & $\pcal \wedge \qcal$ & $xy$ \\
    metric & undefined & $\rho(x,y) = H(x|y) + H(y|x)$ \vspace{0.1in}
  \end{tabular}
\caption{Partition lattice and information lattice: the main difference comes from the fact that a partition lattice is not coupled with a measure; whereas an information lattice is coupled with a probability measure, so both the partial order and the metric can be defined in terms of entropies.}
\label{tab:partn-lattice-and-info-lattice}
\end{table}

\paragraph{Partition lattice and information lattice:}
We summarize major connections between a partition lattice and an information lattice in Table~\ref{tab:partn-lattice-and-info-lattice}.
The differences are rooted in the \emph{separation of clustering from statistics}, so roughly speaking, a partition lattice---which is measure-free---can be thought as an information lattice without probability measure.
Therefore, abstraction is a more general concept than information, which is not specific to communication problems, and in particular, is not attached to any stochastic processes or information-theoretic functionals such as entropy.

\paragraph{Group-theoretic learning:}
\begin{align}\label{eqn:chain-subgroup-partition-info}
\mbox{subgroup lattice $\rightarrow$ partition lattice $\rightarrow$ information lattice ($\rightarrow$ learning)}.
\end{align}
The separation of clustering and statistics is important since it opens the opportunity for \emph{interpretable statistical learning}, where \emph{interpretability} is achieved by the explicit construction of a partition lattice (symmetry-generated hierarchical clustering), and \emph{learning} is achieved by subsequent statistical inference on this lattice.
This is more precisely presented in Chain~\eqref{eqn:chain-subgroup-partition-info} aiming for learning, which at first glance, is merely a reverse process of Chain~\eqref{eqn:chain-info-partition-subgroup} aiming for re-interpretation.
However, the subgroup lattices in both chains are in stark contrast: the subgroups considered in Chain~\eqref{eqn:chain-subgroup-partition-info} are based on certain symmetries---the underlying mechanism of abstraction---whereas the subgroups considered in Chain~\eqref{eqn:chain-info-partition-subgroup} are merely (isomorphic) re-statements of the given partitions.
In other words, among possibly many subgroups (recall Theorem~\ref{thm:abstraction-generating-function-not-injective} and \ref{thm:abstraction-generating-function-surjective}: $\pi$ is surjective but not necessarily injective) that generate the same partition, we only pick the one that explains to us the types of symmetries under consideration.
The preservation of interpretable symmetries through Chain~\eqref{eqn:chain-subgroup-partition-info} makes the subsequent learning transparent.
Therefore, when abstraction does meet statistics, it will yield interpretable machine learning and knowledge discovery, which is beyond simply a re-interpretation of known results.

\subsection{A Real Application: Automatic Concept Learning in Music}
\label{sec:real-music-application}

We present a music application called MUS-ROVER from our earlier and ongoing work \citep{YuVGK2016,YuV2017,YuLV2017}, from which we show how automatic concept learning is achieved in a real information lattice.
In MUS-ROVER, each learned music concept is embedded in a \emph{probabilistic rule} (for music composition), which takes the form of an abstraction of music chords and its attaching probabilistic pattern.
Therefore, every rule is a music realization of an information element, and rules are extracted during the learning process in a symmetry-generated information lattice.

\paragraph{Computational Music Abstraction.}
Our group-theoretic approach to computational abstraction in this paper is concerned with the non-statistical part of an information lattice, \ie a symmetry-driven partition lattice consisting of various music abstractions and their hierarchy.
It is important to mention that these music abstractions are not borrowed from existing music definitions, but purely generated from symmetries.
However, if the output partition lattice is expressive enough, many abstractions in the lattice may coincide with human-invented music-theoretic terms, while the rest may suggest new abstraction possibilities that music theorists might have overlooked in the past.
Next, we walk through the main results in this paper in the specific context of computational music abstraction.

\begin{figure}[t]
\begin{center}
\includegraphics[width=0.92\columnwidth]{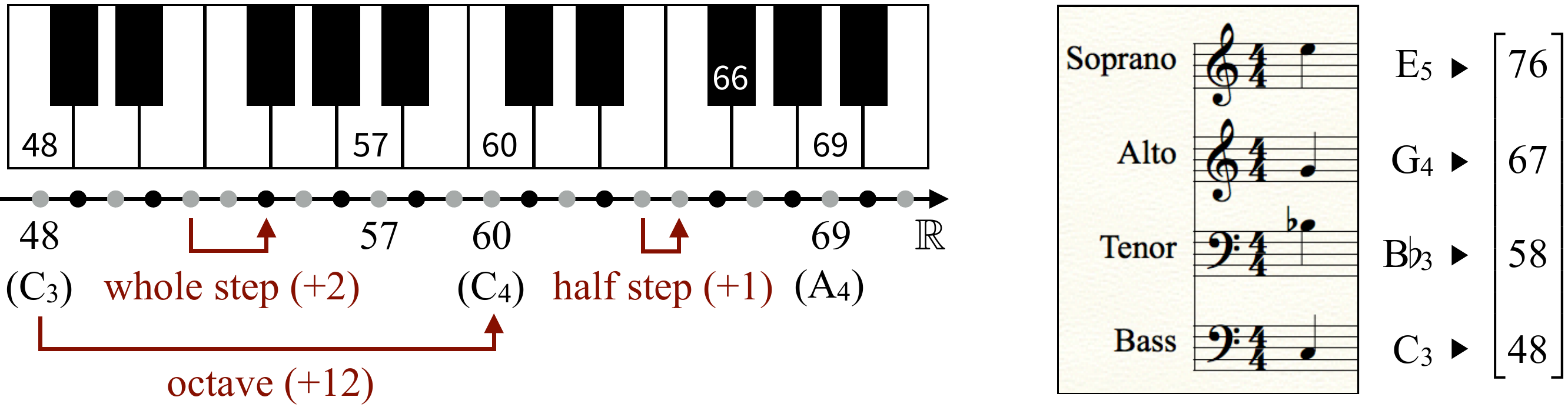}
\end{center}
\caption{The MIDI representation of music pitches (left) and a music chord (right).}
\label{fig:music-pitch-chord}
\end{figure}

\vspace{0.1in}
\noindent
\underline{Music Raw Representation.}
We represent a music \emph{pitch}---the highness of a music note---by a MIDI number $m$, which indicates the fundamental frequency $f$ of that pitch but in the following logarithmic scale: $m = 69+12\log_2(f/440)$.
As a frequency can take any positive value, MIDI numbers are generally real numbers.
Yet in our music application, we work with sheet music, and for notes on a music staff, their MIDI numbers are always integers.
For instance, the middle C (C$_4$) is $60$, and the tuning pitch (A$_4$, $f = 440 {\rm Hz}$) is $69$.
Within the finite range $[21, 108]$, consecutive MIDI integers bijectively map to the $88$ consecutive keys on a piano (regardless of being black or white), so two MIDI numbers that are $1$, $2$, $12$ apart form a half step, a whole step, and an octave, respectively (Figure~\ref{fig:music-pitch-chord}: left).

Building from music pitches, we define a music \emph{chord} by any collection of pitches (that are expected to be heard as a whole), which is naturally represented by a vector of MIDI numbers.
More specifically, an $n$-chord is a chord consisting of $n$ pitches, so the space of all $n$-chords is $\reals^n$.
In our music application, we work with the sheet music of Bach's four-voice chorales in particular, so the chord space in this specific context is $\integers^4$ (Figure~\ref{fig:music-pitch-chord}: right).
This infinite space is our input space $X (= \integers^4)$, of which we make abstractions.

\vspace{0.1in}
\noindent
\underline{Music Abstractions.}
We study isometries of chords, \ie $\iso(\integers^4)$ in our specific context of four-voice chorales from sheet music.
We take $S = S^\star_+$ in Equation~\eqref{eqn:final-s-star-plus} as our generating set, and set $\tau = 12$, \ie considering no more than an octave when exploring periodicity of circulators (a design choice).
Thus, the group action under consideration is $\iso(\integers^4) = \langle S \rangle$ acting on $X = \integers^4$, and the abstraction semiuniverse from this generating set is
\begin{align*}
\Pi(2^S) = \pi \circ \pi'(2^S) = \{\integers^4/\langle S' \rangle \mid S' \subseteq S\}.
\end{align*}
We give examples of $S' \subseteq S$ whose induced partitions $\Pi(S') = \integers^4/\langle S'\rangle$ subsume many human-invented music terms and concepts.

First, consider $S'_1 = \tra_0^{12} := \{t'_{12{\bm{e_1}}}, t'_{12{\bm{e_2}}}, t'_{12{\bm{e_3}}}, t'_{12{\bm{e_4}}}\}$.
The induced abstraction $\Pi(S'_1)$ contains $12^4$ cells, each of which represents the concept of a \emph{pitch class vector}, \ie a vector of pitch classes.
In music, a \emph{pitch class} is a set of all pitches that are octave(s) apart, \eg the pitch class C is the set $\{\text{C}_n|n \in \integers\}$, treating a high C, a middle C, a low C indistinguishably as the pitch class C.
Therefore, two chords $(\text{B}_5, \text{G}_4, \text{E}_4, \text{C}_3)$ and $(\text{B}_2, \text{G}_3, \text{E}_4, \text{C}_5)$ are in the same partition cell as the pitch class vector $(\text{B}, \text{G}, \text{E}, \text{C})$, whereas the chord $(\text{E}_5, \text{C}_4, \text{A}_3, \text{F}_3)$ is in a different cell as the pitch class vector $(\text{E}, \text{C}, \text{A}, \text{F})$ and notably, the chord $(\text{G}_4, \text{E}_4, \text{C}_4, \text{B}_3)$ is in another different cell as the pitch class vector $(\text{G}, \text{E}, \text{C}, \text{B})$.
The point here is that, under this abstraction, the concept of a pitch class vector does not distinguish the register, but as a vector, distinguishes both the order and the multiplicity of its pitch class components.

Second, consider $S'_2 = \tra_0^{12} \cup \rot_{\perm 0}$, where $\rot_{\perm 0} := \{r'_{P^{(1,2)}}, \ldots, r'_{P^{(n-1,n)}}\}$ is the generating set of the subgroup comprising permutations (of vector coordinates).
Compared to $\Pi(S'_1)$ in the first example, the induced abstraction $\Pi(S'_2)$ here is a coarser partition.
In particular, $\Pi(S'_2)$ can be obtained by merging cells in $\Pi(S'_1)$ that represent pitch class vectors like $(\text{B}, \text{G}, \text{E}, \text{C})$ and $(\text{G}, \text{E}, \text{C}, \text{B})$ but not $(\text{E}, \text{C}, \text{A}, \text{F})$.
Therefore, every cell in $\Pi(S'_2)$ represents the concept of a \emph{pitch class multi-set}, \ie a multi-set of pitch classes.
Many of these pitch class multi-sets correspond to known music-theoretic terms.
For instance,
one cell in $\Pi(S'_2)$ comprises exclusively all C major seventh chords, musically denoted $\text{CM}7 := \{\text{C},\text{E},\text{G},\text{B}\}$;
another different cell comprises exclusively all C minor seventh chords, musically denoted $\text{Cm}7 := \{\text{C},\text{E}\flat,\text{G},\text{B}\flat\}$;
a third different cell comprises exclusively all F major seventh chords, musically denoted $\text{FM}7 := \{\text{F},\text{A},\text{C},\text{E}\}$; so forth.

Third, consider $S'_3 = \tra_0^{12} \cup \rot_{\perm 0} \cup \{t'_{\bm{1}}\}$, where the added translation $t'_{\bm{1}}$ is the single generator that generates all music transpositions (which form a cyclic subgroup).
So, the abstraction $\Pi(S'_3)$ is an even coarser partition, compared to the two partitions $\Pi(S'_1)$ and $\Pi(S'_2)$ in the first and second examples.
In particular, $\Pi(S'_3)$ can be obtained by merging cells in $\Pi(S'_2)$ that represent pitch class multi-sets like $\{\text{C},\text{E},\text{G},\text{B}\}$ and $\{\text{F},\text{A},\text{C},\text{E}\}$ but not $\{\text{C},\text{E}\flat,\text{G},\text{B}\flat\}$.
Therefore, every cell in $\Pi(S'_3)$ represents the concept of a transposition invariant multi-set of pitch classes.
As examples from this abstraction, one cell in $\Pi(S'_3)$ comprises exclusively all major seventh chords, musically denoted $\text{M}7$; another different cell comprises exclusively all minor seventh chords, musically denoted $\text{m}7$.

\begin{figure}[t]
\begin{center}
\includegraphics[width=0.98\columnwidth]{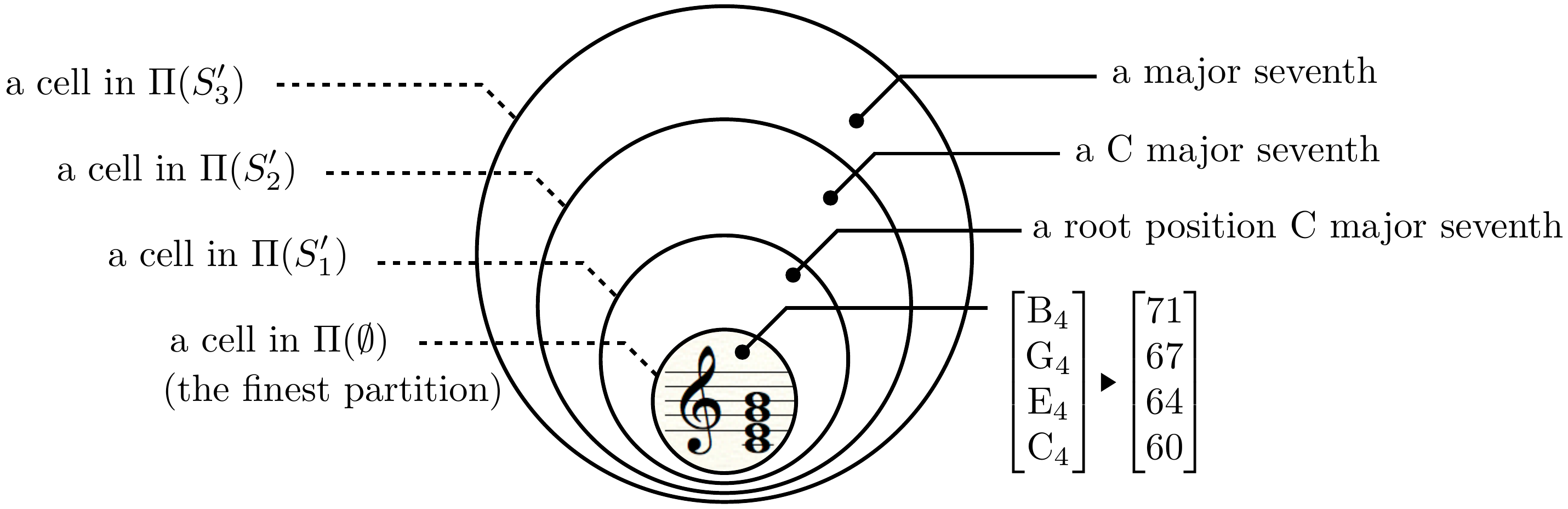}
\end{center}
\caption{The MIDI representation of music pitches (left) and a music chord (right).}
\label{fig:music-concept-hierarchy}
\end{figure}

To reveal part of the hierarchy, the three examples are picked in a way ($S'_1 \subseteq S'_2 \subseteq S'_3$) such that they clearly form a sequence of coarser and coarser partitions of our chord space $X = \integers^4$ ($\Pi(S'_1) \succeq \Pi(S'_2) \succeq \Pi(S'_3)$).
This implies that when we further zoom in and inspect things at the level of cells, we can trace out a nested sequence of cells, each of which is from a different partition in the coarsening sequence.
The nested cells can be used as sequential descriptions of an element $x \in X$, \ie a chord in our music example, from more specific descriptions to more general ones.
For instance, considering a specific chord $(\text{B}_4, \text{G}_4, \text{E}_4, \text{C}_4)$ which is $(71,67,64,60) \in \integers^4$ and looking at it through a series of nested cells from the coarsening sequence $\Pi(S'_1) \succeq \Pi(S'_2) \succeq \Pi(S'_3)$, we can say that it is a special type of root position C major seventh chord, and more generally, a C major seventh chord, and even more generally, a major seventh chord (Figure~\ref{fig:music-concept-hierarchy}).
During this sequential process, we are incrementally forgetting the register, the inversion, and the root of a chord one after another, moving toward higher and higher-level music abstractions.
On a last note, there are certainly partitions that are incomparable with this line of partitions.

Finally when coming down to real implementations, note that we cannot compute partitions of the infinite chord space $X = \integers^n$;
we have to consider a finite subspace $Y \subseteq X$.
It is natural to take a hypercube $Y = \integers^4_{[a,b]}$, where the range $[a,b]$ typically denotes the range of a music instrument (from the lowest note $a$ to the highest note $b$).
For instance, in our case regarding chorales, $[a,b]$ denotes the finite human vocal range.
Now we can run the induction algorithm (Section~\ref{sec:an-induction-algorithm}) equipped with the ``Expand-and-Restrict'' technique (Section~\ref{sec:restrict-to-finite-subspaces}) for our music application.
In particular, the chord space $X = \integers^4$ is ``spatially stationary'', so all implementation tricks introduced in Section~\ref{sec:an-implementation-example} are applicable.

It is worth noting that in each of the above three abstractions $\Pi(S'_1)$, $\Pi(S'_2)$, and $\Pi(S'_3)$, there are many cells representing musically unnamed chord types, as well as cells that are traditionally unnamed but were later named in music from more recent eras, \eg in secundal, quartal harmonies.
It is even more important to note that the value of our symmetry-driven abstractions is not just to reproduce known music terms/concepts with a group-theoretic reinterpretation, but to reveal all other traditionally less concerned concepts which yet fall under the same symmetry category.
The ability to present a bigger abstraction ``map'' from various symmetry families/subfamilies can systematically suggest new composition possibilities and guidelines for new music, and the ability to computationally build it allows faster explorations and experimentations in music composition through algorithms.

\paragraph{Learning Music Information Lattice.}
\begin{figure}[t]
\begin{center}
\includegraphics[width=0.98\columnwidth]{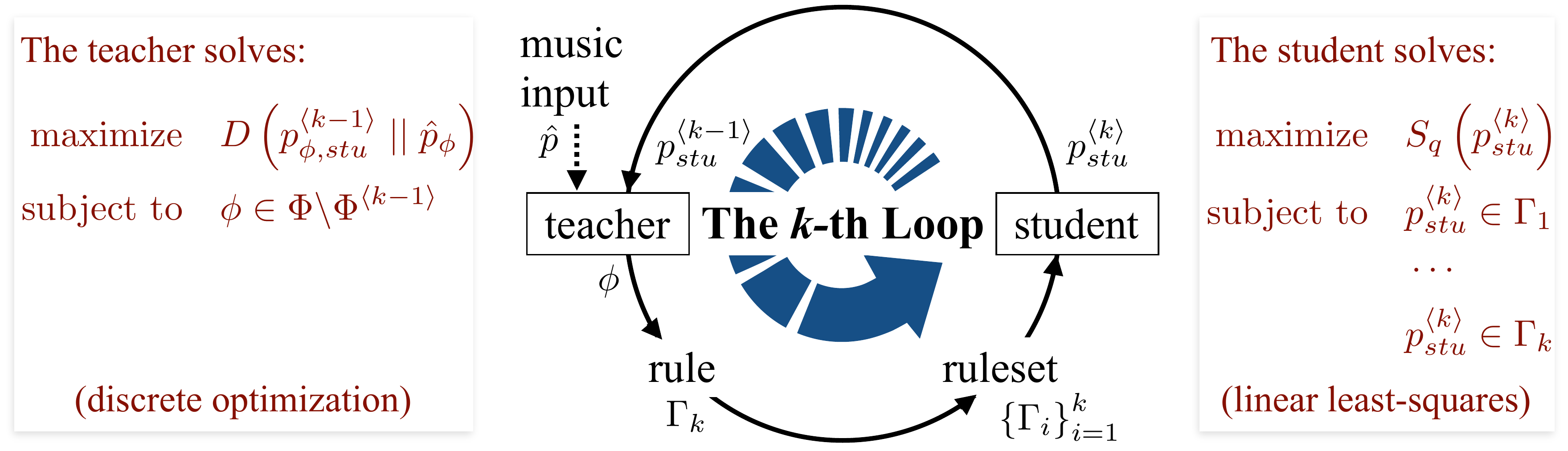}
\end{center}
\caption{MUS-ROVER's self-learning loop (the $k$th iteration). The teacher (discriminator) takes as inputs the student's latest style $p_{stu}^{\langle k-1 \rangle}$ and the input style $\hat{p}$, and identifies an abstraction $\phi$ under which the two styles manifest the largest statistical gap $D(\cdot ||\cdot)$. The identified abstraction is then made into a rule (a constraint set $\Gamma_k$), and augments the ruleset $\{\Gamma_i\}_{i=1}^k$. The student (generator) takes as input the augmented ruleset to update its writing style into $p_{stu}^{\langle k \rangle}$, meanwhile favors novelty, \ie more possibilities, by maximizing the Tsallis entropy $S_q$ subject to the rule constraints. In short, the teacher extracts rules while the student applies rules; both perform their tasks by solving optimization problems.}
\label{fig:loop}
\end{figure}

Built on top of computational music abstraction, MUS-ROVER is an automatic music theorist/pedagogue, which self-learns music composition rules from symbolic music datasets and provides personalized composition lessons.
Notably, as detailed in our earlier work \citep{YuVGK2016, YuV2017, YuLV2017}, the learning is robust to small amounts of random noise in the symbolic datasets due to the way probabilistic rules are defined therein and due to the continuity of information functionals such as entropy.
If there are systematic distortions in the dataset rather than random noise, however, these distortions may well be learned as rules.

Rules are learned from a ``teacher $\rightleftharpoons$ student'' model (Figure~\ref{fig:loop}).
This model is realized by a self-learning loop between a \emph{discriminative}
component (teacher) and a \emph{generative} component (student), where both entities cooperate to iterate through the rule-learning process.
The student starts as a \emph{tabula rasa} that picks pitches uniformly at random to form chords and chord progressions.
In each iteration, the teacher compares the student's writing style (represented by a probabilistic model) with the input style (represented by empirical statistics) to identify one music abstraction (represented by a partition of the chord space from our aforementioned computational music abstraction) that best reveals the gap between the two styles; and then make it a rule for the student.
Consequently, the student becomes less and less random by obeying more and more rules, and thus, approaches the input style.
From its rule-learning process on a dataset consisting of Bach's chorales, MUS-ROVER successfully recovered many known rules such as ``parallel perfect octaves/fifths are rare'' and ``tritones are often resolved either inward or outward'';
MUS-ROVER also suggests new probabilistic patterns on new music abstractions such as a temporarily unresolved chord progression behaving like a ``harmonic'' version of escape tone or changing tone.

In MUS-ROVER's self-learning loop, both the teacher and the student perform their tasks, namely rule extraction and rule realization respectively, by solving optimization problems.
For the student, the rule realization problem is about finding the most random probability distribution over the chord space (\ie maximizing novelty) as long as it satisfies all the probabilistic rules.
This is formalized as the following optimization problem:
\begin{align*}
\mbox{maximize }  \quad & S_q(x) \\
\mbox{subject to } \quad & x \in \Gamma_k, \quad k = 1, \ldots, K,
\end{align*}
where the optimization variable $x$ denotes the probability distribution over the chord space (note: we pre-specify a finite range of the pitches under consideration, \eg piano range, vocal range, so the chord space is finite and $x$ is a vector), the objective $S_q(x)$ is the Tsallis entropy of $x$ measuring the randomness of $x$, and the constraint sets $\Gamma_1, \ldots, \Gamma_K$ denote $K$ rules learned so far.
We mention two facts here.
First, in the limit as $q \to 1$, $S_q(x) \to H(x)$ which is the Shannon entropy.
Second, $x \in \Gamma_k$ is more explicitly represented as a linear equation $A^kx = y^k$ where the pair $(A^k,y^k)$ denotes the $k$th rule.
More specifically, $A^k$ is a boolean matrix (called a partition matrix) which stores the full information of a partition: $A^k_{ij} = 1$ if and only if the $j$th chord belongs to the $i$th partition cell; $y^k$ is a probability distribution over the partition cells.
We slightly overload the notations and let $\bm{x}$ and $\bm{y^k}$ be the information elements that represent the probability space $(X, \Sigma(I), x)$ and $(X, \Sigma(A^k), y^k)$, respectively.
In this notation, the sample space $X$ is the chord space and, for a partition matrix $P$, $\Sigma(P)$ denotes the $\sigma$-algebra generated by the partition represented by $P$ (so $\Sigma(I)$ denotes the $\sigma$-algebra generated by the finest partition).
Under this setting, the equality constraint $A^kx = y^k$ becomes $H(\bm{y^k}|\bm{x}) = 0$, \ie $\bm{y^k}$ is an abstraction of $\bm{x}$ as information elements or $\bm{y^k} \leq \bm{x}$ by Shannon's definition.
Therefore, the student's optimization problem for rule realization can be rewritten as follows:
\begin{align*}
\mbox{maximize } \quad & H(\bm{x}) \\
\mbox{subject to } \quad & H(\bm{y^k}|\bm{x}) = 0, \quad k = 1, \ldots, K,
\end{align*}
which describes an inference problem in an information lattice.
In words, this means that we want to find an information lattice for the student (used as its mental model) such that it agrees on all $K$ abstractions from the information lattice for the dataset and meanwhile achieves the largest randomness in the chord space.
As a result, a good student memorizes high-level principles---rules in terms of high-level abstractions and their statistical patterns---in music composition rather than the actual pieces.
Indeed the student is encouraged to be as creative as possible as long as the high-level principles are satisfied.

Since learning an information lattice requires both the explicit construction of an abstraction (semi)universe and statistical inference from data, the learning paradigm in MUS-ROVER differs from pure rule-based systems or pure data-driven methods in artificial intelligence, creating a middle ground between the two extremes.
Developmental cognitive scientists say that this resembles the way babies learn from both empirical experience and biological instinct \citep{Hutson2018}.
In this example, the constructed abstraction semiuniverse resembles biological instincts on perceiving sound, and the statistical inference in this semiuniverse resembles experiential learning driven by the instincts.
Therefore, the entire concept learning process is transparent, in contrast with black-box algorithms.
In particular, when we run MUS-ROVER on Bach's four-voice chorales, we can actually visualize the learning process in the information lattice as a process that mimics Bach's mental activities during chorale composition (Figure~\ref{fig:bach_brain_ngram_14_loop10}).

\begin{figure}[t]
\begin{center}
\includegraphics[width=0.95\columnwidth]{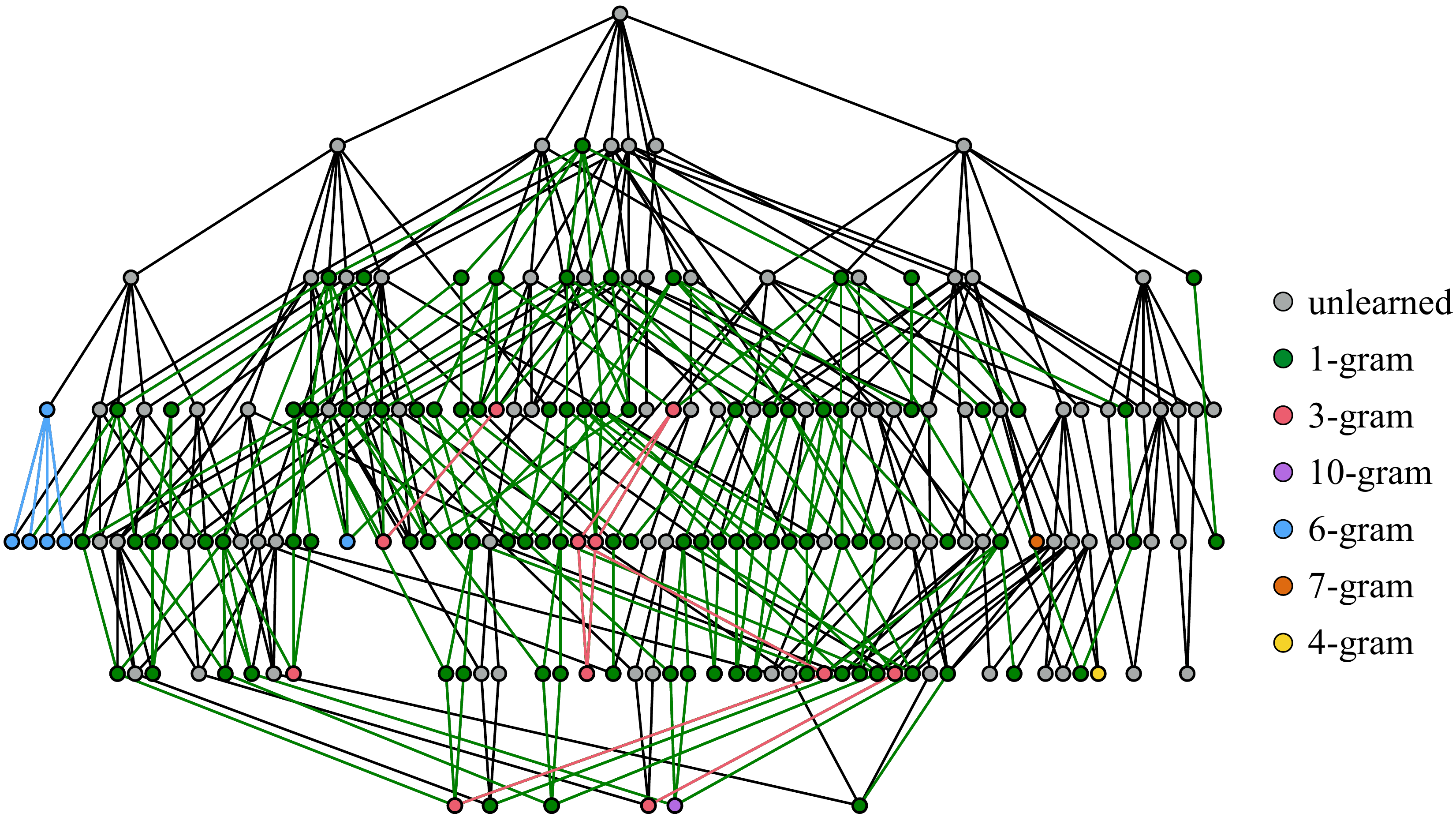}
\end{center}
\caption{Visualization of Bach's music mind for writing chorales. The underlying directed acyclic graph signifies an  upside-down information lattice. (Note: edges are oriented upwards according to the convention of a partition lattice; the coarsest partition at the bottom is omitted.) Colors are used to differentiate rule activations from different $n$-gram settings.}
\label{fig:bach_brain_ngram_14_loop10}
\end{figure}

\newpage
\acks{This work was funded in part by the IBM-Illinois Center for Cognitive Computing Systems Research (C3SR), a research collaboration as part of the IBM AI Horizons Network; and in part by grant number 2018-182794 from the Chan Zuckerberg Initiative DAF, an advised fund of Silicon Valley Community Foundation.}


\bibliography{abrv,conf_abrv,group_abstr}

\begin{thebibliography}{57}
\providecommand{\natexlab}[1]{#1}
\providecommand{\url}[1]{\texttt{#1}}
\expandafter\ifx\csname urlstyle\endcsname\relax
  \providecommand{\doi}[1]{doi: #1}\else
  \providecommand{\doi}{doi: \begingroup \urlstyle{rm}\Url}\fi

\bibitem[B{\'e}lai and Jaoua(1998)]{BelaiJ1998}
Henda B{\'e}lai and Ali Jaoua.
\newblock Abstraction of objects by conceptual clustering.
\newblock \emph{Inf. Sci.}, 109\penalty0 (1-4):\penalty0 79--94, 1998.

\bibitem[Bengio(2009)]{Bengio2009}
Yoshua Bengio.
\newblock Learning deep architectures for {AI}.
\newblock \emph{Found. Trends Mach. Learn.}, 2\penalty0 (1):\penalty0 1--127,
  2009.

\bibitem[Bieberbach(1911)]{Bieberbach1911}
Ludwig Bieberbach.
\newblock {\"U}ber die bewegungsgruppen der euklidischen r{\"a}ume.
\newblock \emph{Math. Ann.}, 70\penalty0 (3):\penalty0 297--336, 1911.

\bibitem[Biederman(1987)]{Biederman1987}
Irving Biederman.
\newblock Recognition-by-components: a theory of human image understanding.
\newblock \emph{Psychol. Rev.}, 94\penalty0 (2):\penalty0 115, 1987.

\bibitem[Boone(1958)]{Boone1958}
William~W Boone.
\newblock The word problem.
\newblock \emph{Proc. Natl. Acad. Sci. U.S.A.}, 44\penalty0 (10):\penalty0
  1061--1065, 1958.

\bibitem[Bredeche et~al.(2006)Bredeche, Shi, and Zucker]{BredecheSZ2006}
Nicolas Bredeche, Zhongzhi Shi, and Jean-Daniel Zucker.
\newblock Perceptual learning and abstraction in machine learning: an
  application to autonomous robotics.
\newblock \emph{{IEEE} Trans. Syst., Man, Cybern. {C}}, 36\penalty0
  (2):\penalty0 172--181, 2006.

\bibitem[Britton(1958)]{Britton1958}
John~L Britton.
\newblock The word problem for groups.
\newblock \emph{Proc. Lond. Math. Soc.}, 3\penalty0 (4):\penalty0 493--506,
  1958.

\bibitem[Bundy et~al.(1990)Bundy, Giunchiglia, and Walsh]{BundyGW1990}
Alan Bundy, Fausto Giunchiglia, and Toby Walsh.
\newblock \emph{Building Abstractions}.
\newblock University of Edinburgh, Department of Artificial Intelligence, 1990.

\bibitem[Chan and Yeung(2002)]{ChanY2002}
Terence Ho~Leung Chan and Raymond Yeung.
\newblock On a relation between information inequalities and group theory.
\newblock \emph{{IEEE} Trans. Inf. Theory}, 48\penalty0 (7):\penalty0
  1992--1995, 2002.

\bibitem[Charlap(2012)]{Charlap2012}
Leonard~S Charlap.
\newblock \emph{Bieberbach Groups and Flat Manifolds}.
\newblock Springer Science \& Business Media, 2012.

\bibitem[Collins(1986)]{Collins1986}
Donald~J Collins.
\newblock A simple presentation of a group with unsolvable word problem.
\newblock \emph{Ill. J. Math.}, 30\penalty0 (2):\penalty0 230--234, 1986.

\bibitem[Conrad(2016)]{Conrad2016}
Keith Conrad.
\newblock Generating sets.
\newblock
  \url{http://www.math.uconn.edu/~kconrad/blurbs/grouptheory/genset.pdf}, 2016.

\bibitem[Cormack(1971)]{Cormack1971}
Richard~M Cormack.
\newblock A review of classification.
\newblock \emph{J. R. Stat. Soc. Ser. A. Gen.}, 134\penalty0 (3):\penalty0
  321--367, 1971.

\bibitem[Dietterich(2018)]{Dietterich2018}
Thomas~G Dietterich.
\newblock Reflections on innateness in machine learning.
\newblock
  \url{https://medium.com/@tdietterich/reflections-on-innateness-in-machine-learning-4eebefa3e1af},
  2018.

\bibitem[Eick and Souvignier(2006)]{EickS2006}
Bettina Eick and Bernd Souvignier.
\newblock Algorithms for crystallographic groups.
\newblock \emph{Int. J. Quantum Chem.}, 106\penalty0 (1):\penalty0 316--343,
  2006.

\bibitem[Felsch and G{\"a}hler(2000)]{FelschG2000}
Volkmar Felsch and Franz G{\"a}hler.
\newblock Cryst{C}at-a library of crystallographic groups.
\newblock \emph{A refereed Gap}, 4, 2000.

\bibitem[Fisher(1987)]{Fisher1987}
Douglas~H Fisher.
\newblock Knowledge acquisition via incremental conceptual clustering.
\newblock \emph{Mach. Learn.}, 2\penalty0 (2):\penalty0 139--172, 1987.

\bibitem[Giunchiglia and Walsh(1992)]{GiunchigliaW1992}
Fausto Giunchiglia and Toby Walsh.
\newblock A theory of abstraction.
\newblock \emph{Artif. Intell.}, 57\penalty0 (2-3):\penalty0 323--389, 1992.

\bibitem[G{\'o}mez and Lakusta(2004)]{GomezL2004}
Rebecca~L G{\'o}mez and Laura Lakusta.
\newblock A first step in form-based category abstraction by 12-month-old
  infants.
\newblock \emph{Developmental Sci.}, 7\penalty0 (5):\penalty0 567--580, 2004.

\bibitem[Hubert and Arabie(1985)]{HubertA1985}
Lawrence Hubert and Phipps Arabie.
\newblock Comparing partitions.
\newblock \emph{J. Classif.}, 2\penalty0 (1):\penalty0 193--218, 1985.

\bibitem[Hutson(2018)]{Hutson2018}
Matthew Hutson.
\newblock How researchers are teaching {AI} to learn like a child.
\newblock
  \url{http://www.sciencemag.org/news/2018/05/how-researchers-are-teaching-ai-learn-child},
  2018.

\bibitem[Jain and Dubes(1988)]{JainD1988}
Anil~K Jain and Richard~C Dubes.
\newblock \emph{Algorithms for Clustering Data}.
\newblock Prentice-Hall, Inc., 1988.

\bibitem[Kaufman and Rousseeuw(2009)]{KaufmanR2009}
Leonard Kaufman and Peter~J Rousseeuw.
\newblock \emph{Finding Groups in Data: An Introduction to Cluster Analysis},
  volume 344.
\newblock John Wiley \& Sons, 2009.

\bibitem[Kondor(2008)]{Kondor2008}
Risi Kondor.
\newblock \emph{Group Theoretical Methods in Machine Learning}.
\newblock Columbia University, 2008.

\bibitem[Krupka and Tishby(2007)]{KrupkaT2007}
Eyal Krupka and Naftali Tishby.
\newblock Incorporating prior knowledge on features into learning.
\newblock In \emph{Proc. 10th Int. Conf. Artif. Intell. Stat. (AISTATS 2007)},
  pages 227--234, 2007.

\bibitem[LeCun et~al.(2015)LeCun, Bengio, and Hinton]{LecunBH2015}
Yann LeCun, Yoshua Bengio, and Geoffrey Hinton.
\newblock Deep learning.
\newblock \emph{Nature}, 521\penalty0 (7553):\penalty0 436--444, 2015.

\bibitem[Lewin(2010)]{Lewin2010}
David Lewin.
\newblock \emph{Generalized Musical Intervals and Transformations}.
\newblock Oxford University Press, 2010.

\bibitem[Li and Chong(2011)]{LiC2011}
Hua Li and Edwin~KP Chong.
\newblock On a connection between information and group lattices.
\newblock \emph{Entropy}, 13\penalty0 (3):\penalty0 683--708, 2011.

\bibitem[Livingston(1998)]{Livingston1998}
Kenneth~R Livingston.
\newblock \emph{Rationality and the Psychology of Abstraction}.
\newblock Institute for Objectivist Studies, 1998.

\bibitem[Macrae and Bodenhausen(2000)]{MacraeB2000}
C~Neil Macrae and Galen~V Bodenhausen.
\newblock Social cognition: {T}hinking categorically about others.
\newblock \emph{Annu. Rev. Psychol.}, 51\penalty0 (1):\penalty0 93--120, 2000.

\bibitem[Mandler(2000)]{Mandler2000}
Jean~M Mandler.
\newblock Perceptual and conceptual processes in infancy.
\newblock \emph{J. Cogn. Develop.}, 1\penalty0 (1):\penalty0 3--36, 2000.

\bibitem[Marcus(2018)]{Marcus2018}
Gary Marcus.
\newblock Innateness, {A}lpha{Z}ero, and artificial intelligence.
\newblock arXiv:1801.05667 [cs.AI], 2018.

\bibitem[Michalski and Stepp(1983)]{MichalskiS1983}
Ryszard~S Michalski and Robert~E Stepp.
\newblock Learning from observation: Conceptual clustering.
\newblock \emph{Mach. Learn.}, 1:\penalty0 331--363, 1983.

\bibitem[Novikov(1955)]{Novikov1955}
Petr~Sergeevich Novikov.
\newblock On the algorithmic unsolvability of the word problem in group theory.
\newblock \emph{Proc. Steklov. Inst. Math.}, pages 3--143, 1955.

\bibitem[Oudot(2015)]{Oudot2015}
Steve~Y Oudot.
\newblock \emph{Persistence Theory: From Quiver Representations to Data
  Analysis}, volume 209.
\newblock American Mathematical Society Providence, RI, 2015.

\bibitem[Raina et~al.(2006)Raina, Ng, and Koller]{RainaNK2006}
Rajat Raina, Andrew~Y Ng, and Daphne Koller.
\newblock Constructing informative priors using transfer learning.
\newblock In \emph{Proc. 23rd Int. Conf. Mach. Learn. (ICML 2006)}, pages
  713--720, 2006.

\bibitem[Ram and Jones(1994)]{RamJ1994}
Ashwin Ram and Eric~K Jones.
\newblock \emph{Foundations of Foundations of Artificial Intelligence}.
\newblock Department of Computer Science, Victoria University of Wellington,
  1994.

\bibitem[Raman and Varshney(2019)]{RamanV2019}
Ravi~K Raman and Lav~R Varshney.
\newblock Universal clustering.
\newblock In Yonina Eldar and Miguel Rodrigues, editors,
  \emph{Information-Theoretic Methods in Data Science}. Cambridge University
  Press, 2019.

\bibitem[Rand(1971)]{Rand1971}
William~M Rand.
\newblock Objective criteria for the evaluation of clustering methods.
\newblock \emph{J. Am. Stat. Assoc.}, 66\penalty0 (336):\penalty0 846--850,
  1971.

\bibitem[Rokach and Maimon(2005)]{RokachM2005}
Lior Rokach and Oded Maimon.
\newblock Clustering methods.
\newblock In Lior Rokach and Oded Maimon, editors, \emph{Data Mining and
  Knowledge Discovery Handbook}, pages 321--352. Springer, 2005.

\bibitem[Saitta and Zucker(1998)]{SaittaZ1998}
Lorenza Saitta and Jean-Daniel Zucker.
\newblock Semantic abstraction for concept representation and learning.
\newblock In \emph{Proc. Symp. Abstr., Reformul. and Approx.}, pages 103--120,
  1998.

\bibitem[Saitta and Zucker(2013)]{SaittaZ2013}
Lorenza Saitta and Jean-Daniel Zucker.
\newblock \emph{Abstraction in Artificial Intelligence and Complex Systems}.
\newblock Springer, 2013.

\bibitem[Shannon(1953)]{Shannon1953}
Claude Shannon.
\newblock The lattice theory of information.
\newblock \emph{Trans. {IRE} Prof. Group Inf. Theory}, 1\penalty0 (1):\penalty0
  105--107, 1953.

\bibitem[Sheikhalishahi et~al.(2016)Sheikhalishahi, Mejri, and
  Tawbi]{SheikhalishahiMT2016}
Mina Sheikhalishahi, Mohamed Mejri, and Nadia Tawbi.
\newblock On the abstraction of a categorical clustering algorithm.
\newblock In \emph{Proc. 12th Int. Conf. Mach. Learn. Data Min. (MLDM 2016)},
  pages 659--675, 2016.

\bibitem[{The GAP Group}(2018)]{GAP4}
{The GAP Group}.
\newblock {GAP -- Groups, Algorithms, and Programming, Version 4.9.1}.
\newblock \url{https://www.gap-system.org}, 2018.

\bibitem[Tymoczko(2010)]{Tymoczko2010}
Dmitri Tymoczko.
\newblock \emph{A Geometry of Music: Harmony and Counterpoint in the Extended
  Common Practice}.
\newblock Oxford University Press, 2010.

\bibitem[VanRullen et~al.(2014)VanRullen, Zoefel, and Ilhan]{VanrullenZI2014}
Rufin VanRullen, Benedikt Zoefel, and Barkin Ilhan.
\newblock On the cyclic nature of perception in vision versus audition.
\newblock \emph{Phil. Trans. R. Soc. B}, 369\penalty0 (1641):\penalty0
  20130214, 2014.

\bibitem[Von Der~Heydt et~al.(1992)Von Der~Heydt, Peterhans, and
  Dursteler]{VonPD1992}
R{\"u}diger Von Der~Heydt, Esther Peterhans, and MR~Dursteler.
\newblock Periodic-pattern-selective cells in monkey visual cortex.
\newblock \emph{J. Neurosci.}, 12\penalty0 (4):\penalty0 1416--1434, 1992.

\bibitem[Von~Luxburg et~al.(2012)Von~Luxburg, Williamson, and Guyon]{VonWG2012}
Ulrike Von~Luxburg, Robert~C Williamson, and Isabelle Guyon.
\newblock Clustering: Science or art?
\newblock In \emph{Proc. 2012 ICML Workshop Unsuperv. Transf. Learn. (UTL
  2012)}, pages 65--79, 2012.

\bibitem[Weinberg(1968)]{Weinberg1968}
Julius~R Weinberg.
\newblock \emph{Abstraction, Relation, and Induction: Three Essays in the
  History of Thought}.
\newblock University of Wisconsin Press, 1968.

\bibitem[Yeung(2008)]{Yeung2008}
Raymond~W Yeung.
\newblock \emph{Information Theory and Network Coding}.
\newblock Springer Science \& Business Media, 2008.

\bibitem[Yu and Varshney(2017)]{YuV2017}
Haizi Yu and Lav~R. Varshney.
\newblock Towards deep interpretability ({MUS-ROVER II}): Learning hierarchical
  representations of tonal music.
\newblock In \emph{Proc. 5th Int. Conf. Learn. Represent. (ICLR 2017)}, 2017.

\bibitem[Yu et~al.(2016)Yu, Varshney, Garnett, and Kumar]{YuVGK2016}
Haizi Yu, Lav~R Varshney, Guy~E Garnett, and Ranjitha Kumar.
\newblock {MUS-ROVER}: A self-learning system for musical compositional rules.
\newblock In \emph{Proc. 4th Int. Workshop Music. Metacreation (MUME 2016)},
  2016.

\bibitem[Yu et~al.(2017)Yu, Li, and Varshney]{YuLV2017}
Haizi Yu, Tianxi Li, and Lav~R Varshney.
\newblock Probabilistic rule realization and selection.
\newblock In \emph{Proc. 31th Annu. Conf. Neural Inf. Process. Syst. (NIPS
  2017)}, pages 1561--1571, 2017.

\bibitem[Yu et~al.(2007)Yu, Jan, Simoff, and Debenham]{YuJSD2007}
Ting Yu, Tony Jan, Simeon Simoff, and John Debenham.
\newblock Incorporating prior domain knowledge into inductive machine learning.
\newblock \emph{University of Technology, Sydney}, 2007.

\bibitem[Zassenhaus(1948)]{Zassenhaus1948}
Hans Zassenhaus.
\newblock {\"U}ber einen algorithmus zur bestimmung der raumgruppen.
\newblock \emph{Comm. Math. Helv.}, 21\penalty0 (1):\penalty0 117--141, 1948.

\bibitem[Zucker(2003)]{Zucker2003}
Jean-Daniel Zucker.
\newblock A grounded theory of abstraction in artificial intelligence.
\newblock \emph{Phil. Trans. R. Soc. B}, 358\penalty0 (1435):\penalty0
  1293--1309, 2003.

\end{thebibliography}

\renewcommand{\theHsection}{A\arabic{section}}
\appendix

\section{Mathematical Preliminaries}

\subsection{For Section~\ref{sec:abstraction-as-partition}}
\label{app:abstraction-as-partition}
A \emph{partition} $\pcal$ of a set $X$ is a collection of mutually disjoint non-empty subsets of $X$ whose union is $X$.
Elements in $\pcal$ are called \emph{cells} (or less formally, \emph{clusters}); the size of $\pcal$ is $|\pcal|$, \ie the number of cells in $\pcal$.
An \emph{equivalence relation} on a set $X$, denoted $\sim$, is a binary relation satisfying reflexivity, symmetry, and transitivity.
An equivalence relation $\sim$ on $X$ induces a partition of $X$: $\pcal = X{/\!\!\sim} := \{[x]\mid x \in X\}$, where the quotient $X{/\!\!\sim}$ is the set of \emph{equivalence classes} $[x] := \{x' \in X \mid x \sim x'\}$. Conversely, a partition $\pcal$ of $X$ also induces an equivalence relation $\sim$ on $X$: $x\sim x'$ if and only if $x,x'$ are in the same cell in $\pcal$.

\subsection{For Section~\ref{sec:abstraction-universe-as-partition-lattice}}
\label{app:abstraction-universe-as-partition-lattice}
A \emph{partial order} is a binary relation that satisfies reflexivity, antisymmetry, and transitivity. A \emph{lattice} is a partially ordered set (or a \emph{poset}) in which every pair of elements has a unique supremum (\ie least upper bound) called the \emph{join} and a unique infimum (\ie greatest lower bound) called the \emph{meet}. For any pair of elements $p,q$ in a lattice, we denote their join and meet by $p \vee q$ and $p \wedge q$, respectively.
A \emph{sublattice} is a nonempty subset of a lattice, which is closed under join and meet.
A \emph{join-semilattice} (resp.\ \emph{meet-semilattice}) is a poset in which every pair of elements has a join (resp.\ meet). So, a lattice is both a join-semilattice and a meet-semilattice. A lattice is \emph{bounded} if it has a greatest element and a least element.

\subsection{For Section~\ref{sec:symmetry-generated-abstraction}}
\label{app:symmetry-generated-abstraction}
A \emph{group} is a pair $(G, *)$ where $G$ is a set and $*: G \times G \to G$ is a binary operation satisfying the \emph{group axioms}: associativity, the existence of identity (denoted $e$), and the existence of inverse. We also directly say that $G$ is a group, whenever the group operation is understood. Given a group $(G,*)$, a subset $H \subseteq G$ is a \emph{subgroup}, denoted $H \leq G$, if $(H,*)$ is a group.
The singleton $\{e\}$ is a subgroup of any group, called the \emph{trivial group}.
Given a group $G$ and a subset $S \subseteq G$, the subgroup (of $G$) generated by $S$, denoted $\langle S \rangle$, is the smallest subgroup of $G$ containing $S$; equivalently, $\langle S \rangle$ is the set of all finite products of elements in $S \cup S^{-1}$ where $S^{-1} := \{s^{-1} \mid s \in S\}$.
The subgroup generated by a singleton $S = \{s\}$ is called a \emph{cyclic group}; for simplicity, we also call it the subgroup generated by $s$ (an element), denoted $\langle s \rangle$.
Let $G$ be a group and $X$ be a set, then a \emph{group action} of $G$ on $X$ (or $G$-action on $X$) is a function $\cdot: G\times X \to X$ that satisfies identity ($e\cdot x = x, \forall x \in X$) and compatibility ($g\cdot (h\cdot x) = (g*h)\cdot x, \forall g,h \in G, \forall x \in X$).
In this paper, we adopt by default the multiplicative notation for group operations and actions, in which $\cdot$ or $*$ or both may be omitted.
For any $G$-action on $X$, the \emph{orbit} of $x$ under $G$ is the set $Gx: = \{g \cdot x \mid g\in G\}$, and the \emph{quotient} of $X$ by $G$-action is the set consisting of all orbits $X/G := \{Gx \mid x \in X\}$.

\subsection{For Section~\ref{sec:duality-subgroup-to-abstraction}}
\label{app:duality_subgroup_to_abstraction}
Let $G$ be a group. We use $\hcal_G^{*}$ to denote the collection of all subgroups of $G$. The binary relation ``a subgroup of'' on $\hcal_G^{*}$, denoted $\leq$, is a partial order. $(\hcal_G^{*},\leq)$ is a lattice, called the \emph{lattice of all subgroup} of $G$, or the \emph{(complete) subgroup lattice} for $G$ in short.
For any pair of subgroups $A,B \in \hcal_G^{*}$, the join $A \vee B = \langle A\cup B \rangle$ is the smallest subgroup containing $A$ and $B$; the meet $A \wedge B = A \cap B$ is the largest subgroup contained in $A$ and $B$.

\subsection{For Section~\ref{sec:more-duality-subgroup-to-abstraction}}
\label{app:more_duality_subgroup_to_abstraction}
Let $G$ be a group. We say that two elements $a,b \in G$ are \emph{conjugate} to each other, if there exists a $g \in G$ such that $b = gag^{-1}$, and two subsets $A,B \subseteq G$ are \emph{conjugate} to each other, if there exists a $g \in G$ such that $B = gAg^{-1}$. In either case, conjugacy is an equivalence relation on $G$ (resp.\ $2^G$, \ie the power set of $G$), where the equivalence class of $a \in G$ (resp.\ $A \subseteq G$) is called the \emph{conjugacy class} of $a$ (resp.\ $A$). In particular, we can restrict the above equivalence relation on $2^G$ to the collection of all subgroups $\hcal_{G}^{*}$ which is a subset of $2^G$.

\subsection{For Section~\ref{sec:the-top-down-approach-sp-subgroups}}
\label{app:the-top-down-approach-sp-subgroups}
Let $(G,*)$ and $(H,\cdot)$ be two groups. A function $\phi: G \to H$ is called a \emph{homomorphsim} if $\phi(a*b) = \phi(a)\cdot \phi(b)$ for all $a,b \in G$. An \emph{isomorphism} is a bijective homomorphism. We say two groups $G$ and $H$ are \emph{homomorphic} if there exists a homomorphism $\phi: G \to H$, and say they are \emph{isomorphic}, denoted $G \cong H$, if there exists an isomorphism $\phi: G \to H$.
Let $S$ be a subset of a group $G$; then ${\rm N}_G(S) := \{g \in G \mid gSg^{-1} = S\}$ is called the \emph{normalizer} of $S$ in $G$, which is a subgroup of $G$. We say a subset $T \subseteq G$ \emph{normalizes} another subset $S \subseteq G$ if $T \subseteq {\rm N}_G(S)$.
We say a subgroup $N$ of a group $G$ is a \emph{normal subgroup} of $G$, denoted $N \trianglelefteq G$, if $G$ normalizes $N$, \ie $G = {\rm N}_G(N)$.
Let $G$ be a group, $N \trianglelefteq G$, $H \leq G$, $N \cap H = \{e\}$, and $G = NH$; then $NH$ is the \emph{inner semi-direct product} of $N$ and $H$, and $N \rtimes H$ is the \emph{outer semi-direct product} of $N$ and $H$. The outer semi-direct product $N \rtimes H$ is the group of all ordered pairs $(n,h) \in N\times H$ with group operation defined by $(n,h)(n',h') = (nhn'h^{-1},hh')$. The inner and outer semi-direct products are isomorphic, \ie $NH \cong N \rtimes H$. The semi-direct product equation $G = NH$ gives a decomposition of $G$ into ``nearly non-overlapping'' (\ie with trivial intersection) subgroups; moreover, for any $g \in G$, these exist a unique $n \in N$ and $h \in H$ such that $g = nh$.


\section{Mathematical Proofs}

\subsection{Theorem~\ref{thm:group-action-layer-up}}
\label{app:group-action-layer-up}
\begin{proof}
Pick any $g \in G$ and $Y \in 2^X$. For any $x \in g\cdot Y$, we have $x = g\cdot y$ for some $y \in Y$. Since $Y \in 2^X$, \ie $Y \subseteq X$, then $y \in X$. This implies that $x = g\cdot y \in X$. Therefore, $g\cdot Y \subseteq X$, \ie $g \cdot Y \in 2^X$. To see the corresponding function $\cdot: G \times 2^X \to 2^X$ is a $G$-action on $2^X$, we first check that the identity element $e \in G$ satisfies $e \cdot Y = \{e \cdot y \mid y \in Y\} = \{y \mid y \in Y\} = Y$; then check that for any $g, h \in G$, $g \cdot (h \cdot Y) = \{g\cdot z \mid z \in \{h \cdot y \mid y \in Y\}\} = \{g \cdot (h\cdot y) \mid y \in Y\} = \{(gh)\cdot y \mid y \in Y\} = (gh) \cdot Y$.

Pick any $g \in G$ and $\pcal \in \mathfrak{P}_X^{*}$. For any distinct elements $Q,Q' \in g\cdot \pcal$, we have $Q = g \cdot P$ and $Q' = g \cdot P'$ for some distinct $P,P' \in \pcal$, respectively. Since $P,P'$ are two distinct cells in partition $\pcal$, $P\cap P' = \emptyset$. We claim that $Q \cap Q' = \emptyset$. Assume otherwise, then there exists a $q \in Q \cap Q'$ and $q = g \cdot p = g\cdot p'$ for some $p \in P, p' \in P'$. This implies that $p = (g^{-1}g)\cdot p = g^{-1} \cdot (g \cdot p) = g^{-1} \cdot (g \cdot p') = (g^{-1}g) \cdot p' = p' \in P \cap P'$, which contradicts the fact that $P \cap P' = \emptyset$. For any $x \in X$, $g^{-1}\cdot x \in X$, then there exists a cell $P \in \pcal$ such that $g^{-1}\cdot x \in P$. This implies that $x = (gg^{-1})x = g \cdot (g^{-1}\cdot x) \in g\cdot P$ which is an element in $g \cdot \pcal$. Therefore, the union of all elements in $g\cdot \pcal$ covers $X$, or more precisely, equals $X$, since every element in $g\cdot \pcal$ is a subset of $X$. Hence, $g \cdot \pcal$ is indeed a partition of $X$, \ie $g \cdot \pcal \in \mathfrak{P}_X^{*}$.
To see the corresponding function $\cdot: G \times \mathfrak{P}_X^{*} \to \mathfrak{P}_X^{*}$ is a $G$-action on $\mathfrak{P}_X^{*}$, we first check that the identity element $e \in G$ satisfies $e \cdot \pcal = \{e \cdot P \mid P \in \pcal\} = \{P \mid P \in \pcal\} = \pcal$; then check that for any $g, h \in G$, $g\cdot(h\cdot \pcal) = \{g\cdot Q \mid Q \in \{h\cdot P \mid P \in \pcal\}\} = \{g \cdot (h \cdot P) \mid P \in \pcal\} = \{(gh)\cdot P \mid P \in \pcal\} = (gh) \cdot \pcal$.
\end{proof}

\subsection{Theorem~\ref{thm:in-the-same-coset-means-same-linear-part}}
\label{app:in-the-same-coset-means-same-linear-part}
\begin{proof}
It is straightforward to check that
\begin{align*}
T \circ h = T \circ h' \iff h'\circ h^{-1} \in T \iff \ell(h'\circ h^{-1}) = I \iff \ell(h') = \ell(h).
\end{align*}
The last if-and-only-if condition holds because $\ell(h'\circ h^{-1}) = \ell(h')\ell(h)^{-1}$ by Lemma~\ref{lemma:linear-part-is-homomorphism}.
\end{proof}

\subsection{Theorem~\ref{thm:in-the-same-coset-translation-part-diff}}
\label{app:in-the-same-coset-translation-part-diff}
\begin{proof}
Let $h,h' \in H$ be any two affine transformations in the same coset in $H/T$, then this means $T \circ h = T \circ h'$, or equivalently $h'\circ h^{-1} \in T$. By Equation~\eqref{eqn:composition-and-inverse-of-affine-pair}, we have
\begin{align*}
\tau(h' \circ h^{-1}) = \tau(h')+\ell(h')\tau(h^{-1}) = \tau(h') + \ell(h')(-\ell(h)^{-1}\tau(h)) = \tau(h') - \tau(h),
\end{align*}
where the last equality holds by Theorem~\ref{thm:in-the-same-coset-means-same-linear-part}.
Therefore, $\tau(h') - \tau(h) \in \tau(T)$.
\end{proof}

\subsection{Theorem~\ref{thm:compatibility-via-group-action}}
\label{app:compatibility-via-group-action}
\begin{proof}
For any $A \in \ell(H)$ and $v \in \tau(T)$, there exists an $f_{A,u} \in H$ and an $f_{I,v} \in T$. Since $T \trianglelefteq H$, then $f_{A,u} \circ f_{I,v} \circ f_{A,u}^{-1} \in T$. By Equation~\eqref{eqn:composition-and-inverse-of-affine-pair} we have that $f_{A,u} \circ f_{I,v} \circ f_{A,u}^{-1} = f_{I,Av}$, so $f_{I,Av} \in T$, \ie $Av = \tau(f_{I,Av}) \in \tau(T)$.
To see $\cdot: \ell(H) \times \tau(T) \to \tau(T)$ defines a group action of $\ell(H)$ on $\tau(T)$ is then easy, since it is a matrix-vector multiplication. A quick check shows that for any $v \in \tau(T)$, $I\cdot v = v$; for any $v \in \tau(T)$ and $A,B \in \ell(H)$, $A\cdot(B\cdot v) = (AB)\cdot v$.
\end{proof}

\subsection{Lemma~\ref{lemma:vector-system-properties}}
\label{app:vector-system-properties}
\begin{proof}
For any $A \in L$, $\xi(A) = \xi(IA) = \xi(I) + I\xi(A) = \xi(I) + \xi(A)$. Note that $\xi(A), \xi(I) \in \reals^n/V$, so $\xi(A) = V + a$ and $\xi(I) = V + b$ for some $a,b \in \reals^n$. Thus,
\begin{align*}
\xi(A) = \xi(I) + \xi(A) \implies V + a = V + (b+a).
\end{align*}
This further implies that $b \in V$ and $\xi(I) = V + b = V$.

For any $A \in L$, $V = \xi(A^{-1}A) = \xi(A^{-1}) + A^{-1}\xi(A)$. Note that $\xi(A),\xi(A^{-1}) \in \reals^n/V$, so $\xi(A) = V + a$ and $\xi(A^{-1}) = V + c$ for some $a,c\in \reals^n$. Thus,
\begin{align*}
V = \xi(A^{-1}) + A^{-1}\xi(A) \implies V = V + (c + A^{-1}a).
\end{align*}
This further implies that $c+A^{-1}a \in V$, or equivalently, $c \in V + (-A^{-1}a)$. Therefore, $c$ and $-A^{-1}a$ are in the same coset and $\xi(A^{-1}) = V + c = V + (-A^{-1}a) = -A^{-1}\xi(A)$.
\end{proof}

\subsection{Theorem~\ref{thm:affine-group-identification}}
\label{app:affine-group-identification}
\begin{proof}
Let $\Psi: \hcal_{\affine(\reals^n)}^{*} \to \Sigma$ be the function defined by
\begin{align*}
\Psi(H) := (\ell(H),~ \tau(T),~ \xi_H) \quad \mbox{ for any } H \in \hcal_{\affine(\reals^n)}^{*},
\end{align*}
where $T := \tra(\reals^n) \cap H$, and $\xi_H: \ell(H) \to \reals^n/\tau(T)$ is given by $\xi_H(A) = \tau(\bar{\ell}^{-1}(A))$ with $\bar{\ell}: H/T \to \ell(H)$ being the isomorphism defined in the proof of Theorem~\ref{thm:characterize-linear-part}.
We first show $\Psi$ is well-defined, and then show it is bijective.
The entire proof is divided into four parts.

\paragraph{1. Check that $\xi_H$ is well-defined.}
More specifically, we want to show that
\begin{align*}
\xi_H(A) \in \reals^n/\tau(T) \quad \mbox{ for any } H \in \hcal_{\affine(\reals^n)}^{*} \mbox{ and } A \in \ell(H).
\end{align*}
For any $A \in \ell(H)$, $\bar{\ell}^{-1}(A)$ is the coset $T\circ h$ in $H/T$ such that $\ell(h) = A$.
Pick any $h \in \bar{\ell}^{-1}(A)$ which is possible since as a coset $\bar{\ell}^{-1}(A) \neq \emptyset$.
For any $h' \in \bar{\ell}^{-1}(A)$, by Theorem~\ref{thm:in-the-same-coset-translation-part-diff}, $\tau(h')-\tau(h) \in \tau(T)$, \ie $\tau(h') \in \tau(T)+\tau(h)$, so $\tau(\bar{\ell}^{-1}(A)) \subseteq \tau(T) + \tau(h)$.
Conversely, for any $w \in \tau(T) + \tau(h)$, there exists a $v \in \tau(T)$ such that $w = v+\tau(h)$. Note that the pure translation $t_v \in T\leq H$ and $h \in \bar{\ell}^{-1}(A) \subseteq H$, so their composition $t_v \circ h \in H$. Further, it is an easy check that $\ell(t_v \circ h) = A$ and $\tau(t_v \circ h) = v + \tau(h) = w$. This implies that we have found $h' := t_v \circ h \in \bar{\ell}^{-1}(A)$ and $\tau(h') = w$, thus, $w \in \tau(\bar{\ell}^{-1}(A))$. This finally yields that $\tau(T) + \tau(h) \subseteq \tau(\bar{\ell}^{-1}(A))$. Combining the two directions, we have $\tau(\bar{\ell}^{-1}(A)) = \tau(T) + \tau(h)$; so, $\xi_H(A) = \tau(\bar{\ell}^{-1}(A)) \in \reals^n/\tau(T)$. This implies $\xi_H$ is well-defined.

\paragraph{2. Check that $\Psi$ is well-defined.}
More specifically, we want to show that
\begin{align*}
\Psi(H) \in \Sigma \quad \mbox{ for any } H \in \hcal_{\affine(\reals^n)}^{*}.
\end{align*}
For any $H \in \hcal_{\affine(\reals^n)}^{*}$, it is clear that $\ell(H) \leq \genlin_n(\reals)$, $\tau(T) \leq \reals^n$, and they are compatible (Theorem~\ref{thm:compatibility-via-group-action}); therefore, it suffices to show that $\xi_H \in \Xi_{\ell(H), \tau(T)}$.
Note that, for any $A,A' \in \ell(H)$, the product of two cosets $\bar{\ell}^{-1}(A)\bar{\ell}^{-1}(A') = (T \circ f_{A,u})(T \circ f_{A',u'}) = T \circ (f_{A,u} \circ f_{A',u'}) = T \circ f_{AA', u+Au'}$, for some $f_{A,u}, f_{A',u'} \in H$.
Therefore,
\begin{align*}
\xi_H(AA') = \tau(\bar{\ell}^{-1}(AA')) = \tau(\bar{\ell}^{-1}(A)\bar{\ell}^{-1}(A')) = \tau(T \circ f_{AA', u+Au'}) = \tau(T) + u + Au'.
\end{align*}
On the other hand,
\begin{align*}
\xi_H(A)+A\xi_H(A') = (\tau(T) + u) + A(\tau(T) + u') = \tau(T) + u + Au'.
\end{align*}
Therefore, $\xi_H(AA') = \xi_H(A)+A\xi_H(A')$ and $\xi_H \in \Xi_{\ell(H),\tau(T)}$.
This implies that for any $H \in \hcal_{\affine(\reals^n)}^{*}$, $\Psi(H) \in \Sigma$, so $\Psi$ is well-defined.

\paragraph{3. Check that $\Psi$ is injective.}
Pick any $H,H' \in \hcal_{\affine(\reals^n)}^{*}$ and suppose $\Psi(H) = \Psi(H')$, \ie $(\ell(H), \tau(T), \xi_H) = (\ell(H'), \tau(T'), \xi_{H'})$, where $T: = \tra(\reals^n) \cap H$ and $T' := \tra(\reals^n) \cap H'$.
For any $f_{A,u} \in H$, $A = \ell(f_{A,u}) \in \ell(H) = \ell(H')$; thus, there exists some $f_{A,u'} \in H'$.
Let $\bar{\ell}: H/T \to \ell(H)$ and $\bar{\ell}': H'/T' \to \ell(H')$ be the isomorphisms similarly defined as in Theorem~\ref{thm:characterize-linear-part}.
As proved earlier, we have
\begin{align*}
\xi_H(A) &= \tau(\bar{\ell}^{-1}(A)) = \tau(T) + \tau(f_{A,u}) = \tau(T) + u, \\
\xi_{H'}(A) &= \tau(\bar{\ell}'^{-1}(A)) = \tau(T') + \tau(f_{A,u'}) = \tau(T) + u'.
\end{align*}
Therefore, $\tau(T) + u = \tau(T) + u'$.
This implies that $\tau(f_{A,u}) = u \in \tau(T) + u' = \tau(\bar{\ell}'^{-1}(A))$.
So, $f_{A,u} \in \bar{\ell}'^{-1}(A) \subseteq H'$, and $H \subseteq H'$.
By a completely symmetrical process, $H' \subseteq H$.
Therefore, $H = H'$, which implies that $\Psi$ is injective.

\paragraph{4. Check that $\Psi$ is surjective.}
Pick any $(L,V,\xi) \in \Sigma$ and let
\begin{align*}
H := \{f_{A,u} \in \affine(\reals^n) \mid A \in L, u \in \xi(A)\}.
\end{align*}
We first show that $H \leq \affine(\reals^n)$ by a subgroup test. It is clear $H \subseteq \affine(\reals^n)$. The identity matrix $I \in L$ and $\bm{0} \in V = \xi(I)$, so the identity transformation $\id = f_{I,\bm{0}} \in H$. For any $f_{A,u}, f_{A',u'} \in H$, we have $A,A' \in L$ and $u \in \xi(A), u' \in \xi(A')$, which respectively implies that $AA' \in L$ and $u+Au' \in \xi(A)+A\xi(A') = \xi(AA')$. So, $f_{A,u} \circ f_{A',u'} = f_{AA', u+Au'} \in H$.
For any $f_{A,u} \in H$, we have $A \in L$ and $u \in \xi(A)$, which respectively implies that $A^{-1}\in L$ and $-A^{-1}u \in -A^{-1}\xi(A) = \xi(A^{-1})$. So, $f_{A,u}^{-1} = f_{A^{-1}, -A^{-1}u} \in H$.
Therefore, $H \leq \affine(\reals^n)$.
Now we show that $\Psi(H) = (\ell(H),~ \tau(T),~ \xi_H) = (L,V,\xi)$.
First, for any $A \in \ell(H)$, there exists an $f_{A,u} \in H$, so $A \in L$ which implies that $\ell(H) \subseteq L$.
Conversely, for any $A \in L$, $\xi(A)$ is a coset in $\reals^n/V$, so $\xi(A) \neq \emptyset$. Pick any $u \in \xi(A)$, then $f_{A,u} \in H$, so $A = \ell(f_{A,u}) \in \ell(H)$ which implies $L \subseteq \ell(H)$. Combining both directions yields $\ell(H) = L$.
Second, note that $T = \tra(\reals^n) \cap H = \{f_{I,u} \in \affine(\reals^n) \mid u \in \xi(I) = V\}$, so $\tau(T) = \{u \mid u \in V\} = V$.
Third, note that $\xi_H: \ell(H) \to \reals^n/\tau(T)$ and $\xi: L \to \reals^n/V$. We have shown that $\ell(H) = L$ and $\tau(T) = V$, so $\xi_H$ and $\xi$ have the same domain and codomain. Further, for any $A \in L$, $\xi_H(A) = \tau(\bar{\ell}^{-1}(A)) = \tau(\{f_{A,u} \in \affine(\reals^n) \mid u \in \xi(A)\}) = \{u \mid u \in \xi(A)\} = \xi(A)$. So, $\xi_H = \xi$. Now we have $\Psi(H) = (L,V,\xi)$. Therefore, $\Psi$ is surjective.
\end{proof}

\subsection{Theorem~\ref{thm:fy-fxyy}}
\label{app:fy-fxyy}
\begin{proof}
Pick any $f' \in \transf(Y)$, and let $f: X \to X$ be the function given by
\begin{align*}
f(x) = \begin{cases}
f'(x), &  x \in Y; \\
x, &  x \in X\backslash Y.
\end{cases}
\end{align*}
Then it is clear that $f(Y) = f'(Y) = Y$ and $f(X\backslash Y) = X\backslash Y$.
For any $x,x'\in X$ and $f(x)=f(x')$: if $f(x) \in X\backslash Y$ then $x=x'$; otherwise $x,x'\in Y$ and $f'(x)=f'(x')$ which yields $x=x'$ since $f'$ is injective. This implies that $f$ is injective. $f$ is also surjective, since $f(X) = f(Y\cup (X\backslash Y)) = f(Y)\cup f(X\backslash Y) = Y \cup (X\backslash Y) = X$. So $f \in \transf(X)$. Further, the fact that $f(Y) = f'(Y) = Y$ implies that $f \in \transf(X)_Y$ and $f' = f|_Y \in \rstab{\transf}{X}{Y}$. Therefore, $\transf(Y) \subseteq \rstab{\transf}{X}{Y}$.
Conversely, pick any $f|_Y \in \rstab{\transf}{X}{Y}$. $f|_Y$ is injective since $f \in \transf(X)_Y \subseteq \transf(X)$ is injective; $f|_Y$ is surjective since $f|_Y(Y) = f(Y) = Y$.
So $f|_Y \in \transf(Y)$.
This implies that $\rstab{\transf}{X}{Y} \subseteq \transf(Y)$.
\end{proof}

\subsection{Theorem~\ref{thm:trz-trrzz}}
\label{app:trz-trrzz}
\begin{proof}
Pick any $t'_u \in \tra(\integers^n)$, then by definition, $u \in \integers^n$, and $t'_u(x) = x + u$, for any $x \in \integers^n$.
Let $t: \reals^n \to \reals^n$ be the function given by $t(x) = x + u$. Since $u \in \integers^n$, then $u \in \reals^n$, so $t \in \tra(\reals^n)$.
Further, note that $t(\integers^n) = \integers^n$; therefore, $t \in \tra(\reals^n)_{\integers^n}$.
It follows that $t'_u = t|_{\integers^n} \in \rstab{\tra}{\reals^n}{\integers^n}$, so $\tra(\integers^n) \subseteq \rstab{\tra}{\reals^n}{\integers^n}$.

Pick any $t' \in \rstab{\tra}{\reals^n}{\integers^n}$, then by definition, there exists a $t_u \in \tra(\reals^n)$ where $u \in \reals^n$ such that $t_u(\integers^n) = \integers^n$ and $t' = t_u|_{\integers^n}$, \ie $t'(x) = x + u$, for any $x \in \integers^n$. The condition $t_u(\integers^n) = \integers^n$ implies in particular $t_u(\bm{0}) = u \in \integers^n$. It follows that $t' \in \tra(\integers^n)$, so $\rstab{\tra}{\reals^n}{\integers^n} \subseteq {\tra}(\integers^n)$.

Pick any $r'_A \in \rot(\integers^n)$, then by definition, $A \in \orthmat_n(\integers)$, and $r'_A(x) = Ax$, for any $x \in \integers^n$. Let $r: \reals^n \to \reals^n$ be the function given by $r(x) = Ax$. Since $A \in \orthmat_n(\integers)$, then $A \in \orthmat_n(\reals)$, so $r \in \rot(\reals^n)$. Further, note that $r(\integers^n) = \integers^n$; therefore, $r \in \rot(\reals^n)_{\integers^n}$. It follows that $r'_A = r|_{\integers^n} \in \rstab{\rot}{\reals^n}{\integers^n}$, so $\rot(\integers^n) \subseteq \rstab{\rot}{\reals^n}{\integers^n}$.

Pick any $r' \in \rstab{\rot}{\reals^n}{\integers^n}$, then by definition, there exists a $r_A \in \rot(\reals^n)$ where $A \in \orthmat_n(\reals)$ such that $r_A(\integers^n) = \integers^n$ and $r' = r_A|_{\integers^n}$, \ie $r'(x) = Ax$, for any $x \in \integers^n$. The condition $r_A(\integers^n) = \integers^n$ implies in particular $A\bm{e_i} \in \integers^n$ for all $i$, \ie the columns of $A$ are from $\integers^n$. So $A \in \orthmat_n(\integers)$. It follows that $r' \in \rot(\integers^n)$, so $\rstab{\rot}{\reals^n}{\integers^n} \subseteq \rot(\integers^n)$.
\end{proof}

\subsection{Theorem~\ref{thm:isoz-isorzz}}
\label{app:isoz-isorzz}
\begin{proof}
Pick any $h' \in \rstab{\iso}{\reals^n}{\integers^n}$, then by definition, there exists an $h \in \iso(\reals^n)$ such that $h(\integers^n) = \integers^n$ and $h' = h|_{\integers^n}$. For any $x,y \in \integers^n$,
\begin{align*}
\dist(h'(x),h'(y)) = \dist(h|_{\integers^n}(x),h|_{\integers^n}(y))= \dist(h(x),h(y)) = \dist(x,y).
\end{align*}
This implies that $h' \in \iso(\integers^n)$. So, $\rstab{\iso}{\reals^n}{\integers^n} \subseteq \iso(\integers^n)$.

Conversely, pick any $h' \in \iso(\integers^n)$ and let $h'_0 = h'-h'(\bm{0})$. Note that $h'_0 \in \iso(\integers^n)$ and $h'_0(\bm{0}) = \bm{0}$. This implies that $\|h'_0(x)\|_2 = \dist(h'_0(x),h'_0(\bm{0})) = \dist(x,\bm{0}) = \|x\|_2$, for any $x \in \integers^n$. Further, for any $x,y \in \integers^n$, expanding the distance-preserving equation $\|h'_0(x)-h'_0(y)\|_2^2 = \|x-y\|_2^2$ and cancelling equal terms (\ie $\|h'_0(x)\|_2^2 = \|x\|_2^2$ and $\|h'_0(y)\|_2^2 = \|y\|_2^2$) yields
\begin{align*}
\langle h'_0(x), h'_0(y) \rangle = \langle x, y \rangle \quad \mbox{ for all } x,y \in \integers^n.
\end{align*}
Now let $h: \reals^n \to \reals^n$ be the function given by $h(x) = Ax + u$, where $A = [h'_0(\bm{e_1}),\cdots, h'_0(\bm{e_n})] \in \integers^{n\times n}$ and $u = h'(\bm{0}) \in \integers^n$.
Moreover, $\langle h'_0(\bm{e_i}), h'_0(\bm{e_j}) \rangle = \langle \bm{e_i}, \bm{e_j} \rangle = \delta_{ij}$.
So, $A$ is orthogonal, \ie $A \in \orthmat_n(\integers) \subseteq \orthmat_n(\reals)$. This implies $h \in \iso(\reals^n)$. We claim $h(\integers^n) = \integers^n$ and $h' = h|_{\integers^n}$.

To see $h(\integers^n) = \integers^n$, first pick any $x \in h(\integers^n)$, then there exists $y \in \integers^n$ such that $x = h(y) = Ay+u$. Since $A \in \integers^{n\times n}$ and $u \in \integers^n$, $x \in \integers^n$ which implies $h(\integers^n) \subseteq \integers^n$. Conversely, pick any $x \in \integers^n$. Let $y = A^\top(x-u) $, then $y \in \integers^n$ and $h(y) = Ay+u = x$. So $x \in h(\integers^n)$ which implies $\integers^n \subseteq h(\integers^n)$.

To see $h' = h|_{\integers^n}$, pick any $x \in \integers^n$, then $\langle h'_0(\bm{e_i}), h'_0(x) \rangle = \langle \bm{e_i}, x \rangle = \langle A\bm{e_i}, Ax \rangle = \langle h'_0(\bm{e_i}),Ax \rangle$, for all $i  = 1, \ldots, n$.
So, $\langle h'_0(\bm{e_i}), h'_0(x)-Ax \rangle = 0$, for all $i = 1, \ldots, n$, that is
\begin{align*}
A^\top(h'_0(x)-Ax) = \bm{0}.
\end{align*}
Multiplying both sides by $A$ yields $h'_0(x) = Ax$, for all $x \in \integers^n$. Hence,
\begin{align*}
h(x) = Ax + u = h'_0(x) + h'(\bm{0}) = h'(x) \quad \mbox{ for all } x \in \integers^n.
\end{align*}
That is, $h' = h|_{\integers^n}$. It follows that $h' \in \rstab{\iso}{\reals^n}{\integers^n}$. So, $\iso(\integers^n) \subseteq \rstab{\iso}{\reals^n}{\integers^n}$.
\end{proof}

\subsection{Theorem~\ref{thm:minimal-generating-set}}
\label{app:minimal-generating-set}
\begin{proof}
Suppose for any $s \in S$, $s \notin \langle S \backslash \{s\} \rangle$. However, $s \in \langle S \rangle$; so, $\langle S \backslash \{s\} \rangle \neq \langle S \rangle$. By definition, $S$ is a minimal generating set. On the other hand, suppose there exists an $s \in S$ such that $s \in \langle S \backslash \{s\} \rangle$, \ie $s = s_k \circ \cdots \circ s_1$ for some $k$ where $s_k, \ldots, s_1 \in (S \backslash \{s\})\cup (S \backslash \{s\})^{-1}$. Pick any $s' \in \langle S \rangle$, $s' = s'_{k'} \circ \cdots \circ s'_1$ for some $k'$ where $s'_{k'}, \ldots, s'_1 \in S \cup S^{-1}$. For any $i \in \{1, \ldots, k'\}$, if $s'_i = s$, replace it with $s_k \circ \cdots \circ s_1$; if $s'_i = s^{-1}$, replace it with $s_1^{-1} \circ \cdots \circ s_k^{-1}$; otherwise $s'_i \in (S \backslash \{s\})\cup (S \backslash \{s\})^{-1}$, do nothing. This results in an expression of $s'$ as the composition of finitely many elements in $(S \backslash \{s\})\cup (S \backslash \{s\})^{-1}$, \ie $s' \in \langle S \backslash \{s\} \rangle$. So, $\langle S \rangle \subseteq \langle S \backslash \{s\} \rangle$. It is trivial to see that $\langle S \backslash \{s\} \rangle \subseteq \langle S \rangle$ since $S\backslash \{s\} \subseteq S$. Therefore, $\langle S \backslash \{s\} \rangle = \langle S \rangle$. By definition, $S$ is not a minimal generating set.
\end{proof}

\end{document}